\documentclass{article}

\usepackage{preprint,times}

\usepackage[utf8]{inputenc}
\usepackage[T1]{fontenc}
\usepackage{amsmath,amssymb,amsthm,mathtools}
\usepackage{booktabs,longtable,array}
\usepackage{multirow}
\usepackage{graphicx}
\usepackage{subcaption}
\usepackage{hyperref}
\usepackage{titletoc} 
\usepackage{algorithm,algorithmic} 
\usepackage{url}
\usepackage{xcolor}
\usepackage{colortbl} 

\newtheorem{theorem}{Theorem}
\newtheorem{lemma}{Lemma}
\newtheorem{proposition}{Proposition}
\newtheorem{corollary}{Corollary}

\newtheorem{assumption}{Assumption}
\theoremstyle{remark}

\newtheorem{example}{Example}

\newcommand{\cM}{\mathcal{M}}

\newcommand{\E}{\mathbb{E}}
\newcommand{\Prob}{\mathbb{P}}
\newcommand{\ind}{\mathbf{1}}

\title{CopRoute: Dependence-Aware Conformal Routing for Language Models}

\author{Siva Rajesh Kasa\thanks{Equal contribution.}\hspace{0.55em}\thanks{This work does not relate to the author's position at Amazon.}\hspace{0.55em}\thanks{Corresponding author.} \\
\texttt{kasa@u.nus.edu}
\And
Santhosh Kumar Kasa\footnotemark[1]\hspace{0.55em}\footnotemark[2]
\And
Karan Gupta
}

\preprintauthors 

\begin{document}
\maketitle

\begin{abstract}

Set-based language model routing aims to increase the coverage of the event where at least one model answers a query correctly while controlling the number of inference calls made. Existing risk-controlled routing policies bound the all-miss risk (the probability that every model in the selected set fails) by calibrating a single conformal threshold and applying it to a nested family of model sets. Because each model is scored in isolation, these policies ignore the dependence structure of model failures. Models that share training data or architecture tend to fail on the same queries, so two individually strong models can be jointly redundant while a weaker but complementary model covers the queries that the others miss. To account for this dependence, which existing risk-controlled policies ignore, we propose CopRoute, which orders models by how many new queries each one answers correctly beyond those already selected, decides per query how far down that order to go, and then picks the smallest cutoff that still meets the target risk. We show that CopRoute's ordering attains, on the design data, the best guarantee any efficient fixed-ordering algorithm can give for the average number of calls to the first correct model, and that a separate calibration step certifies the all-miss risk of the deployed sets. On RouterBench and RouterEval, CopRoute attains the fewest certified calls on nine of ten datasets at a target risk of 0.05 and on eight of ten at 0.10, making up to 79 percent fewer calls (median 24 percent) than the state-of-the-art risk-controlled policy at the same certified all-miss risk.

\end{abstract}

\section{Introduction}
\label{sec:intro}

A deployed language-model (LM) system typically consists of a pool of models that differ in size, task expertise, architecture, and other attributes \citep{hu2024routerbench,huang2025routereval,varangotreille2025survey}. A given model in this pool is usually right on some queries and wrong on others, and no single model dominates across all tasks \citep{srivatsa2024routing,chen2024routerdc,zhuang2025embedllm}. An LM system aims to obtain the right answer for a given input with the minimum number of inference calls \citep{chen2023frugalgpt,ong2024routellm,ding2024hybridllm,ding2025bestroute}. This goal has been posed as a statistical optimization problem based on the risk-controlled routing framework \citep{racer2026,su2025cprouter,angelopoulos2022crc}. The risk-controlled routing objective is to minimize the number of inference calls across the pool of models, and the risk is the probability of not obtaining at least one correct answer from the called models.

Existing methods for this problem fall into two categories. The first category does not use the dependence or co-failure information among the candidate models. Single-model routers choose one model from estimates of its quality and cost \citep{chen2023frugalgpt,ong2024routellm,ding2024hybridllm,chen2024routerdc,feng2025graphrouter,li2025llmbandit,wang2025barp}, while cascades move through models until a stopping rule is met \citep{dohan2022cascades,gupta2024cascades,ding2025bestroute}. RACER instead returns a set with a statistical risk guarantee \citep{racer2026}. It applies one calibrated threshold to per-model scores, and relaxing this threshold produces larger nested model sets. RACER controls the probability that every selected model is wrong, but it scores each candidate in isolation, without accounting for the candidate models already in the selected set. Scoring in isolation ignores that LM failures are correlated; models trained on similar data or built from similar architectures may fail on the same queries \citep{goel2025think,chen2026cofailure}. We define two models as complementary when one succeeds on queries that the other fails to answer. A routing policy that ignores this correlation spends calls on models that cover no queries beyond those already selected, while a weaker but complementary model could have correctly answered the queries the set still misses. The second category uses the dependence between model failures or model diversity when forming sets. Classical ensemble selection and diversity measures favor models with different error patterns \citep{caruana2004ensemble,giacinto2001ensembles,kuncheva2003diversity}. Recent methods use the expected quality of the best answer or determinantal point processes to form complementary LM sets \citep{dekoninck2025unified,flexrouter2026}. These methods improve their coverage, defined as the probability that at least one selected model is correct, but they fall short of risk-controlled routing in three respects. First, they carry no distribution-free guarantee on the all-miss risk. They operate at a fixed call budget or with a fixed preference weight, and can neither accept a target risk level nor signal that one is unattainable. Second, the expected-best-answer objective of \citet{dekoninck2025unified} is evaluated from per-model estimates alone, and diversity measures and determinantal point processes score a set through pairwise model similarities, so co-failures among three or more models enter only through their pairwise projections. Third, the learned variants train query encoders and per-model embeddings in place of the router a system already runs \citep{flexrouter2026}, so changing the model pool requires retraining. Existing work therefore leaves a gap between policies that certify the all-miss risk but ignore dependence and methods that use dependence but carry no certificate. Appendix~\ref{app:related-work} has more details about both lines of work.

We improve the risk-controlled routing framework by incorporating a dependence-aware routing rule. We propose \texttt{CopRoute}, short for \emph{COoPerative ROUTEr}, a family of drop-in dependence-aware conformal routing policies that reuse the router a system already runs. Unlike earlier risk-controlled policies that score one candidate model at a time, CopRoute builds the selected set sequentially and tracks the estimated probability that every model in that set fails. CopRoute first orders the candidate models by how many design queries each one answers that the models before it missed; for each new query it then decides, from the router's predicted success probabilities, how many models along this order to call; and finally it calibrates the smallest cutoff on this order that meets the target risk. 
On ten datasets from RouterBench \citep{hu2024routerbench} and RouterEval \citep{huang2025routereval} with pools of 11 to 500 models, CopRoute attains the fewest certified calls on nine of ten datasets at a target risk of 0.05 and eight of ten at 0.10, calling a median 24 percent and up to 79 percent fewer models than RACER-G, and fewer than FlexRouter and BaRP in 45 of 48 pool comparisons.

We summarize our contributions as follows. (i) We show that the all-miss probability of a model set depends jointly on the individual model accuracies and on the dependence among model failures. Under conditional independence, scoring models in isolation is optimal at every set size, while under dependence the best set need not contain the individually best models. (ii) We introduce CopRoute and establish the three guarantees above for its model ordering, its calibrated cutoff, and its finite-sample correction. The certificate does not depend on the estimator, so any dependence estimator in the family, selected on validation before final calibration, can produce the order without weakening the guarantee. (iii) We evaluate CopRoute against RACER, FlexRouter, BaRP, and independence controls on ten datasets from RouterBench and RouterEval. In addition to the call reductions above, CopRoute lowers end-to-end inference cost by 46.6 percent relative to RACER-G at equal certified risk without lowering final-answer accuracy, and its certified call count changes least among the compared policies when the router is swapped or degraded.

\section{Preliminaries on the \(\alpha\)-VOR problem and notation}
\label{sec:setup}

Let \(\cM=\{1,\ldots,m\}\) be a pool of \(m\) language models. We write \(X\) for a random query drawn from the deployment distribution and \(x\) for one realized query. For each model \(j\in\cM\), let
$Y_j=\ind\{\text{model }j\text{ answers }X\text{ correctly}\}.$
Thus \(Y_j=1\) means that model \(j\) is correct, and \(Y_j=0\) means that it is wrong. The set
$
G(X)=\{j\in\cM:Y_j=1\}
$
contains all candidate models that answer \(X\) correctly. A routing policy, denoted by \(C\), takes a query as input and returns a set of models to call. Hence \(C(x)\subseteq\cM\), and \(|C(x)|\) is the number of inference calls made for query \(x\).

\citet{racer2026} pose the \(\alpha\)-Valid Optimal Routing (\(\alpha\)-VOR) problem as follows. For a user-specified risk target \(\alpha\in(0,1)\), the population problem is
\begin{equation}
\min_C\ \E\bigl[|C(X)|\bigr]
\quad\text{subject to}\quad
\Prob\bigl(C(X)\cap G(X)=\varnothing\bigr)\leq\alpha .
\label{eq:avor}
\end{equation}
In \eqref{eq:avor}, the subscript \(C\) under the minimum states that the optimization ranges over routing policies. This optimization does not choose a model set after the correct models are revealed. The expectation is the average number of called models for a new query \(X\). The constraint bounds the probability that the called set contains no correct model. Both quantities are taken over the deployment distribution of \(X\) and, if the routing policy is randomized, over that randomization. We call the event \(C(X)\cap G(X)=\varnothing\) an \emph{all-miss event}. The constraint is attainable only when the full pool itself meets the target, \(\Prob(G(X)=\varnothing)\leq\alpha\); when the data cannot certify a chosen target, the calibration rule below returns \texttt{NON-CERTIFIED}.

\textbf{All-miss probability and coverage for one query.}
Let \(S\subseteq\cM\) denote any proposed set of models. After fixing a query value \(x\), define
\(
q_x(S)=\Prob\bigl(Y_j=0\text{ for every }j\in S\mid X=x\bigr),
\quad
c_x(S)=1-q_x(S).
\)
Here \(q_x(S)\) is the conditional probability that every model in \(S\) is wrong, while \(c_x(S)\) is the conditional probability that at least one model in \(S\) is correct. We call \(c_x(S)\) the \emph{coverage} of \(S\) on query \(x\). 

\textbf{Ordering and prefixes.}
An ordering rule \(\pi\) takes a query \(x\) as input and returns a permutation
$
\pi_x=(\pi_{x,1},\ldots,\pi_{x,m})
$
of the model indices in \(\cM\). The model \(\pi_{x,1}\) is ranked first, \(\pi_{x,2}\) second, and so on. A fixed ordering ignores \(x\) and returns the same permutation for every query; a query-specific ordering may return a different permutation for each \(x\). For \(k\in\{0,1,\ldots,m\}\), the depth-\(k\) prefix is
\(
P_k^\pi(x)=\{\pi_{x,1},\ldots,\pi_{x,k}\},
\quad P_0^\pi(x)=\varnothing.
\)
Thus \(k\) is both the prefix depth and the number of models in that prefix; the prefix is a set, not an ordered list, and the permutation \(\pi_x\) determines which candidate models enter it.

\textbf{Sizing and entry scores.}
The ordering rule determines which candidate models are considered first, and a sizing rule determines how many of them to call for a particular query. Formally, it takes a query \(x\) and a depth \(k\), then assigns a residual \(r_k(x)\in[0,1]\) to the prefix \(P_k^\pi(x)\), with \(r_0(x)=1\). The residual estimates the probability that all \(k\) candidate models in the prefix are wrong. We require \(r_k(x)\) to be nonincreasing in \(k\), because adding another candidate model cannot increase the estimated all-miss probability. The entry score of the model at position \(k\) is
\begin{equation}
s_\pi(x,\pi_{x,k})=1-r_{k-1}(x).
\label{eq:order-size-score}
\end{equation}
The entry score in \eqref{eq:order-size-score} is the estimated coverage before the \(k\)-th candidate model is added. Since the residual is nonincreasing in depth, the entry scores are nondecreasing in depth, and because \(r_0(x)=1\) the first ordered model has entry score zero, so every certified cutoff calls at least one model. Section~\ref{sec:method} defines the specific ordering and sizing rules used by CopRoute.

\textbf{Cutoffs and nested model sets.}
For a cutoff \(\lambda\), let
$
C_\lambda(x)=\{j\in\cM:s(x,j)\leq\lambda\}.$
The set \(C_\lambda(x)\) uses the \emph{sublevel convention}; it includes every candidate model whose score is at most \(\lambda\), and the nonstrict inequality means that models with the same score enter together. These sets form a nested family. Specifically, if \(\lambda\leq\lambda'\), then
$
C_\lambda(x)\subseteq C_{\lambda'}(x)
$
for every query \(x\). The family \(\{C_\lambda(x)\}_\lambda\) is therefore a nested family of sets. Increasing the cutoff can add candidate models but cannot remove a candidate model that was already selected. For the entry scores above, each \(C_\lambda(x)\) is a prefix of \(\pi_x\), apart from an entire tied block entering at once.

\textbf{Calibration loss and empirical risk.}
Suppose the final-calibration sample contains \(n\) labeled queries \((X_i,G_i)\), where \(i\in\{1,\ldots,n\}\) and \(G_i\) is the set of models correct on \(X_i\). For a candidate cutoff \(\lambda\), define
$
L_i(\lambda)=\ind\{C_\lambda(X_i)\cap G_i=\varnothing\}.
$
Thus \(L_i(\lambda)=1\) when the selected set misses every correct model on calibration query \(i\), and it is zero otherwise. The empirical all-miss risk is
\(
\widehat R_n(\lambda)=\frac{1}{n}\sum_{i=1}^nL_i(\lambda).
\)
The hat indicates that \(\widehat R_n\) is computed from the \(n\) calibration queries rather than from the unknown deployment distribution. The calibrated cutoff is
\begin{equation}
\widehat\lambda
=\min\left\{\lambda\in\mathbb{R}:
\frac{n}{n+1}\widehat R_n(\lambda)+\frac{1}{n+1}\leq\alpha
\right\}.
\label{eq:crc}
\end{equation}
The expression inside braces is the set of cutoffs that pass the finite-sample calibration rule. The outer minimum chooses the smallest passing cutoff. Because the family is nested, this cutoff selects no more models than any other passing cutoff in the same family. If no cutoff passes, the procedure returns \texttt{NON-CERTIFIED} and deploys no routing policy under this certificate. Theorem~\ref{thm:frozen-feasible-crc} states the distribution-free guarantee for this construction, and Corollary~\ref{cor:minimal-cutoff} carries it to the calibrated score cutoff. \citet{racer2026} additionally use a virtual null model as an abstention mechanism (Appendix~\ref{app:protocol}), and using the per-model router score \(1-\widehat p_j(x)\) in the construction above reduces it to the inverse-probability (RACER-P) family of \citet{racer2026} over the real model pool.

\section{Dependence-aware routing}
\label{sec:method}


CopRoute makes an \textit{ordering} decision, which fixes the order in which models are considered, and a \textit{stopping} decision, which fixes for a given query how far down that order to call. 
While many orders and stopping rules are possible in theory, we present one default policy and one ablation of its stopping rule, and the last paragraph of this section explains this choice. 

\textbf{Greedy tally ordering.}
Let \(D\in\{0,1\}^{n_d\times m}\) be the design correctness matrix, where \(D_{ij}=1\) when model \(j\) answers design query \(i\) correctly, and let \(\bar p_j^D=n_d^{-1}\sum_iD_{ij}\). For a model set \(S\), let \(\widehat q_D(S)=n_d^{-1}\sum_i\prod_{j\in S}(1-D_{ij})\) denote its design all-miss frequency. Starting with no selected models, we append one candidate model at a time, each time choosing the model that minimizes this frequency. We denote the resulting \textit{fixed-order tally} by \(\pi^T=(\pi_1^T,\ldots,\pi_m^T)\), where \(T\) stands for tally, and write \(P_k^{\pi^T}=\{\pi_1^T,\ldots,\pi_k^T\}\). Thus
\begin{equation}
\pi_k^T\in\arg\min_{j\notin P_{k-1}^{\pi^T}}
\widehat q_D\!\left(P_{k-1}^{\pi^T}\cup\{j\}\right).
\label{eq:fixed-tally-order}
\end{equation}
Since both stopping rules use the fixed-tally order, we drop the superscript \(T\) below and write \(\pi_k\) and \(P_k\) for \(\pi_k^T\) and \(P_k^{\pi^T}\), respectively, whenever it is obvious.

\textbf{Marginal probabilities and joint failure.}
Let \(p_j(x)=\Prob(Y_j=1\mid X=x)\), and let \(\widehat p_j(x)\) be its router estimate. The true all-miss probability of prefix \(P_k\) depends on both the individual miss probabilities and their dependence. Using the correctness subcopula, \(q_x(P_k)=C^0_{P_k,x}(1-p_j(x):j\in P_k)\). If the model probabilities given a query are conditionally independent, this identity reduces to a product of individual miss probabilities, motivating the first stopping rule below. The second rule adjusts this product using observed co-failure on the design set.

\textbf{Two query-specific stopping rules}
Both rules below, $r_k^{\mathrm{ind}}(x)$ and $r_k^{\mathrm{dep}}(x)$, use the fixed order and the router estimates for the current query. The independence-based residual $r_k^{\mathrm{ind}}(x)$ multiplies the estimated miss probabilities. The dependence-based residual $r_k^{\mathrm{dep}}(x)$ additionally uses the design-set dependence ratio \(d_k=\widehat q_D(P_k)\big/\prod_{\ell=1}^{k}(1-\bar p_{\pi_\ell}^{D})\), which compares the observed prefix all-miss frequency with its independence prediction. This ratio is noisy when few design queries miss the whole prefix, so the dependence residual uses a shrunk version \(\widetilde d_k\) controlled by a support threshold \(h\). For \(h>0\) the shrunk ratio keeps the measured value when at least \(h\) design queries miss the prefix and otherwise moves it toward the independence value \(1\), whereas \(h=0\) is defined to apply no correction, so that \(\widetilde d_k=1\) at every depth. This gives
\begin{equation}
r_k^{\mathrm{ind}}(x)=\prod_{\ell=1}^{k}\left(1-\widehat p_{\pi_\ell}(x)\right),\qquad
r_k^{\mathrm{dep}}(x)=\min_{0\leq t\leq k}\operatorname{clip}_{[0,1]}\left(r_t^{\mathrm{ind}}(x)\,\widetilde d_t\right).
\label{eq:ind-dep-stopping}
\end{equation}
A dependence ratio below one indicates complementary failures; a value above one indicates more co-failure than independence predicts. 
The minimum over preceding depths prevents the estimated all-miss probability from increasing when another model is added. We choose \(h\) from the grid \(\{0,5,20,50\}\) on a separate validation split before final calibration, and because the grid includes \(h=0\), the selected value can turn the correction off altogether. For either rule, \(s(x,\pi_k)=1-r_{k-1}(x)\), and final calibration selects one global cutoff using \eqref{eq:crc}. Appendix~\ref{app:implementation} gives the complete construction and validation rule.

\textbf{The CopRoute family.}
The default policy, \textbf{CopRoute-Ind}, uses the independence-based rule (\(h=0\)) and has no hyperparameter. \textbf{CopRoute-Dep} uses the dependence-based rule with \(h\) chosen on validation; because the grid contains \(h=0\), it can reproduce CopRoute-Ind exactly, and on most redraws it does. We report CopRoute-Dep as an ablation of the stopping rule and not as a second main policy. The dependence ratio is estimated from the design queries that miss an entire prefix, which are few at deep prefixes and in large pools, and the resulting estimation noise can make the corrected rule call more models than the uncorrected one (Section~\ref{sec:experiments}).
\label{sec:method-end}

Beyond the rules described above, CopRoute supports further fixed and query-specific dependence models. Its fixed orders can be learned from empirical co-failure counts, failure correlations, rank correlations, latent-variable models, class mixtures, shrinkage estimators, or model-family information. Its query-specific variants combine the router estimates for the current query with either design-level dependence statistics or correctness labels from nearby router-training queries. These orderings are paired with independence-based, dependence-based, or local empirical stopping rules. We use the tally order with the two stopping rules above for the policies reported in the main text, because this order requires only design-set counts and has the fixed-order guarantee developed in Section~\ref{sec:ordering-optimality}. Appendix~\ref{app:implementation} describes the remaining variants.

\begin{algorithm}[t]
\caption{CopRoute}
\label{alg:coproute}
\begin{algorithmic}[1]
\REQUIRE pool \(\cM\); disjoint splits \(\mathcal D_{\mathrm{des}},\mathcal D_{\mathrm{val}},\mathcal D_{\mathrm{cal}}=\{(X_i,G_i)\}_{i=1}^n\); router \(\widehat p_j(\cdot)\); target risk \(\alpha\)
\STATE fit the tally order \(\pi^T\) on \(\mathcal D_{\mathrm{des}}\) by \eqref{eq:fixed-tally-order}
\STATE on \(\mathcal D_{\mathrm{val}}\), select the stopping rule of \eqref{eq:ind-dep-stopping}, \(r^{\mathrm{ind}}\) for the default CopRoute-Ind or \(r^{\mathrm{dep}}\) with support threshold \(h\) for the CopRoute-Dep ablation, and freeze all choices
\STATE score the model at position \(k\) of \(\pi^T\) by \(s(x,\pi^T_k)=1-r_{k-1}(x)\) for every query \(x\)
\STATE on \(\mathcal D_{\mathrm{cal}}\), set \(\widehat\lambda=\min\bigl\{\lambda:\tfrac{n}{n+1}\widehat R_n(\lambda)+\tfrac{1}{n+1}\leq\alpha\bigr\}\) as in \eqref{eq:crc}; \textbf{if} no \(\lambda\) passes, \textbf{return} \texttt{NON-CERTIFIED}
\ENSURE routing policy \(C(x)=\{j\in\cM:s(x,j)\leq\widehat\lambda\}\)
\end{algorithmic}
\end{algorithm}

\section{Theory}
\label{sec:theory}
In the previous sections, we discussed the importance of considering the co-occurrence of model failures. In this section, we establish two guarantees. First, the fixed tally order comes within a factor of four of the best possible fixed order on the design set, and no polynomial-time method can guarantee a smaller factor. Second, once the order and the stopping score are frozen, split calibration keeps the all-miss risk below the target level, and the smallest certified cutoff calls the fewest models among all certified cutoffs. Formal proofs are presented in Appendices~\ref{app:corrected-proofs} and~\ref{app:extended-theory}.

\subsection{Why fixed tally is a near-optimal ordering}
\label{sec:ordering-optimality}

For an ordering \(\pi\), let \(T_i(\pi)\) be the position of the first model that answers design query \(i\) correctly, with \(T_i(\pi)=m\) when the full pool misses that query. Summing the design all-miss estimate over all prefixes of an order gives the average position at which the first correct model appears,
\(
J_D(\pi)
=\sum_{k=0}^{m-1}\widehat q_D\bigl(P_k^\pi\bigr)
=\frac{1}{n_d}\sum_{i=1}^{n_d}T_i(\pi),
\)
and a smaller value means that a correct model appears earlier on average.

\begin{theorem}[Fixed tally matches the best achievable approximation factor]
\label{thm:mssc-tight}
Let \(D\in\{0,1\}^{n_d\times m}\) be the realized design correctness matrix. Run the fixed-tally rule of \eqref{eq:fixed-tally-order} through all \(m\) positions, using the tie rule in Section~\ref{sec:method}, and denote the resulting order by \(\pi^T\). At each step, the fixed-tally rule selects a candidate model that answers the largest number of design queries missed by the current prefix. It is therefore the greedy minimum-sum set-cover ordering, and
\begin{equation}
J_D(\pi^T)\leq 4\min_\pi J_D(\pi).
\label{eq:factor-four}
\end{equation}
Moreover, for every \(\varepsilon>0\), no polynomial-time algorithm that receives an arbitrary correctness matrix and returns a fixed model ordering can guarantee a factor \(4-\varepsilon\) for this objective unless \(\mathrm{P}=\mathrm{NP}\) \citep{feige2004mssc,bansal2023minsum}.
\end{theorem}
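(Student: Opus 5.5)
The proof reduces the ordering problem to min-sum set cover (MSSC) and then invokes the known results of \citet{feige2004mssc}. The plan has three steps.

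\textbf{Step 1: the identities.} Each design query \(i\) with at least one correct model is an element. Each model \(j\) is the set \(\{i: D_{ij}=1\}\). Since \(\prod_{j\in S}(1-D_{ij})\) is the indicator that \(S\) misses query \(i\), we have
\[
n_d\,\widehat q_D(P_k^\pi)=\#\{i : \text{query } i \text{ missed by } P_k^\pi\}.
\]
Summing over \(k=0,\dots,m-1\) counts, for each query, the number of prefixes that miss it. For a coverable query this count is exactly its cover time \(T_i(\pi)\). For a query the whole pool misses, the count is \(m\), which matches the convention \(T_i(\pi)=m\). This proves the stated identity for \(J_D\).

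\textbf{Step 2: greedy equivalence.} The new prefix misses exactly the queries missed by \(P_{k-1}\) and also by \(j\). So minimizing \(\widehat q_D(P_{k-1}\cup\{j\})\) is the same as maximizing the number of currently-uncovered queries that \(j\) answers. This is precisely the greedy MSSC rule, and any tie-break is allowed in Feige--Lov\'asz--Tetali's analysis.

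\textbf{Step 3: transfer the factor 4.} Let \(U\) be the set of queries the full pool misses. Each \(i\in U\) contributes exactly \(m\) to \(n_dJ_D(\pi)\) for every order, so the objective decomposes as
\[
n_dJ_D(\pi)=\mathrm{MSSC}(\pi)+m|U|,
\]
where \(\mathrm{MSSC}(\pi)\) is the MSSC cost of the ordering on the coverable queries. Greedy run on the full instance picks the same models as greedy on the coverable instance, because uncovered queries in \(U\) never change any marginal gain. After every coverable query is covered, the remaining positions have no effect on the cost. Feige et al.\ give \(\mathrm{MSSC}(\pi^T)\le 4\,\mathrm{OPT}\). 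Adding the constant \(m|U|\) to both sides then yields \(J_D(\pi^T)\le 4\min_\pi J_D(\pi)\), since \(m|U|\le 4m|U|\).

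\textbf{Hardness.} Every MSSC instance with sets over a finite ground set in which every element is covered by some set arises as a correctness matrix with \(U=\varnothing\). In that case \(J_D\) coincides with the MSSC objective up to the factor \(1/n_d\). A polynomial-time \((4-\varepsilon)\)-approximation for fixed orderings would therefore approximate MSSC within \(4-\varepsilon\). Feige et al.\ show this is NP-hard.

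\textbf{Main obstacle.} The delicate step is the bookkeeping in Step 3. One must check that the paper's convention \(T_i=m\) and the sum over \(k\le m-1\) line up exactly with the MSSC cover times. One must also confirm that the additive constant only helps the ratio and that ties do not break the greedy guarantee. The core approximation bound itself is imported rather than reproved.
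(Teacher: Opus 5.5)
Your proposal is correct and follows essentially the same route as the paper: the prefix-area identity \(J_D(\pi)=n_d^{-1}\sum_{i\notin U_D}T_i(\pi)+m\,u_D\), and the residual-gain identity showing that each tally step is a greedy minimum-sum set-cover step, with unanswerable rows contributing zero to every gain. From there the paper imports the factor four of Feige, Lov\'asz, and Tetali through the nonnegative constant offset, and proves hardness by encoding an arbitrary instance as a correctness matrix with \(U_D=\varnothing\); your remarks that the greedy analysis tolerates any tie-break and that \(m|U|\le 4m|U|\) are exactly the bookkeeping the paper's substitution step relies on.
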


Theorem~\ref{thm:mssc-tight} concerns one realized design matrix and the complete fixed ordering. The factor-four guarantee does not transfer to population or test-time calls (Appendix~\ref{app:ordering-transfer} quantifies this), and a construction stopped after \(k_0<m\) positions receives the guarantee only for that truncated instance. The tally order also inherits the greedy coverage guarantee that at every depth \(k\), its design coverage is at least \(1-(1-1/k)^k\geq1-1/e\) times the best coverage attainable by any set of \(k\) models, and Appendix~\ref{app:ordering-proof} proves both results.

\subsection{Why the final routing sets are valid and efficient}

For real-model sets \(S,G\subseteq\cM\), define the standard all-miss loss
\(
L^{\mathrm{miss}}(S,G)=\ind\{S\cap G=\varnothing\}.
\)
An empty called set counts as a miss, and the loss has no abstain option, so declining to answer, or passing the query to a human reviewer, is not available as a zero-loss outcome.

\begin{theorem}[Frozen-family split certification]
\label{thm:frozen-feasible-crc}
Before inspecting any final-calibration label, let design and validation data select a finite nested family
\(
C_0(x)\subseteq C_1(x)\subseteq\cdots\subseteq C_K(x)\subseteq\cM.
\)
Condition on those data and every auxiliary random choice used to construct the family. Assume that the same measurable map is applied to every calibration query and one future query. Also assume that these \(n+1\) observations are exchangeable. Define
\(
L_i(k)=L^{\mathrm{miss}}\bigl(C_k(X_i),G_i\bigr),
\qquad
\widehat R_n(k)=\frac{1}{n}\sum_{i=1}^nL_i(k),
\)
and
\(
\mathcal K_{\mathrm{cert}}
=\left\{k:\frac{n}{n+1}\widehat R_n(k)+\frac{1}{n+1}\leq\alpha\right\}.
\)
If this set is empty, return \texttt{NON-CERTIFIED}; otherwise return \(\widehat k=\min\mathcal K_{\mathrm{cert}}\). For \(0\leq\alpha<1\),
\begin{equation}
\Prob\!\left(
\mathcal K_{\mathrm{cert}}\neq\varnothing
\text{ and }
C_{\widehat k}(X_{n+1})\cap G_{n+1}=\varnothing
\right)\leq\alpha.
\label{eq:certified-standard-miss}
\end{equation}
If certification occurs almost surely, then \(\E[L_{n+1}(\widehat k)]\leq\alpha\).
\end{theorem}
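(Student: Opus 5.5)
The plan is to prove Theorem~\ref{thm:frozen-feasible-crc} by the standard conformal risk control argument for a monotone loss, applied conditionally on the design and validation data. First fix those data and all auxiliary randomness. The family \(\{C_k\}_{k=0}^K\) is then a deterministic measurable map, and the \(n+1\) pairs \((X_i,G_i)\) remain exchangeable, so the vector of loss sequences \((L_i(\cdot))_{i=1}^{n+1}\) is exchangeable. Nestedness gives \(C_k\subseteq C_{k+1}\), so each \(L_i(k)=\ind\{C_k(X_i)\cap G_i=\varnothing\}\) is nonincreasing in \(k\) and takes values in \(\{0,1\}\). Once the bound is shown conditionally, integrating over the design and validation data gives \eqref{eq:certified-standard-miss}.

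The central step introduces an \emph{oracle} index that uses all \(n+1\) points,
\[
k^\ast=\min\Bigl\{k:\tfrac{1}{n+1}\textstyle\sum_{i=1}^{n+1}L_i(k)\leq\alpha\Bigr\},
\]
with \(k^\ast=+\infty\) if no index qualifies. On \(\{k=+\infty\}\) we set \(L_{n+1}(+\infty)=1\), but we only use the event below. Because \(L_{n+1}(k)\leq1\), for every \(k\) we have
\[
\tfrac{1}{n+1}\textstyle\sum_{i=1}^{n+1}L_i(k)\leq\tfrac{n}{n+1}\widehat R_n(k)+\tfrac{1}{n+1}.
\]
Hence \(\mathcal K_{\mathrm{cert}}\subseteq\{k:\tfrac{1}{n+1}\sum_{i\le n+1}L_i(k)\leq\alpha\}\). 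So whenever \(\mathcal K_{\mathrm{cert}}\neq\varnothing\), we get \(k^\ast\leq\widehat k<\infty\), and monotonicity yields \(L_{n+1}(\widehat k)\leq L_{n+1}(k^\ast)\). This gives
\[
\Prob\bigl(\mathcal K_{\mathrm{cert}}\neq\varnothing,\ L_{n+1}(\widehat k)=1\bigr)\leq\Prob\bigl(k^\ast<\infty,\ L_{n+1}(k^\ast)=1\bigr)=\E\bigl[L_{n+1}(k^\ast)\ind\{k^\ast<\infty\}\bigr].
\]

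Next I use symmetry. The oracle \(k^\ast\) is a symmetric function of the \(n+1\) loss sequences, so exchangeability gives \(\E[L_{n+1}(k^\ast)\ind\{k^\ast<\infty\}]=\E[L_i(k^\ast)\ind\{k^\ast<\infty\}]\) for every \(i\). Averaging over \(i\) turns the right-hand side into \(\E\bigl[\ind\{k^\ast<\infty\}\tfrac{1}{n+1}\sum_iL_i(k^\ast)\bigr]\), which is at most \(\alpha\) by the definition of \(k^\ast\).

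The step I expect to need the most care is making the symmetry argument rigorous. This involves three things. First, check that \(k^\ast\) depends on the data only through the multiset of loss sequences, which holds because it is defined through a sum over all \(n+1\) points. Second, handle the non-certified event correctly, which the indicator above does; the restriction \(\alpha<1\) is needed only so that certification is not vacuous. Third, confirm that conditioning on the design and validation data preserves exchangeability. For the final claim, if certification occurs almost surely, then \(\E[L_{n+1}(\widehat k)]\) equals the probability in \eqref{eq:certified-standard-miss} and is therefore at most \(\alpha\).
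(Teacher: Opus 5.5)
Your proof is correct, but it follows a different route from the paper's. You use the generic conformal-risk-control argument. You build an oracle index \(k^\ast\) from all \(n+1\) loss sequences. The bound \(\frac{1}{n+1}\sum_{i\le n+1}L_i(k)\le\frac{n}{n+1}\widehat R_n(k)+\frac{1}{n+1}\) then gives \(k^\ast\le\widehat k\) on the certification event. Monotonicity moves the future loss from \(\widehat k\) to \(k^\ast\), and the symmetry of \(k^\ast\) lets you average that loss over all \(n+1\) points.

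The paper instead uses that the loss is binary and the family nested. It reduces each observation to its first-success depth \(D_i=\min\{k:L_i(k)=0\}\), so that \(L_i(k)=\ind\{D_i>k\}\). It then shows that the rule certifies depth \(k\) exactly when at most \(r=\lfloor\alpha(n+1)\rfloor-1\) calibration queries miss. This identifies \(\widehat k\) as the order statistic \(D_{(n-r)}\), with a certificate issued if and only if \(D_{(n-r)}\le K\). A certified miss is then bounded by the event that \(D_{n+1}\) lies among the \(r+1\) largest of \(n+1\) exchangeable depths.

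Your route is more general. It never uses that the loss is \(\{0,1\}\)-valued, so it covers any \([0,1]\)-valued loss that is nonincreasing in \(k\). It also handles the case \(\alpha<1/(n+1)\) without a separate step, and it needs no care about ties among the depths, which the rank argument must address when several \(D_i\) coincide.

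The paper's route gives an explicit description of \(\widehat k\). It reuses that description for the exact rank form (Lemma~\ref{lem:rank-form}), the sharpness of the \(1/(n+1)\) correction (Theorem~\ref{thm:rank-sharpness}), and the certification floor (Corollary~\ref{cor:certification-floor}). You can recover the sharper level \(\lfloor\alpha(n+1)\rfloor/(n+1)\) by noting that \(\sum_{i\le n+1}L_i(k^\ast)\) is an integer, but not the order-statistic identification.

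One small point: your argument never uses \(\alpha<1\), and for \(\alpha\ge1\) the bound is trivial. So that restriction is not actually needed anywhere in your proof.
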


The probability in \eqref{eq:certified-standard-miss} covers the joint event that a certificate is issued and the certified set still misses. It does not bound the miss rate among only those redraws on which a certificate is issued, because conditioning on issuance would select the redraws whose calibration sample was favorable, and no distribution-free bound of this form exists for that conditional quantity. The estimator, hyperparameters, family, tie rules, seeds, and numerical draws must all be fixed before final-calibration labels are inspected. The validation split must therefore select a family before final calibration, and a procedure that calibrates every candidate and then selects among the calibrated policies is not covered. 
\citet{racer2026} augment the pool with a null action whose loss is zero by definition, so their construction controls a different quantity from the all-miss risk here, under which declining to call any model counts as a miss.

\begin{corollary}[Minimum certified cutoff]
\label{cor:minimal-cutoff}
Let a fixed score \(s\) generate the frozen sublevel family \(C_\lambda(x)=\{j:s(x,j)\leq\lambda\}\), and let
\(
\Lambda_{\mathrm{cert}}
=\left\{\lambda:
\frac{n}{n+1}\widehat R_n(\lambda)+\frac{1}{n+1}\leq\alpha
\right\}.
\)
If \(\Lambda_{\mathrm{cert}}\neq\varnothing\), the smallest certified cutoff \(\widehat\lambda=\min\Lambda_{\mathrm{cert}}\) is attained and satisfies
\(
C_{\widehat\lambda}(x)\subseteq C_\lambda(x)
\qquad
\text{for every }\lambda\in\Lambda_{\mathrm{cert}}
\text{ and every }x.
\)
Therefore, among all certified cutoffs in this frozen family, \(\widehat\lambda\) calls the fewest models. The same holds for any cost that never decreases when a model is added to the called set.
\end{corollary}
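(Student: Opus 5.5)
The plan is to derive both claims from the monotone structure of the sublevel family and the finiteness of the calibration sample. The first step is nestedness. For \(\lambda\leq\lambda'\), every model with \(s(x,j)\leq\lambda\) also satisfies \(s(x,j)\leq\lambda'\), so \(C_\lambda(x)\subseteq C_{\lambda'}(x)\) for every \(x\). Consequently each calibration loss \(L_i(\lambda)=\ind\{C_\lambda(X_i)\cap G_i=\varnothing\}\) is nonincreasing in \(\lambda\). It follows that \(\widehat R_n(\lambda)\) and the certification statistic \(\frac{n}{n+1}\widehat R_n(\lambda)+\frac{1}{n+1}\) are nonincreasing as well. Hence \(\Lambda_{\mathrm{cert}}\) is an upward-closed set: a right-unbounded interval of the form \([\lambda^*,\infty)\) or \((\lambda^*,\infty)\).

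The second step, which I expect to be the only genuinely delicate point, shows that the infimum \(\lambda^*\) belongs to \(\Lambda_{\mathrm{cert}}\). Only the values of \(C_\lambda(X_i)\) on the \(n\) calibration queries enter \(\widehat R_n\), and each of these is a step function of \(\lambda\). It changes only at the finitely many values \(\{s(X_i,j): i\leq n,\ j\in\cM\}\) and is right-continuous there, because the nonstrict inequality \(s\leq\lambda\) admits a model exactly at \(\lambda=s(X_i,j)\). Therefore \(\widehat R_n\) is a right-continuous, piecewise-constant, nonincreasing function with finitely many jumps, and its sublevel set \(\{\widehat R_n\leq c\}\) is closed on the left. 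I would argue this directly. Take \(\lambda^*=\inf\Lambda_{\mathrm{cert}}\), which is finite because \(\Lambda_{\mathrm{cert}}\) is nonempty and, since \(\widehat R_n\) is constant below the smallest score value, either the whole real line certifies or \(\lambda^*\) is at least that smallest score. In the first case the minimum is taken over the first constant piece; following the paper's convention, \(\widehat\lambda\) is then the smallest relevant score value, or equivalently one may restrict to the finite grid of score values. In the second case, right-continuity gives \(\widehat R_n(\lambda^*)=\lim_{\lambda\downarrow\lambda^*}\widehat R_n(\lambda)\), and this limit satisfies the certification inequality. So \(\lambda^*\in\Lambda_{\mathrm{cert}}\), and \(\widehat\lambda=\lambda^*\) is attained.

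The final step is immediate from the first two. Every \(\lambda\in\Lambda_{\mathrm{cert}}\) satisfies \(\lambda\geq\widehat\lambda\), so nestedness gives \(C_{\widehat\lambda}(x)\subseteq C_\lambda(x)\) for every \(x\), including test queries. This inclusion holds pointwise, not merely on the calibration sample, because nestedness holds for all \(x\). Taking cardinalities gives \(|C_{\widehat\lambda}(x)|\leq|C_\lambda(x)|\), so \(\widehat\lambda\) calls the fewest models. For any cost \(c(\cdot)\) that is monotone under set inclusion, the same inclusion gives \(c(C_{\widehat\lambda}(x))\leq c(C_\lambda(x))\). Validity of \(\widehat\lambda\) then follows from Theorem~\ref{thm:frozen-feasible-crc}. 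The distinct sets \(C_\lambda\) restricted to the observed scores form a finite nested family \(C_0\subseteq\cdots\subseteq C_K\), and the minimal certified index corresponds exactly to \(\widehat\lambda\).
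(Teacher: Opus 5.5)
Your route is the paper's. Nestedness makes \(\widehat R_n\) nonincreasing, so \(\Lambda_{\mathrm{cert}}\) is an upper set. \(\widehat R_n\) is a right-continuous step function whose jumps lie in the finite set \(\{s(X_i,j)\}\), so the infimum is attained. Finally, \(\lambda\geq\widehat\lambda\) together with nestedness gives the inclusion and the monotone-cost claim.

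The step that needs repair is your ``first case,'' in which the whole real line certifies. You handle it by appealing to a convention the paper does not contain. That patch would not prove the statement anyway, because \(\mathbb{R}\) has no minimum and the corollary asserts that \(\min\Lambda_{\mathrm{cert}}\) exists. The case is in fact impossible. For \(\lambda<\min_{i\leq n,\,j\in\cM}s(X_i,j)\), every calibration set \(C_\lambda(X_i)\) is empty, and an empty called set counts as a miss under \(L^{\mathrm{miss}}\). So \(\widehat R_n(\lambda)=1\), and the certification statistic equals \(\frac{n}{n+1}+\frac{1}{n+1}=1>\alpha\) because \(\alpha<1\). Hence \(\Lambda_{\mathrm{cert}}\) is bounded below by the smallest calibration score, \(\lambda^*\) is finite, and your right-continuity argument alone gives \(\lambda^*\in\Lambda_{\mathrm{cert}}\). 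The paper's proof passes over this point silently, so checking it was the right instinct; it needs this observation, not a convention.

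Your closing remark on validity lies outside the corollary, and as phrased it does not work. A family indexed by the observed calibration scores depends on the calibration inputs. It is therefore not a family frozen before calibration in the sense of Theorem~\ref{thm:frozen-feasible-crc}: conditioning on those inputs breaks exchangeability with the future query. The paper instead gets validity of the score cutoff from the rank form with \(S_i^*=\min_{j\in G_i}s(X_i,j)\). There the rule returns \(\widehat\lambda=S^*_{(n-r)}\), and a certified miss is the rank event \(S^*_{n+1}>S^*_{(n-r)}\).
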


Essentially this means that once the order and the stopping score are frozen, every certified cutoff produces nested sets, the smallest certified cutoff produces the smallest of them for every query, and calibration therefore never trades calls against risk within the family but simply finds its cheapest certified member. The corollary is exact within one frozen family; it is not an optimality claim across different orderings, stopping rules, or policy classes. Appendix~\ref{app:calibration-proof} proves Theorem~\ref{thm:frozen-feasible-crc} and Corollary~\ref{cor:minimal-cutoff}, and also shows that the correction \(1/(n+1)\) cannot be reduced by any deterministic calibration rule of this form, so no risk target below \(1/(n+1)\) is certifiable.

\section{Experiments}
\label{sec:experiments}

We answer four questions, namely whether CopRoute calls fewer models than existing routing policies at a matched certified risk, whether dependence-corrected stopping improves on the independence-based default, whether an ordering fit on one dataset remains useful on another, and how much the results depend on the router that provides the scores.

\textbf{Setup.}
We evaluate on RouterBench \citep{hu2024routerbench}, whose pool of eleven models answers ARC, HellaSwag, MMLU, and WinoGrande, and we do not use its two remaining tasks, since the GSM8K release contains a single distinct prompt so that no prompt-conditioned router can be fitted, and on MBPP even the full pool misses 13.4 percent of queries so that neither target is certifiable. We also evaluate on RouterEval \citep{huang2025routereval}, which records per-query correctness for 3{,}811 models. From RouterEval we use six benchmarks with uniformly subsampled pools of 100, 200, 300, and 500 models; the main table reports the 100-model pool. Throughout, CopRoute without a suffix denotes the default policy CopRoute-Ind, and the four baselines are RACER-G, RACER-P, FlexRouter, and BaRP. A seventh RouterEval benchmark, MATH-intermediate-algebra, certifies for no policy at the targets we study and appears in Appendix~\ref{app:calibrated-results}. All policies use the same KNN router scores, except that FlexRouter \citep{flexrouter2026} and BaRP \citep{wang2025barp} train their own routers, which we retrain with five seeds. Every cell uses 20 redraws with disjoint design, router, validation, final-calibration, and test partitions; all fitting and tuning happen before final calibration, after which the policy is frozen. We report the mean number of called models over certified redraws at each target risk $\alpha$, and we state the certification rate wherever any redraw fails to certify.

\begin{table}[t]
\centering
\caption{Certified mean calls at matched target risk, mean (dispersion) over 20 redraws; the dispersion is the standard error, or the standard deviation across five training seeds for FlexRouter and BaRP. CopRoute is the default policy CopRoute-Ind. The \colorbox{green!18}{best} value in each row is shaded, with exact ties shaded together, and the next distinct value is \underline{underlined}. The last column is the exact-independence control, which orders models by router score alone and is reported but not ranked. RouterBench datasets (first four rows) use the 11-model pool and RouterEval datasets the 100-model pool.}
\label{tab:main-calls}
\scriptsize
\begin{subtable}[t]{0.495\textwidth}
\centering
\caption{Target risk $\alpha=0.05$.}
\label{tab:main-calls-005}
\setlength{\tabcolsep}{2.5pt}
\resizebox{\linewidth}{!}{%
\begin{tabular}{@{}l r rr rr | r@{}}
\toprule
 & & \multicolumn{2}{c}{RACER} & & & control \\
\cmidrule(lr){3-4}\cmidrule(lr){7-7}
dataset & CopRoute & G & P & Flex. & BaRP & Indep. \\
\midrule
ARC & \cellcolor{green!18}1.25 (0.13) & 1.39 (0.17) & 2.44 (0.14) & \underline{1.33 (0.01)} & 1.37 (0.02) & 1.48 (0.25) \\
HellaSwag & \cellcolor{green!18}3.19 (0.09) & \underline{3.87 (0.08)} & 4.25 (0.10) & 4.65 (0.13) & 5.14 (0.40) & 3.95 (0.11) \\
MMLU & \cellcolor{green!18}7.75 (0.27) & 8.66 (0.10) & \underline{8.64 (0.04)} & 9.67 (0.04) & 9.74 (0.08) & 9.45 (0.18) \\
WinoGrande & \cellcolor{green!18}2.40 (0.03) & 5.31 (0.20) & 5.10 (0.21) & \underline{4.56 (0.15)} & 4.62 (0.18) & 4.81 (0.19) \\
BBH-boolean & \cellcolor{green!18}2.63 (0.29) & 4.75 (0.58) & 5.20 (0.46) & \underline{2.70 (0.22)} & 3.01 (0.31) & 4.15 (0.57) \\
BBH-navigate & \cellcolor{green!18}6.42 (3.63) & 31.3 (5.30) & 31.8 (5.15) & 24.3 (1.68) & \underline{22.0 (2.97)} & 17.3 (1.63) \\
GPQA & \cellcolor{green!18}17.2 (2.14) & \underline{21.0 (1.85)} & \underline{21.0 (1.85)} & 26.3 (0.67) & 25.7 (5.28) & 22.6 (2.19) \\
IFEval & \cellcolor{green!18}22.1 (5.97) & \underline{24.7 (5.04)} & 26.3 (5.41) & 30.3 (3.84) & 26.8 (3.29) & 24.3 (5.19) \\
MATH-algebra & 35.1 (14.7) & \underline{34.7 (18.8)} & \cellcolor{green!18}32.0 (15.5) & 45.2 (2.98) & 38.1 (13.1) & 28.1 (12.3) \\
MuSR & \cellcolor{green!18}57.2 (8.76) & \underline{75.0 (4.28)} & 77.0 (4.81) & 86.3 (3.15) & 87.8 (4.01) & 69.3 (7.55) \\
\bottomrule
\end{tabular}}
\end{subtable}
\hfill
\begin{subtable}[t]{0.495\textwidth}
\centering
\caption{Target risk $\alpha=0.10$.}
\label{tab:main-calls-01}
\setlength{\tabcolsep}{2.5pt}
\resizebox{\linewidth}{!}{%
\begin{tabular}{@{}l r rr rr | r@{}}
\toprule
 & & \multicolumn{2}{c}{RACER} & & & control \\
\cmidrule(lr){3-4}\cmidrule(lr){7-7}
dataset & CopRoute & G & P & Flex. & BaRP & Indep. \\
\midrule
ARC & \cellcolor{green!18}1.00 (0.00) & \underline{1.06 (0.02)} & 1.71 (0.08) & \cellcolor{green!18}1.00 (0.00) & \cellcolor{green!18}1.00 (0.00) & 1.00 (0.00) \\
HellaSwag & \underline{1.80 (0.03)} & \cellcolor{green!18}1.77 (0.03) & 2.24 (0.03) & 2.51 (0.08) & 2.61 (0.24) & 1.80 (0.03) \\
MMLU & \cellcolor{green!18}2.99 (0.05) & 5.64 (0.07) & 6.45 (0.04) & \underline{3.89 (0.06)} & 4.06 (0.10) & 3.87 (0.07) \\
WinoGrande & \cellcolor{green!18}1.80 (0.04) & 3.10 (0.15) & 3.00 (0.13) & \underline{2.38 (0.02)} & 2.61 (0.20) & 2.59 (0.13) \\
BBH-boolean & \cellcolor{green!18}1.13 (0.05) & 2.25 (0.18) & 3.24 (0.35) & \underline{1.23 (0.03)} & 1.34 (0.11) & 1.57 (0.18) \\
BBH-navigate & \cellcolor{green!18}1.86 (0.07) & 7.36 (1.20) & 7.61 (1.25) & \underline{6.35 (1.24)} & 6.90 (1.93) & 5.95 (1.00) \\
GPQA & \cellcolor{green!18}9.65 (0.95) & 13.3 (0.93) & 13.3 (0.93) & 13.2 (0.15) & \underline{13.1 (2.46)} & 14.3 (0.99) \\
IFEval & \cellcolor{green!18}2.86 (0.32) & \underline{3.76 (0.48)} & 5.66 (0.53) & 7.31 (0.89) & 5.16 (1.37) & 3.22 (0.44) \\
MATH-algebra & 11.4 (3.74) & \underline{11.0 (3.29)} & \cellcolor{green!18}10.4 (3.02) & 12.1 (1.51) & 17.8 (7.46) & 9.79 (2.81) \\
MuSR & \cellcolor{green!18}12.7 (1.56) & \underline{44.6 (2.81)} & 53.3 (1.67) & 49.7 (5.11) & 55.2 (3.95) & 32.1 (3.16) \\
\bottomrule
\end{tabular}}
\end{subtable}
\end{table}

\textbf{Certified calls at matched risk.}
Table~\ref{tab:main-calls} gives the main comparison. At $\alpha=0.05$ every policy certifies all 20 redraws except on IFEval (19), MATH-algebra (3), and MuSR (8), identically across policies, whereas at $\alpha=0.10$ every cell certifies all 20. CopRoute has the lowest calls on nine of the ten datasets at $\alpha=0.05$ and on eight at $\alpha=0.10$. The improvements are largest on the reasoning benchmarks. At $\alpha=0.10$, MuSR needs 12.7 calls against 44.6 for RACER-G, the closest baseline, and 55.2 for BaRP, and BBH-navigate needs 1.9 against 6.4 for FlexRouter and 7.6 for RACER-P, while MMLU and WinoGrande need roughly half the calls of either RACER variant. The exceptions are MATH-algebra, where RACER-P is better by 3.1 calls at $\alpha=0.05$ and by 1.0 call at $\alpha=0.10$, and HellaSwag at $\alpha=0.10$, where RACER-G is better by 0.03 calls. The exact-independence control orders models by router score alone and stops with the same independence product. It matches CopRoute or calls more models on nine datasets, by up to 19.4 calls more on MuSR, and is better on MATH-algebra by 7.0 calls at $\alpha=0.05$ and 1.6 calls at $\alpha=0.10$, so on that benchmark the design co-failure counts order the pool less well than the query-specific router scores do. Figure~\ref{fig:radar} extends the comparison to the 500-model pool, which the table does not show, and reports relative call efficiency, the per-dataset best calls divided by each policy's calls, so that the winner lies on the rim. Over the 200-, 300-, and 500-model pools (Figure~\ref{fig:radar-pools} in Appendix~\ref{app:calibrated-results}), CopRoute is on the rim on 29 of the 36 axes and FlexRouter on the remaining seven, five of them BBH-boolean or GPQA cells.

\begin{figure}[t]
\centering
\includegraphics[width=\linewidth]{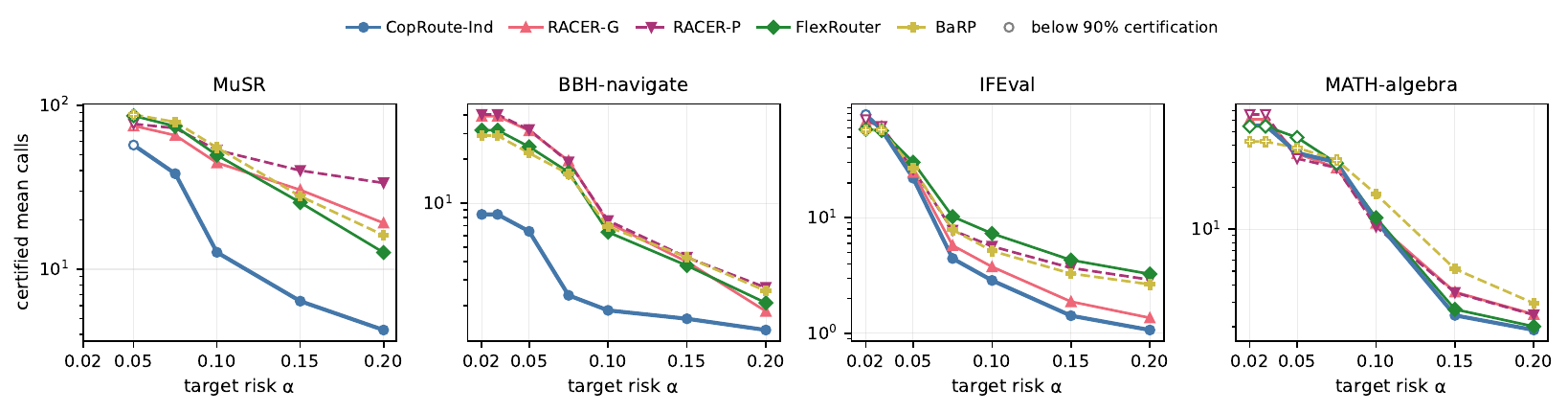}
\caption{Certified mean calls against the target risk, 100-model pool, logarithmic vertical axis; lower is better. Markers are open where fewer than 90 percent of redraws certify, and a missing point means none certifies.}
\label{fig:sweep-calls}
\end{figure}

\textbf{Risk sweep.}
Figure~\ref{fig:sweep-calls} plots certified calls over the full target grid. The CopRoute curve is the lowest across the sweep on MuSR, BBH-navigate, and IFEval, and the advantage widens as the target loosens. The sweep also shows where certification becomes infeasible, since no policy certifies MuSR below $\alpha=0.05$ at this pool and every policy certifies fewer redraws at the tightest targets. Calibration can certify a target only if the full pool misses few enough calibration queries, namely when $(c_M+1)/(n+1)\leq\alpha$ for the full-pool miss count $c_M$ among the $n$ calibration queries. This boundary is therefore set jointly by the calibration-set size and by how often even the whole pool fails, and it is shared by all policies because it depends on neither the order nor the stopping rule.

\textbf{Dependence-corrected stopping.}
Appendix~\ref{app:full-results} (Table~\ref{tab:dep-ablation}) isolates the two components of the policy on the cells of Table~\ref{tab:main-calls}. Replacing the independence-based stopping rule by the dependence-corrected rule of CopRoute-Dep changes nothing on 12 of the 20 cells, because validation selects \(h=0\) (on 10 to 19 of 20 redraws on IFEval and MuSR, the two benchmarks for which the selected values were recorded), and where it does change the result, it lowers calls on MATH-algebra at $\alpha=0.10$ by 1.7 with an interval that includes zero and raises them on IFEval at $\alpha=0.05$ by 5.6, so CopRoute-Ind is the default. Replacing the tally order by the design-marginal order, which ranks models by design accuracy alone, costs up to 21 calls on BBH-navigate and 27 on MuSR, so the gain over the baselines comes from the co-failure order rather than from the stopping correction.

\begin{figure}[t]
\begin{minipage}[t]{0.555\textwidth}
\centering
\includegraphics[width=\linewidth]{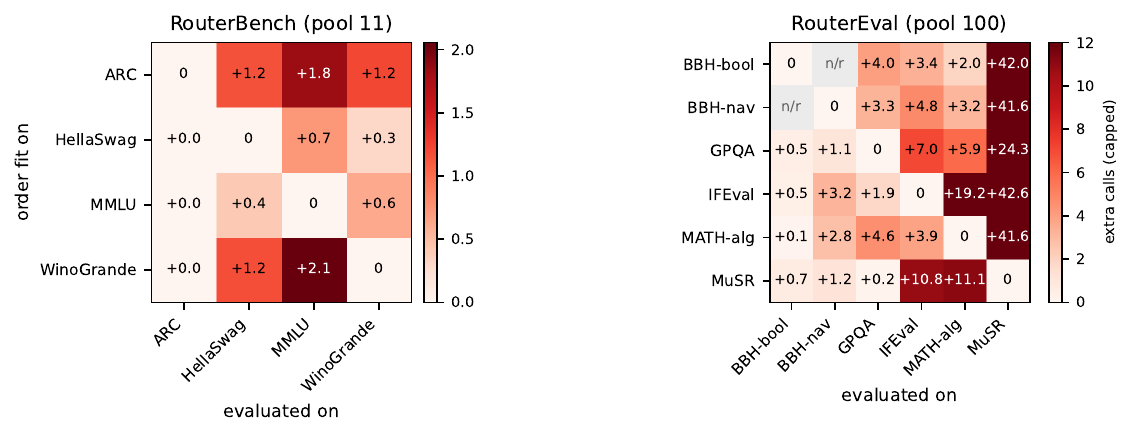}
\caption{Extra certified calls for CopRoute at $\alpha=0.10$ when the ordering is fitted on the row dataset and recalibrated on the column dataset. The color scale is capped at 12 and the annotated values are exact. Transfers between RouterBench and RouterEval are not possible because the two benchmarks have disjoint model pools, and the two pairs within the BBH family, marked n/r, were not run so that no transfer is near in-domain. Appendix~\ref{app:transfer} reports the same map for every policy.}
\label{fig:ood-premium}
\end{minipage}\hfill
\begin{minipage}[t]{0.42\textwidth}
\centering
\includegraphics[width=\linewidth]{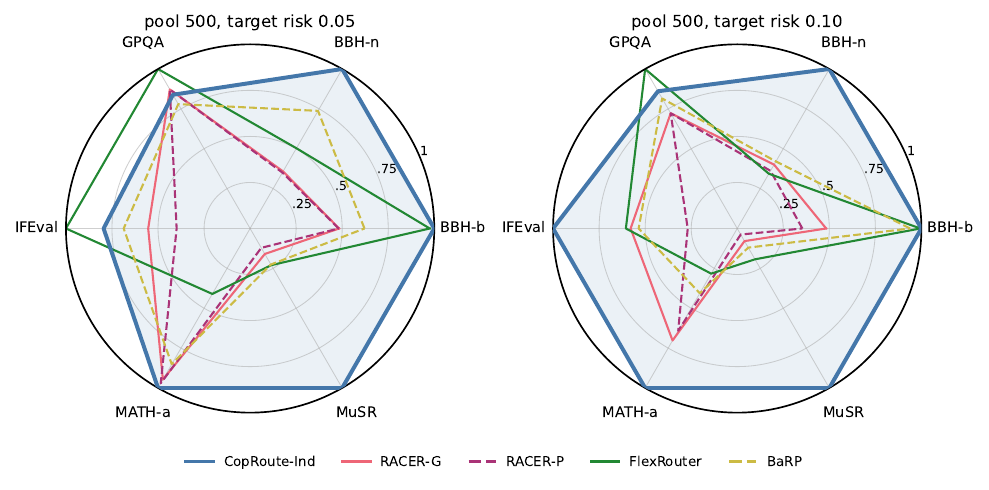}
\caption{Relative call efficiency at the 500-model pool of the six RouterEval datasets, defined as the best certified calls on each axis divided by the policy's certified calls, so that the per-axis winner lies on the rim at 1. Figure~\ref{fig:radar-pools} shows the 200- and 300-model pools.}
\label{fig:radar}
\end{minipage}
\end{figure}

\textbf{Transfer across datasets.}
Figure~\ref{fig:ood-premium} reports the extra certified calls when the ordering is fitted on one dataset and deployed on another. On RouterBench every transfer costs at most 2.1 extra calls, and most RouterEval transfers cost fewer than seven; transfers into MuSR are the most expensive at 24 to 43 extra calls. FlexRouter and BaRP degrade least under this shift because their orderings encode little dataset-specific structure (Appendix~\ref{app:transfer}); CopRoute degrades the most because its ordering is fitted to one dataset's co-failure structure, the property that makes it efficient in-domain. The ranking narrows, and averaged over all out-of-domain pairs CopRoute remains the lowest-call policy on RouterBench at both targets and on RouterEval at $\alpha=0.05$ (33.3 calls against 34.6 to 37.6), while at $\alpha=0.10$ it is 1.1 calls behind FlexRouter. Reusing a source-calibrated cutoff inflates the realized risk for every policy, while recalibrating on the deployment dataset restores the guarantee in every cell. The ordering can therefore be reused across related datasets, but the cutoff must be recalibrated on the deployment distribution.

\textbf{Changing the router.}
We repeat the main comparison with RouterDC \citep{chen2024routerdc} in place of the KNN router. Swapping the router moves CopRoute by at most 0.6 calls on any dataset, with a median absolute change of 0.2 calls at $\alpha=0.10$, while RACER-G and RACER-P move by up to 31 calls with median changes of 2 to 3 calls. On RouterDC's own scores, CopRoute calls fewer models than RACER-G on all ten datasets at $\alpha=0.10$.

\begin{table}[t]
\begin{minipage}[t]{0.485\textwidth}
\centering
\caption{Extra certified calls of each alternative order relative to the tally order under the same stopping rule, weighted by certified redraws over the ten datasets; negative (bold) favors the alternative.}
\label{tab:estimator-family}
\scriptsize
\setlength{\tabcolsep}{4pt}
\resizebox{\linewidth}{!}{%
\begin{tabular}{@{}l rr rr@{}}
\toprule
 & \multicolumn{2}{c}{Ind stopping} & \multicolumn{2}{c}{Dep stopping} \\
\cmidrule(lr){2-3}\cmidrule(lr){4-5}
order & all pools & pool 100 & all pools & pool 100 \\
\midrule
Bernoulli mixture & +0.42 & \textbf{-0.65} & +0.36 & \textbf{-0.39} \\
Student-t copula & +1.77 & \textbf{-0.68} & +1.35 & \textbf{-0.67} \\
Gaussian copula & +1.97 & \textbf{-0.09} & +1.81 & \textbf{-0.10} \\
rank correlation & +2.04 & +0.26 & +1.83 & +0.20 \\
double fault & +1.59 & +0.13 & +1.13 & +0.00 \\
error correlation & +6.25 & +0.98 & +5.50 & +1.23 \\
accuracy-weighted random & +4.28 & +2.54 & +4.04 & +2.58 \\
random control & +7.78 & +4.19 & +7.24 & +4.00 \\
\bottomrule
\end{tabular}}

\end{minipage}\hfill
\begin{minipage}[t]{0.475\textwidth}
\centering
\caption{Cost and final-answer accuracy on RouterBench MMLU at matched certified risk, means over 20 redraws; the \colorbox{green!18}{best} value per column is shaded. Standard errors and uncalibrated references appear in Table~\ref{tab:cost-case-study}.}
\label{tab:cost-main}
\scriptsize
\setlength{\tabcolsep}{2.6pt}
\resizebox{\linewidth}{!}{%
\begin{tabular}{@{}l rrr rrr@{}}
\toprule
 & \multicolumn{3}{c}{$\alpha=0.05$} & \multicolumn{3}{c}{$\alpha=0.10$} \\
\cmidrule(lr){2-4}\cmidrule(lr){5-7}
policy & calls & acc.\ (\%) & USD/1k & calls & acc.\ (\%) & USD/1k \\
\midrule
CopRoute-Ind & \cellcolor{green!18}7.75 & \cellcolor{green!18}80.44 & \cellcolor{green!18}3.41 & \cellcolor{green!18}2.99 & \cellcolor{green!18}80.97 & \cellcolor{green!18}1.90 \\
RACER-G & 8.66 & 80.19 & 4.24 & 5.64 & 80.18 & 3.55 \\
RACER-P & 8.64 & 80.19 & 4.22 & 6.45 & 80.13 & 3.66 \\
exact independence & 9.45 & 80.19 & 4.28 & 3.87 & 80.19 & 2.83 \\
design-marginal order & 9.44 & 80.20 & 4.30 & 3.76 & 80.66 & 2.86 \\
\midrule
call every model & 11.00 & 80.19 & 4.47 & 11.00 & 80.19 & 4.47 \\
best single model (1 call) & 1.00 & 81.39 & 1.43 & 1.00 & 81.39 & 1.43 \\
\bottomrule
\end{tabular}}

\end{minipage}
\end{table}

\textbf{Alternative estimators.}
CopRoute is a family of routing policies built from two interchangeable components. An estimator fitted on the design split produces the model order, and a stopping rule converts that order into a per-query candidate set before the shared calibration step certifies the cutoff. In Algorithm~\ref{alg:coproute}, a family variant changes only step 1, where an alternative estimator fits the order in place of the tally count, or step 2, where validation selects the stopping rule; scoring, calibration, and deployment are unchanged. Table~\ref{tab:estimator-family} compares the alternative orders against the tally default under both stopping rules. No alternative improves on the tally order in aggregate; the closest, a Bernoulli mixture, uses 0.36 to 0.42 more calls, and a seeded random order uses about 7 more. At the 100-model pool the Bernoulli-mixture and Student-t-copula orders use 0.4 to 0.7 fewer calls than the tally order, which indicates that estimators with more parameters become useful when the design set is large relative to the pool (definitions in Appendix~\ref{app:estimators}, complete grids in Appendix~\ref{app:full-results}).

\textbf{Cost and final accuracy.}
The certified guarantee bounds coverage, and Table~\ref{tab:cost-main} therefore also reports final-answer accuracy and cost on RouterBench MMLU, with the answer taken from the called model with the highest router score and each call priced from public per-model prices. At $\alpha=0.10$, CopRoute answers 80.97 percent of queries at USD 1.90 per thousand queries, against 80.18 percent at USD 3.55 for RACER-G, a 46.6 percent cost reduction with a 0.79-point accuracy gain at the same certified risk. Final-answer accuracy depends on the answer selector and the called set, and the two uncalibrated reference rows show that the certificate does not determine it. Calling every model returns the router's top choice at 80.19 percent, whereas the single model with the best design accuracy is both cheaper and more accurate than every calibrated policy, at USD 1.43 and 81.39 percent, but misses 18.6 percent of queries and therefore cannot certify either target. The calibrated policies add the certified coverage guarantee, and among them CopRoute reaches it at the lowest cost. Appendix~\ref{app:cost} reports the standard errors and a cost-weighted variant for pools with heterogeneous prices.

\section{Limitations and Conclusion}
\label{sec:conclusion}

\textbf{Summary.} We pair a joint routing objective with a family of policies that learn which models add coverage beyond those already selected. The correctness-subcopula identity defines the objective, the tally order has a tight approximation guarantee for it, and split calibration certifies the deployed cutoff. At matched certified risk, CopRoute is the lowest-call routing policy on nine of ten datasets at $\alpha=0.05$, on eight of ten at $\alpha=0.10$, and on 29 of the 36 larger-pool cells, and it keeps that lead across the target sweep, across routers, and at pools of up to 500 models. The gain comes from the co-failure order, since replacing it by the design-marginal order costs up to 27 calls, while swapping the router changes CopRoute by at most 0.6 calls and degrading its scores by at most 0.4 calls.

\textbf{Limitations.} The certificate bounds the average all-miss risk of the candidate set over the deployment distribution, which is the quantity a routing layer controls, and per-query guarantees and certificates for the answer returned by an aggregator are separate objects outside the scope of this framework. The fitted order and the calibrated cutoff are specific to the deployment pool and distribution, both are inexpensive to refit, and Appendices~\ref{app:transfer} and~\ref{app:cost} quantify the cost of reusing an order across related datasets or after a pool change. The dependence-corrected stopping rule is estimated from the design queries that miss an entire prefix, which are few at deep prefixes, and it does not improve on the independence-based default, so the open question is a tail estimator that uses more of the design set. Our evaluation uses recorded correctness labels, which permit the 20-redraw protocol across ten datasets and pools of up to 500 models, and it prices calls on one benchmark, while latency and response-dependent aggregation are left to deployment studies. Appendix~\ref{app:limitations} discusses each point in more detail.


\subsection*{AI use statement}
In this work, we used generative AI coding assistants to implement parts of the experiment code and the scripts that regenerate tables and figures from recorded result files; the authors reviewed and tested this code, and every reported number was regenerated from the recorded results. We have not used generative AI tools to develop the theoretical framework, to formulate or prove mathematical claims, to propose hypotheses, to design the experimental methodology, to generate data, or to interpret results. We also used generative AI tools to edit the paper for readability and to format references, and the authors reviewed every edited passage. We take responsibility for the final content of this work, including text, claims, and artifacts produced with the aid of generative AI.

\subsection*{Ethics statement}
This work uses only publicly released benchmark artifacts, RouterBench and RouterEval, which record per-query correctness of published language models; no human subjects, personal data, or new model outputs are involved. The intended use is reducing the number of model calls needed to reach a user-specified reliability target, which lowers the cost of serving language models. The certificates studied here bound the average all-miss risk over a deployment distribution; they are not per-query guarantees and are not a substitute for domain-specific safety review in high-stakes applications.

\subsection*{Reproducibility statement}
All experiments run on publicly available data. Section~\ref{sec:experiments} and Appendix~\ref{app:protocol} specify the split construction, the 20-redraw protocol, the calibration rule, and the certification criterion; Appendix~\ref{app:estimators} defines every estimator and its hyperparameter grid; Appendix~\ref{app:repro} lists seeds, split fractions, and the recorded result file behind each reported table. Every table and figure in the paper is generated by a script from a recorded result file. The code, the recorded result files, and one regeneration script per reported artifact are available at \url{https://github.com/kasakh/coproute}.

\bibliographystyle{preprint}
\bibliography{refs}

\appendix
\clearpage
\begingroup
\hypersetup{hidelinks}
\setcounter{tocdepth}{1}
{\Large\bfseries Appendix contents}\par\vspace{0.6em}
\titlecontents{section}[1.6em]{\smallskip}{\contentslabel{1.6em}}{\hspace*{-1.6em}}{\titlerule*[0.6pc]{.}\contentspage}
\startcontents[appendix]
\printcontents[appendix]{}{1}{}
\endgroup
\clearpage
\section{Extended related work}
\label{app:related-work}

\paragraph{Language-model routing and cascades.}
Most routing work selects a single model per query from candidates that differ in quality and inference cost. FrugalGPT combines a routing stage with a cascade to reduce inference cost \citep{chen2023frugalgpt}, RouteLLM learns a preference-based decision between models \citep{ong2024routellm}, HybridLLM uses a quality threshold to route between a smaller and a larger model \citep{ding2024hybridllm}, and BEST-Route allocates a query-specific test-time compute budget \citep{ding2025bestroute}. Learned routers include benchmark-trained predictors \citep{shnitzer2023routing}, reward-guided expert routing \citep{lu2024expert}, the dual-contrastive query and model representations of RouterDC \citep{chen2024routerdc}, the graph representation of GraphRouter \citep{feng2025graphrouter}, and the compact model representations of EmbedLLM \citep{zhuang2025embedllm}. Language-model cascades instead follow a sequence and stop when an uncertainty or quality rule accepts an answer \citep{dohan2022cascades,gupta2024cascades}. Surveys and benchmark studies organize this literature \citep{srivatsa2024routing,varangotreille2025survey,hu2024routerbench,huang2025routereval}. Most of this literature either returns a single model or follows a prescribed sequence, and it does not directly optimize the event that an entire selected set fails.

Bandit routers study the setting in which the system observes the outcome only of the candidate model it called, so that feedback on the models not called is never available. Contextual-bandit algorithms learn from this partial feedback \citep{li2010linucb,zhou2020neuralucb,riquelme2018deepbandits}. C2MAB-V applies an online bandit algorithm to cost-aware multi-LLM selection \citep{dai2024c2mabv}. LLM Bandit conditions a dynamic router on the user's quality and cost preference \citep{li2025llmbandit}. BaRP combines a preference-conditioned router with a policy-gradient algorithm that observes only the chosen model's feedback \citep{wang2025barp}. BaRP is designed as a single-model policy, so our learned-baseline experiments use its frozen quality logits only to define a model ordering for the common set-routing evaluation, whereas its original single-model quality and cost objective is reported separately.

\paragraph{Risk-controlled routing and conformal selection.}
RACER uses a calibration set to convert any base router into a set-valued routing policy \citep{racer2026}; it chooses a threshold for the score assigned to every candidate language model, and the threshold produces nested model sets and controls the probability that every selected model is wrong. CP-Router uses conformal uncertainty to choose between language and reasoning models \citep{su2025cprouter}, and because it selects between exactly two systems without forming candidate sets over a pool, it has no direct counterpart in the set-routing comparison. CR2 studies a cost-aware conformal loss for a wireless device-edge router \citep{xue2026cr2}. These methods establish the value of a post-hoc calibration step, but their candidate families are constructed from scores attached to individual models. CopRoute retains the risk target and final calibration step of RACER and changes how the candidate family is ordered and sized.

This calibration step draws on the split-conformal framework, which gives distribution-free uncertainty guarantees \citep{vovk2005algorithmic,angelopoulos2021gentle}. Conformal risk control extends these ideas to bounded monotone losses \citep{angelopoulos2022crc}, and Learn then Test provides a multiple-testing procedure for controlling risk \citep{angelopoulos2021ltt}. Conformal language modeling constructs sets of acceptable generated responses \citep{quach2023conformal}. Other language-model applications use conformal methods for multiple-choice questions and efficient LM decisions \citep{kumar2023conformalmcqa,vishwakarma2025prune}. Adaptive and regularized adaptive prediction sets use cumulative class probabilities to vary set size across inputs \citep{romano2020aps,angelopoulos2021raps}. Their labels are mutually exclusive, whereas several routed models may be correct on the same query. Selecting among conformal constructions after the calibration step can invalidate the original guarantee; and valid selection among conformal sets has been studied as a problem in its own right \citep{hegazy2025validselection}. CopRoute chooses its estimator and other settings before the final calibration step, then calibrates only the frozen nested family.

\paragraph{Complementary model sets and aggregation.}
\citet{dekoninck2025unified} study the tradeoff between the quality of an answer and its inference cost, and their set objective includes the expected quality of the best answer in a selected set, which for binary outcomes equals the probability that at least one selected model is correct. Their estimate of this joint quantity uses per-model quality estimates and per-model uncertainty variances fitted on validation data; the co-occurrence of failures across models is not modeled. FlexRouter learns complementary, variable-size model sets with a determinantal point process \citep{flexrouter2026}; it combines a query-dependent competence score with a learned model-similarity score, then uses greedy determinant gains to select a set. Its original objective raises the probability of selecting at least one correct model and reduces redundancy, but does not provide a distribution-free all-miss certificate. Our FlexRouter reconstruction preserves the DPP ordering mechanism and places the resulting frozen ordering inside the same set-routing evaluation used for CopRoute.

Complementary set routing also connects to the ensemble-selection literature. Greedy selection from a model library can improve aggregate predictions \citep{caruana2004ensemble}, while double-fault and error-correlation measures quantify whether classifiers fail together \citep{giacinto2001ensembles,kuncheva2003diversity}. \citet{turkmen2026information} select models with a mutual-information objective, \citet{zhang2026complementary} train complementary proposers for a summarizer, and \citet{chen2026cofailure} measures how co-failure limits routers, voting rules, and mixtures of agents. LLM-Blender ranks and fuses responses after several models have already been called \citep{jiang2023llmblender}. Self-consistency aggregates multiple reasoning paths from one model \citep{wang2023selfconsistency}, while Mixture-of-Agents uses outputs from several agents in a layered architecture \citep{wang2024moa}. These aggregation methods address how to combine available responses, whereas CopRoute determines which candidate models to call and controls the all-miss risk.

\paragraph{Dependence models, diversity, and ordering.}
Copulas separate marginal behavior from dependence \citep{sklar1959,nelsen2006}. For binary outcomes, latent Gaussian models trace to the tetrachoric-correlation construction \citep{pearson1900}. The correctness-subcopula identity of Section~\ref{sec:method} is this separation applied to binary correctness, in which the marginals are the per-model accuracies that the router estimates and the subcopula is the co-failure structure that the design set estimates. The empirical tally estimator works with observed correctness patterns and never fits a parametric copula. Correlation-aware contextual bandits also model statistical relationships among model rewards, but optimize an online surrogate-reward objective rather than calibrated set failure \citep{sridhar2026correlation}.

Determinantal point processes are a probabilistic model for diverse subsets \citep{kulesza2012dpp}. Prior work studies learning DPP kernels \citep{affandi2014dppkernels} and greedy MAP inference \citep{gillenwater2012mapdpp,han2017greedymap}. FlexRouter brings this quality-diversity construction to language-model routers. CopRoute orders models with the tally score, which counts how many previously unanswered design queries a candidate model answers.

The coverage of a selected model set is a monotone submodular function, so greedy fixed-cardinality selection inherits the classical approximation guarantee \citep{nemhauser1978}. Reaching a required coverage level connects to submodular set cover \citep{wolsey1982submodularcover}. Optimizing one fixed ordering for the average position of the first correct model is an instance of minimum-sum set cover \citep{feige2004mssc,bansal2023minsum}, which gives the tight polynomial-time approximation guarantee used by CopRoute. These deterministic ordering results do not themselves provide a statistical certificate; that guarantee comes from the independent conformal calibration step.

\clearpage
\section{Notation reference}
\label{app:notation}

Table~\ref{tab:notation} collects the symbols used throughout the main text and the appendix. Symbols that appear only inside individual theorems or proofs, and the per-method parameters of the learned baselines, are defined where they first occur.

\begingroup
\footnotesize
\setlength{\tabcolsep}{4pt}
\renewcommand{\arraystretch}{1.12}
\begin{longtable}{@{}>{\raggedright\arraybackslash}p{0.21\linewidth}>{\raggedright\arraybackslash}p{0.50\linewidth}>{\raggedright\arraybackslash}p{0.22\linewidth}@{}}
\caption{Notation used in the paper.}
\label{tab:notation}\\
\toprule
Symbol & Meaning & Note \\
\midrule
\endfirsthead
\multicolumn{3}{c}{\tablename~\thetable\ continued}\\
\toprule
Symbol & Meaning & Note \\
\midrule
\endhead
\midrule
\multicolumn{3}{r}{Continued on the next page}\\
\endfoot
\bottomrule
\endlastfoot

\multicolumn{3}{@{}l}{\textbf{Basic routing objects}}\\
\(\cM=\{1,\ldots,m\}\) & Pool of candidate language models. & \(m\) is the number of models. \\
\(i,j,k,\ell\) & Indices for a labeled query, a model, a prefix depth, and a position inside a prefix, respectively. & The range follows the equation in which an index appears. \\
\(X,x\) & A random query and one realized query, respectively. & Uppercase denotes a random variable. \\
\(Y_j\), \(Y=(Y_1,\ldots,Y_m)\) & Binary indicator that model \(j\) answers \(X\) correctly, and the vector of all model outcomes. & \(Y_j\in\{0,1\}\). \\
\(G(X),G_i\) & Models that answer query \(X\), or labeled query \(i\), correctly. & Both are subsets of \(\cM\). \\
\(S,T\subseteq\cM\) & Generic sets of models. & Local proofs may use either letter. \\
\(C(X)\) & Set of models returned by a routing policy for query \(X\). & The objective minimizes \(\E|C(X)|\). \\
\(p_j(x)\) & True probability that model \(j\) is correct on query \(x\). & \(p_j(x)=\Prob(Y_j=1\mid X=x)\). \\
\(\widehat p_j(x)\) & Router estimate of \(p_j(x)\); in the experiments it is a clipped average over 40 nearest router-training queries. & Query-specific marginal accuracy. \\
\(q_x(S)\), \(q(S)\) & Probability that every model in \(S\) is wrong, conditional on \(x\) or for the population. & All-miss probability. \\
\(c_x(S),c(S)\) & Probability that at least one model in \(S\) is correct. & Coverage; \(c_x(S)=1-q_x(S)\). \\
\(C^0_{S,x}\) & Correctness subcopula identified by the Bernoulli joint law on the values its marginal distributions can attain. & \(q_x(S)=C^0_{S,x}(1-p_j(x):j\in S)\). \\

\addlinespace
\multicolumn{3}{@{}l}{\textbf{Data and marginal estimates}}\\
\(D\in\{0,1\}^{n_d\times m}\) & Design correctness table; \(D_{ij}=1\) when model \(j\) answers design query \(i\) correctly. & \(n_d\) is the number of design queries. \\
\(F_{ij}=1-D_{ij}\) & Failure indicator for model \(j\) on design query \(i\). & Input to the dependence estimators. \\
\(\bar p_j^D\) & Fraction of design queries answered correctly by model \(j\). & \(\bar p_j^D=n_d^{-1}\sum_iD_{ij}\). \\
\(N(x)\), \(Z_{rj}\) & Fixed set of 40 nearest router-training queries for \(x\), and the correctness of model \(j\) on neighbor \(r\in N(x)\). & No design, calibration, validation, or test label enters this set. \\
\(\mathcal D_{\mathrm{des}},\mathcal D_{\mathrm{val}},\mathcal D_{\mathrm{cal}}\) & Design, validation, and final-calibration datasets. & Disjoint splits. \\
\(U\) & All auxiliary random choices frozen before final calibration. & Includes seeds, ties, smoothing, and numerical draws. \\

\addlinespace
\multicolumn{3}{@{}l}{\textbf{Ordering and prefixes}}\\
\(\pi,\pi_x\) & Ordering rule and the resulting permutation of models for query \(x\). & An order may be global or query-specific. \\
\(P_k^\pi(x)\) & Set containing the first \(k\) models in order \(\pi_x\). & \(P_0^\pi(x)=\varnothing\). \\
\(\pi^D\), \(\pi^T\) & Fixed order by decreasing design accuracy, and the fixed tally order of \eqref{eq:fixed-tally-order}. & Superscript \(T\) means tally, not transpose. \\
\(\widehat q_D(S)\) & Fraction of design queries on which all models in \(S\) are wrong. & \(n_d^{-1}\sum_i\prod_{j\in S}(1-D_{ij})\). \\
\(R_{k-1}\) & Design queries still unanswered after prefix \(P_{k-1}\). & A set of rows, not a correlation matrix. \\

\addlinespace
\multicolumn{3}{@{}l}{\textbf{Stopping scores}}\\
\(r_k(x)\) & Estimated all-miss residual assigned to the first \(k\) ordered models. & Generic sizing quantity; \(r_0(x)=1\). \\
\(r_k^{\mathrm{ind}}(x;\pi)\) & Independence-based all-miss estimate for the first \(k\) ordered models. & Product of query-specific marginal miss probabilities. \\
\(r_k^{\mathrm{dep}}(x;h,\pi)\) & Dependence-corrected all-miss estimate. & Adjusts \(r_k^{\mathrm{ind}}\) with the design-set dependence ratio; \(h=0\) recovers \(r_k^{\mathrm{ind}}\). \\
\(d_k,\widetilde d_k(h),h\) & Raw dependence ratio, its shrunk value, and the minimum-support parameter. & Defined in Appendix~\ref{app:implementation}. \\
\(r_k^{\mathrm{loc}}(x;\pi)\) & Fraction of the 40 router-training neighbors missed by every model in the first \(k\) positions. & Local empirical stopping. \\
\(s_\pi(x,\pi_{x,k}),s(x,j)\) & Entry score of the model at position \(k\), one minus the preceding residual. & Smaller scores enter first. \\
\(\lambda\), \(C_\lambda(x)\) & Cutoff on entry scores, and the model set it selects. & \(C_\lambda(x)=\{j:s(x,j)\leq\lambda\}\). \\

\addlinespace
\multicolumn{3}{@{}l}{\textbf{Calibration and certification}}\\
\(\alpha\) & User-specified target for the all-miss risk. & Reported targets range from 0.02 to 0.20. \\
\(n,X_i,G_i\) & Number of final-calibration queries, calibration query \(i\), and its set of correct models. & Exchangeable with a future query. \\
\(L^{\mathrm{miss}}(S,G)\) & Standard all-miss loss, equal to one when \(S\cap G=\varnothing\). & An empty called set has loss one. \\
\(\widehat R_n(\lambda),\widehat R_n(k)\) & Average calibration loss at a cutoff or depth. & Empirical all-miss risk. \\
\(\mathcal K_{\mathrm{cert}},\Lambda_{\mathrm{cert}}\) & Depths and cutoffs that pass the finite-sample calibration rule. & Empty means no certificate is issued. \\
\(\widehat k,\widehat\lambda\) & Smallest certified depth and smallest certified cutoff. & Call-minimal within the frozen family. \\
\(c\) & Number of calibration queries missed at a given cutoff. & The rule is \((c+1)/(n+1)\leq\alpha\). \\
\texttt{NON-CERTIFIED} & Output when no depth or cutoff passes calibration. & No policy is deployed under this certificate. \\
\(\mathrm{Success}@k\) & Fraction of queries for which at least one of the first \(k\) selected models is correct. & Fixed-depth coverage metric. \\
\end{longtable}
\endgroup

\section{Proofs for the joint target, ordering, and calibration}
\label{app:corrected-proofs}

\subsection{Formal joint target and independence benchmark}
\label{app:joint-target}

Let \(\cM=\{1,\ldots,m\}\) be the model pool. For a query \(X\), let \(Y_j=\ind\{\text{model }j\text{ is correct on }X\}\). For \(S\subseteq\cM\), recall
\[
q_x(S)=\Prob(Y_j=0\text{ for every }j\in S\mid X=x),
\qquad c_x(S)=1-q_x(S).
\]
The notation without subscript \(x\) denotes probabilities under one fixed population law.

\begin{proposition}[Identified correctness subcopula]
\label{prop:correctness-subcopula}
Fix \(x\) admitting a regular conditional law of \(Y=(Y_1,\ldots,Y_m)\), and let \(S\neq\varnothing\). Write \(p_j(x)=\Prob(Y_j=1\mid X=x)\). If \(F_{j,x}\) is the conditional CDF of \(Y_j\) and \(H_{S,x}\) is the conditional joint CDF of \((Y_j:j\in S)\), then a unique subcopula \(C^0_{S,x}\) on the attainable grid satisfies
\[
H_{S,x}(y_S)=C^0_{S,x}\bigl(F_{j,x}(y_j):j\in S\bigr).
\]
In particular,
\begin{equation}
q_x(S)=C^0_{S,x}\bigl(1-p_j(x):j\in S\bigr).
\label{eq:subcopula-value}
\end{equation}
The Bernoulli law identifies this subcopula only on \(\prod_{j\in S}\{0,1-p_j(x),1\}\), with repeated points removed. A full copula extension away from this grid is generally nonunique.
\end{proposition}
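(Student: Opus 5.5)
This is Sklar's theorem in its subcopula form, specialised to Bernoulli margins. The plan is to build the subcopula directly on the attainable grid and then read off the all-miss value.

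\textbf{Step 1: the attainable grid.} For each \(j\in S\), the conditional CDF of the Bernoulli variable \(Y_j\) given \(X=x\) takes only the values \(0\) (for \(y_j<0\)), \(1-p_j(x)\) (for \(0\leq y_j<1\)) and \(1\) (for \(y_j\geq1\)). So the range \(\mathrm{Ran}F_{j,x}\) is \(\{0,1-p_j(x),1\}\), with repeated points removed when \(p_j(x)\in\{0,1\}\).

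\textbf{Step 2: define the subcopula.} On the product grid \(\prod_{j\in S}\mathrm{Ran}F_{j,x}\), set
\[
C^0_{S,x}(u)=H_{S,x}(y),
\]
where \(y\) is any point with \(F_{j,x}(y_j)=u_j\) for every \(j\).

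The main obstacle is showing this is well defined, i.e.\ that \(H_{S,x}(y)\) does not depend on which preimage \(y\) is chosen. Suppose \(y\) and \(y'\) satisfy \(F_{j,x}(y_j)=F_{j,x}(y'_j)\) for all \(j\). Then for each \(j\), the event \(\{y_j<Y_j\leq y'_j\}\) (or its reverse) has probability zero. The standard rectangle inequality
\[
|H(y)-H(y')|\leq\sum_j|F_j(y_j)-F_j(y'_j)|
\]
then gives \(H(y)=H(y')\). This is exactly the Lipschitz argument from the proof of Sklar's lemma.

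\textbf{Step 3: subcopula axioms.} We check three properties directly from the properties of the joint CDF \(H\):
\begin{itemize}
\item \emph{Groundedness:} \(C^0_{S,x}(u)=0\) if any \(u_j=0\), because then \(y_j<0\) and the event is empty.
\item \emph{Uniform margins:} \(C^0_{S,x}(u)=u_k\) when all other coordinates equal \(1\), because setting \(y_j\geq1\) for \(j\neq k\) marginalises those coordinates out.
\item \emph{\(|S|\)-increasing:} this follows because \(C\)-volumes of grid boxes equal \(H\)-volumes of the corresponding boxes in \(y\)-space, which are probabilities and hence nonnegative.
\end{itemize}

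\textbf{Step 4: uniqueness on the grid.} Any subcopula \(C\) satisfying \(H=C\circ(F_j)\) must agree with \(C^0_{S,x}\) at every grid point, since every grid point is attained as \((F_{j,x}(y_j))_j\) for some \(y\).

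\textbf{Step 5: the all-miss value.} Evaluate at \(y_j=0\) for all \(j\in S\):
\[
H_{S,x}(0,\ldots,0)=\Prob(Y_j\leq0\ \forall j\in S\mid X=x)=q_x(S),
\]
while \(F_{j,x}(0)=1-p_j(x)\). This yields \eqref{eq:subcopula-value}.

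\textbf{Step 6: nonuniqueness off the grid.} It suffices to exhibit an example. Take \(|S|=2\), \(p_1=p_2=1/2\), and independent \(Y_1,Y_2\), so the grid is \(\{0,\tfrac12,1\}^2\). The product copula extends \(C^0_{S,x}\). A second extension is obtained by patching a different copula into each of the four sub-squares \([0,\tfrac12]^2\), \([0,\tfrac12]\times[\tfrac12,1]\), and so on: rescale any copula \(K\neq\Pi\) into a sub-square, keeping its mass and uniform margins within that square. This agrees with \(C^0_{S,x}\) on the grid but differs from the product copula elsewhere. The remark about the regular conditional law only ensures that \(H_{S,x}\) and the \(F_{j,x}\) are genuine CDFs for the fixed \(x\).
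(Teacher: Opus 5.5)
Your proposal is correct and takes the same route as the paper, which invokes the discrete (subcopula) form of Sklar's theorem on \(\prod_{j\in S}\mathrm{Ran}\,F_{j,x}\) and evaluates at the all-zero vector using \(F_{j,x}(0)=1-p_j(x)\), as in your Steps 1 and 5. Your Steps 2--4 reprove that cited lemma from scratch (well-definedness via the Lipschitz bound, the subcopula axioms, uniqueness on the grid), and your patchwork construction in Step 6 gives an explicit example of the off-grid non-uniqueness that the paper only asserts.
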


\begin{proof}
For a Bernoulli correctness variable,
\[
F_{j,x}(0)=\Prob(Y_j\leq0\mid X=x)=1-p_j(x).
\]
The discrete form of Sklar's theorem gives a unique subcopula on the Cartesian product of the marginal CDF ranges. Evaluating the joint CDF at the all-zero correctness vector gives
\[
q_x(S)
=H_{S,x}(0_S)
=C^0_{S,x}\bigl(F_{j,x}(0):j\in S\bigr)
=C^0_{S,x}\bigl(1-p_j(x):j\in S\bigr).
\]
For discrete margins, Sklar's theorem does not identify a unique extension outside the attainable grid.
\end{proof}

\paragraph{Orthant convention.}
Equation~\eqref{eq:subcopula-value} is a lower-orthant probability for the correctness vector. If \(B_j=1-Y_j\) denotes failure, the all-miss event is an upper-orthant probability for \(B\). It is not the lower-orthant copula value of the failure vector evaluated at its failure marginals.

\begin{lemma}[Individual-accuracy ordering under conditional independence]
\label{lem:conditional-independence-optimality}
Fix \(x\) and suppose \(Y_1,\ldots,Y_m\) are mutually independent conditional on \(X=x\). Assume equal call costs and one fixed tie rule. If \(\pi\) orders the models so that
\[
p_{\pi_1}(x)\geq p_{\pi_2}(x)\geq\cdots\geq p_{\pi_m}(x),
\]
then for every \(k\), the prefix \(P_k=\{\pi_1,\ldots,\pi_k\}\) minimizes \(q_x(S)\) over all model sets \(S\) with \(|S|=k\).
\end{lemma}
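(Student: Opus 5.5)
Under conditional independence the all-miss probability factorizes, so the claim becomes an elementary statement about products of numbers in \([0,1]\). First, apply Proposition~\ref{prop:correctness-subcopula}, or directly the independence assumption, to write for every nonempty \(S\subseteq\cM\)
\[
q_x(S)=\prod_{j\in S}\bigl(1-p_j(x)\bigr),
\]
with the empty product equal to one when \(S=\varnothing\). Thus \(q_x(S)\) depends on \(S\) only through the multiset of miss probabilities \(u_j=1-p_j(x)\in[0,1]\). The ordering hypothesis says that \(u_{\pi_1}\leq u_{\pi_2}\leq\cdots\leq u_{\pi_m}\). The task is therefore to show that, among all \(k\)-subsets, the product of the \(u\)'s is minimized by the \(k\) smallest \(u\)'s.

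Second, I would prove this with an exchange argument rather than logarithms, because some \(u_j\) may be zero. Fix \(k\) and any \(S\) with \(|S|=k\) and \(S\neq P_k\). Since \(|S|=|P_k|\), there exist \(a\in P_k\setminus S\) and \(b\in S\setminus P_k\). Because \(a\) precedes \(b\) in \(\pi\), we have \(u_a\leq u_b\). Set \(S'=(S\setminus\{b\})\cup\{a\}\). Writing \(W=\prod_{j\in S\setminus\{b\}}u_j\geq0\),
\[
q_x(S')=W\,u_a\leq W\,u_b=q_x(S).
\]
Each swap keeps the size at \(k\) and raises \(|S\cap P_k|\) by one. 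After at most \(k\) swaps we reach \(P_k\), and the inequalities chain to give \(q_x(P_k)\leq q_x(S)\). The case \(k=0\) is trivial, since the only set of size zero is \(\varnothing\).

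Third, I would handle ties and the phrase ``one fixed tie rule.'' When some \(p_j(x)\) are equal, several prefixes may be optimal. The claim is only that the prefix produced by the fixed tie rule is \emph{a} minimizer, and the swap inequality is non-strict, so ties cause no trouble. The equal-call-cost assumption only matters in that comparing sets of the same size \(k\) is then the same as comparing sets of equal cost; it plays no role in the inequality itself. The statement holds for every \(k\) at once because the argument never depends on which \(k\) is chosen.

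I expect no real obstacle. The only delicate point is avoiding \(\log u_j\) when some \(u_j=0\) and making sure the argument stays valid when \(W=0\). The multiplicative exchange step above covers both cases without any special treatment.
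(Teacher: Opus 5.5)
Your proof is correct and takes the same route as the paper: you factorize $q_x(S)$ under conditional independence, then exchange elements so that each swap can only shrink the product. Your version swaps directly toward $P_k$, using the non-strict inequality $u_a\leq u_b$ and tracking $|S\cap P_k|$, so it lands exactly on $P_k$ even with tied marginals; the paper's strict-inequality exchange instead relies on the tie rule to reach $P_k$, which makes your argument slightly cleaner.
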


\begin{proof}
Conditional independence gives
\[
q_x(S)=\prod_{j\in S}\bigl(1-p_j(x)\bigr).
\]
Suppose a size-\(k\) set contains model \(a\) but excludes model \(b\), with \(p_b(x)>p_a(x)\). Replacing \(a\) by \(b\) replaces the factor \(1-p_a(x)\) by the no-larger factor \(1-p_b(x)\), so the all-miss probability cannot increase. Repeating this exchange until the set consists of the \(k\) largest marginals produces \(P_k\). The fixed tie rule makes these prefixes nested.
\end{proof}

Independence is sufficient but not necessary, since dependent outcomes may preserve individual-accuracy ordering or reverse it. Example~\ref{ex:nonnested-oracle} gives an exact finite law in which the individually best model is not part of the best pair.

\subsection{Ordering proof}
\label{app:ordering-proof}

Throughout this subsection, \(D\in\{0,1\}^{n_d\times m}\) is the realized design correctness matrix,
\[
G_i^D=\{j:D_{ij}=1\},
\qquad
U_D=\{i:G_i^D=\varnothing\},
\qquad
u_D=\frac{|U_D|}{n_d}=\widehat q_D(\cM).
\]
The design tally has the counting form
\begin{equation}
\widehat q_D(S)
=\frac{1}{n_d}\bigl|\{i:S\cap G_i^D=\varnothing\}\bigr|,
\qquad
\widehat q_D(\varnothing)=1.
\label{eq:tally-counting}
\end{equation}

\paragraph{Prefix-area identity.}
For an ordering \(\pi\), define
\[
T_i(\pi)=\bigl|\{k\in\{0,\ldots,m-1\}:P_k^\pi\cap G_i^D=\varnothing\}\bigr|.
\]
If design query \(i\) is answerable, \(T_i(\pi)\) is the position of its first correct model; if \(i\in U_D\), then \(T_i(\pi)=m\). Exchanging two finite sums yields
\begin{equation}
J_D(\pi)
=\sum_{k=0}^{m-1}\widehat q_D(P_k^\pi)
=\frac{1}{n_d}\sum_{i=1}^{n_d}T_i(\pi)
=\frac{1}{n_d}\sum_{i\notin U_D}T_i(\pi)+m\,u_D.
\label{eq:area-identity}
\end{equation}
The last term is independent of \(\pi\).

\paragraph{Reduction to greedy minimum-sum set cover.}
Let
\[
R_{k-1}=\{i:P_{k-1}\cap G_i^D=\varnothing\}
\]
be the design rows still unanswered after the current prefix. For any candidate model \(j\notin P_{k-1}\),
\begin{equation}
n_d\left[\widehat q_D(P_{k-1})-
\widehat q_D(P_{k-1}\cup\{j\})\right]
=\bigl|\{i\in R_{k-1}:D_{ij}=1\}\bigr|.
\label{eq:tally-residual-gain}
\end{equation}
Thus minimizing the next tally value is exactly the same as choosing the model that covers the most still-unanswered rows. Rows in \(U_D\) contribute zero to every candidate count. Once only those rows remain, every completion of the ordering has the same value of \(J_D\).

\begin{proof}[Proof of Theorem~\ref{thm:mssc-tight}]
Consider the set system whose elements are the answerable design rows \(i\notin U_D\) and whose set for model \(j\) is
\[
S_j=\{i\notin U_D:D_{ij}=1\}.
\]
By \eqref{eq:area-identity}, the first position containing a set that covers row \(i\) is \(T_i(\pi)\). Hence
\[
A(\pi)=\sum_{i\notin U_D}T_i(\pi)
\]
is exactly the minimum-sum set-cover objective. Equation~\eqref{eq:tally-residual-gain} shows that \(\pi^T\) is a greedy execution for this instance. The greedy algorithm is a factor-four approximation for minimum-sum set cover \citep{feige2004mssc}, so
\[
A(\pi^T)\leq4\min_\pi A(\pi).
\]
Substituting \(J_D(\pi)=A(\pi)/n_d+m\,u_D\) gives
\[
J_D(\pi^T)-m\,u_D
\leq4\left(\min_\pi J_D(\pi)-m\,u_D\right)
\]
and therefore \(J_D(\pi^T)\leq4\min_\pi J_D(\pi)\).

For hardness, start from an arbitrary minimum-sum set-cover instance in which every element belongs to at least one set. Define \(D_{ij}=1\) exactly when element \(i\) belongs to set \(j\). Then \(U_D=\varnothing\), and \(A(\pi)\) is the objective of the original instance. A polynomial-time algorithm with ratio \(4-\varepsilon\) for every correctness matrix would therefore give the same ratio for every minimum-sum set-cover instance, which is impossible for any \(\varepsilon>0\) unless \(\mathrm P=\mathrm{NP}\) \citep{feige2004mssc}.
\end{proof}

\begin{theorem}[Fixed-budget coverage guarantee]
\label{thm:fixed-cardinality-submodularity}
For any fixed law of the correctness indicators, the coverage
\[
c(S)=\Prob\!\left(\bigcup_{j\in S}\{Y_j=1\}\right)
\]
is normalized, monotone, and submodular. Let \(G_0=\varnothing\), and repeatedly add the model with the largest true coverage gain. For every fixed \(k\geq1\), the resulting set \(G_k\) satisfies
\begin{equation}
c(G_k)
\geq\left(1-\left(1-\frac{1}{k}\right)^k\right)
\max_{|S|=k}c(S)
\geq\left(1-\frac{1}{e}\right)\max_{|S|=k}c(S).
\label{eq:design-per-depth}
\end{equation}
The same statement holds for \(c_x\) after fixing \(x\).
\end{theorem}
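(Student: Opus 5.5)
The plan is to prove the three structural properties of \(c\) directly from its definition as the probability of a union, and then invoke the classical greedy analysis of \citet{nemhauser1978} in the sharper finite-\(k\) form.

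\textbf{Step 1: structural properties of \(c\).}
Normalization is immediate, since the empty union is the empty event, so \(c(\varnothing)=0\). Monotonicity follows because \(S\subseteq T\) makes \(\bigcup_{j\in S}\{Y_j=1\}\) a subevent of the union over \(T\).

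For submodularity, fix \(S\subseteq T\) and \(j\notin T\). Write the marginal gain as
\[
c(S\cup\{j\})-c(S)=\Prob\bigl(Y_j=1,\;Y_\ell=0\ \text{for every }\ell\in S\bigr).
\]
The event on the right, with \(S\) replaced by \(T\), is contained in the event for \(S\). Hence the gain at \(T\) is at most the gain at \(S\), which is exactly diminishing returns.

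\textbf{Step 2: the one-step recursion.}
Let \(S^\star\) attain \(\mathrm{OPT}_k=\max_{|S|=k}c(S)\). For any \(G_i\), monotonicity and then submodularity (telescoping over the elements of \(S^\star\)) give
\[
\mathrm{OPT}_k-c(G_i)\;\le\;c(G_i\cup S^\star)-c(G_i)\;\le\;\sum_{j\in S^\star}\bigl[c(G_i\cup\{j\})-c(G_i)\bigr].
\]
The sum has at most \(k\) terms, and each term is at most the greedy gain \(c(G_{i+1})-c(G_i)\). This yields
\[
\mathrm{OPT}_k-c(G_{i+1})\le\Bigl(1-\frac1k\Bigr)\bigl(\mathrm{OPT}_k-c(G_i)\bigr).
\]

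\textbf{Step 3: iterate.}
Iterating the recursion \(k\) times from \(c(G_0)=0\) gives \(c(G_k)\ge\bigl(1-(1-1/k)^k\bigr)\mathrm{OPT}_k\). Then \((1-1/k)^k\le e^{-1}\), which follows from \(\log(1-t)\le -t\), gives the second inequality in \eqref{eq:design-per-depth}.

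\textbf{Conditional version and the design instance.}
The statement for \(c_x\) follows by applying the same argument to the regular conditional law given \(X=x\). Taking the empirical law of the rows of \(D\) recovers the design-coverage version. In that case the greedy step coincides with the tally rule, by \eqref{eq:tally-residual-gain}.

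\textbf{Main obstacle.}
The only delicate point is the bookkeeping in Step 2. The argument must compare against a fixed size-\(k\) optimum at every intermediate step \(i<k\), rather than against \(\mathrm{OPT}_i\). It must also handle ties in the greedy choice, where any maximizer of the gain suffices.
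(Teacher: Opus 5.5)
Your proof is correct and follows the paper's argument essentially step for step. Submodularity comes from the same event-containment identity for the marginal gain. The bound comes from the same recursion against a fixed size-\(k\) optimum, iterated from \(c(G_0)=0\); the paper sums only over \(O_k\setminus G_t\), but your extra terms for \(j\in S^\star\cap G_i\) are zero, so this changes nothing.
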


\begin{proof}
For \(A\subseteq B\) and \(j\notin B\),
\[
c(A\cup\{j\})-c(A)
=\Prob(Y_j=1,\ Y_i=0\text{ for every }i\in A),
\]
while the analogous event with \(B\) is a subset of this event. This proves diminishing returns and therefore submodularity. Let \(O_k\) maximize coverage among size-\(k\) sets and write \(\mathrm{OPT}=c(O_k)\). By monotonicity and submodularity,
\[
\mathrm{OPT}
\leq c(G_t)+
\sum_{j\in O_k\setminus G_t}
\bigl[c(G_t\cup\{j\})-c(G_t)\bigr].
\]
There are at most \(k\) terms, so the next greedy gain is at least \((\mathrm{OPT}-c(G_t))/k\). Iterating from \(c(G_0)=0\) for \(k\) steps gives the first inequality in \eqref{eq:design-per-depth}; the second uses \((1-1/k)^k\leq e^{-1}\).
\end{proof}

Under the uniform law on design rows, the true greedy step in Theorem~\ref{thm:fixed-cardinality-submodularity} is exactly the tally step in \eqref{eq:tally-residual-gain}. Thus the same fixed tally ordering has the per-depth guarantee \eqref{eq:design-per-depth} and the whole-order guarantee in Theorem~\ref{thm:mssc-tight}. The first compares one depth with the best set at that depth; the second compares the area of one complete ordering with the best complete ordering.

\subsection{Calibration proof}
\label{app:calibration-proof}

Let \(\mathcal D_{\mathrm{des}}\) and \(\mathcal D_{\mathrm{val}}\) be the design and validation data, and let \(U\) contain every auxiliary random choice, including seeds, numerical draws, clustering choices, tie rules, smoothing, and candidate-family selection. Before final-calibration labels are inspected, these quantities determine one measurable nested map
\[
(x,k)\longmapsto C_k(x;\mathcal D_{\mathrm{des}},\mathcal D_{\mathrm{val}},U)
\subseteq\cM,
\qquad k=0,\ldots,K.
\]
Conditioning on the frozen information leaves the calibration observations and future observation exchangeable.

\begin{proof}[Proof of Theorem~\ref{thm:frozen-feasible-crc}]
For observation \(i\), define the first-success depth
\[
D_i=\min\{k\in\{0,\ldots,K\}:L_i(k)=0\},
\]
with \(D_i=K+1\) if no set in the family succeeds. Nestedness gives \(L_i(k)=\ind\{D_i>k\}\), and \(D_1,\ldots,D_{n+1}\) are exchangeable conditional on the frozen information.

If \(\alpha<1/(n+1)\), no depth can pass and the certified-miss event is empty. Otherwise, set
\[
r=\lfloor\alpha(n+1)\rfloor-1.
\]
A depth \(k\) is certified exactly when
\[
\sum_{i=1}^n\ind\{D_i>k\}\leq r.
\]
Let \(D_{(1)}\leq\cdots\leq D_{(n)}\) be the calibration order statistics. A certificate is issued exactly when \(D_{(n-r)}\leq K\), and then \(\widehat k=D_{(n-r)}\). A future miss implies \(D_{n+1}>D_{(n-r)}\), so the future depth occupies one of the largest \(r+1\) ranks among the \(n+1\) exchangeable depths. Therefore
\[
\Prob\!\left(
\mathcal K_{\mathrm{cert}}\neq\varnothing
\text{ and }L_{n+1}(\widehat k)=1
\mid \mathcal D_{\mathrm{des}},\mathcal D_{\mathrm{val}},U
\right)
\leq\frac{r+1}{n+1}\leq\alpha.
\]
Averaging over the frozen information proves the unconditional claim. If certification occurs almost surely, the certificate event can be removed.
\end{proof}

\begin{proof}[Proof of Corollary~\ref{cor:minimal-cutoff}]
Nestedness makes \(\widehat R_n(\lambda)\) nonincreasing, so \(\Lambda_{\mathrm{cert}}\) is an upper set. The finite calibration sample generates at most \(nm\) distinct scores,
\[
\mathcal S_{\mathrm{cal}}
=\{s(X_i,j):1\leq i\leq n,\ j\in\cM\}.
\]
Hence \(\widehat R_n\) is a right-continuous step function whose jumps occur only at values in \(\mathcal S_{\mathrm{cal}}\). The infimum of a nonempty \(\Lambda_{\mathrm{cert}}\) is therefore attained. Every certified \(\lambda\) satisfies \(\lambda\geq\widehat\lambda\), so
\[
C_{\widehat\lambda}(x)\subseteq C_\lambda(x)
\]
for every query \(x\). Applying any cost that is nondecreasing under set inclusion preserves this inequality, both pointwise and in expectation.
\end{proof}

\paragraph{Continuous-cutoff form.}
The score rule \eqref{eq:crc} also reduces to a one-dimensional conformal statement. For calibration query \(i\), let \(S_i^*=\min_{j\in G_i}s(X_i,j)\), with \(S_i^*=+\infty\) when \(G_i=\varnothing\). The score map is frozen before calibration, so \(S_1^*,\ldots,S_{n+1}^*\) are exchangeable, and the all-miss loss of \(C_\lambda\) on query \(i\) equals \(\ind\{S_i^*>\lambda\}\). With \(r=\lfloor\alpha(n+1)\rfloor-1\) as in Lemma~\ref{lem:rank-form}, the rule \eqref{eq:crc} returns \(\widehat\lambda=S_{(n-r)}^*\), the \((n-r)\)th order statistic of the calibration scores, and certifies exactly when this value is finite. A certified miss on the next query is the event \(\{S_{n+1}^*>S_{(n-r)}^*\}\), whose probability is at most \((r+1)/(n+1)\leq\alpha\) by exchangeability.

\subsubsection{Exact rank form and sharpness}

\begin{lemma}[Exact rank form]
\label{lem:rank-form}
Let \(D_i\) be the first-success depths defined in the proof of Theorem~\ref{thm:frozen-feasible-crc} and set \(r=\lfloor\alpha(n+1)\rfloor-1\). When \(1/(n+1)\leq\alpha<1\), depth \(k\) is certified if and only if \(|\{i\leq n:D_i>k\}|\leq r\). A certificate is issued if and only if \(D_{(n-r)}\leq K\), and then \(\widehat k=D_{(n-r)}\). Moreover,
\begin{equation}
\Prob\bigl(\mathcal K_{\mathrm{cert}}\neq\varnothing
\text{ and }L_{n+1}(\widehat k)=1\bigr)
\leq
\frac{\lfloor\alpha(n+1)\rfloor}{n+1}
\leq\alpha.
\label{eq:rank-form}
\end{equation}
\end{lemma}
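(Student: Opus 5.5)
The plan is to reduce everything to counting over the first-success depths \(D_i\) and then finish with a rank argument for the exchangeable variables \(D_1,\ldots,D_{n+1}\). Nestedness gives the identity \(L_i(k)=\ind\{D_i>k\}\), so \(n\widehat R_n(k)=|\{i\leq n:D_i>k\}|\). Multiplying the certification inequality by \(n+1\) shows that \(k\) is certified if and only if this count is at most \(\alpha(n+1)-1\). Since the count is an integer, that is equivalent to the count being at most \(\lfloor\alpha(n+1)\rfloor-1=r\). The hypothesis \(\alpha\geq1/(n+1)\) ensures \(r\geq0\), and \(\alpha<1\) ensures \(r\leq n-1\), so the order statistic \(D_{(n-r)}\) is well defined.

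Next I will identify \(\widehat k\). The count \(|\{i:D_i>k\}|\) is nonincreasing in \(k\). It is at most \(r\) exactly when at least \(n-r\) of the depths satisfy \(D_i\leq k\), which holds if and only if \(k\geq D_{(n-r)}\). The certified set is therefore \(\{k\leq K:k\geq D_{(n-r)}\}\). This set is nonempty if and only if \(D_{(n-r)}\leq K\), and in that case its minimum is \(D_{(n-r)}\). Note that \(D_i\geq0\) always, so this value lies in \(\{0,\ldots,K\}\).

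For the bound, on the certification event a future miss means \(L_{n+1}(D_{(n-r)})=1\), that is, \(D_{n+1}>D_{(n-r)}\). On that event at least \(n-r\) of the \(n+1\) depths are strictly smaller than \(D_{n+1}\). This places \(D_{n+1}\) among the top \(r+1\) positions under any rank ordering of the \(n+1\) values.

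Ties are the main obstacle, and I would handle them by randomized tie-breaking. Draw an independent uniform permutation to break ties, so that the ranks of \(D_1,\ldots,D_{n+1}\) form a uniformly distributed permutation; this follows from exchangeability conditional on the frozen information. The event "at least \(n-r\) other values are strictly smaller" implies that the tie-broken rank of \(D_{n+1}\) is at least \(n-r+1\). That rank is uniform on \(\{1,\ldots,n+1\}\), so this event has probability \((r+1)/(n+1)=\lfloor\alpha(n+1)\rfloor/(n+1)\leq\alpha\). Averaging over the frozen information gives \eqref{eq:rank-form}.
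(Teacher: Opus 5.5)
Your proposal is correct and follows the paper's proof. The steps match: the integer-rounding equivalence \(c_k\leq\lfloor\alpha(n+1)\rfloor-1=r\), the identification \(\widehat k=D_{(n-r)}\) through the calibration order statistic, and the exchangeability rank bound \((r+1)/(n+1)\leq\alpha\). Your independent uniform tie-breaking adds one thing: it makes rigorous the paper's informal claim that the future depth ``occupies one of the largest \(r+1\) ranks,'' which the paper states without addressing ties among the \(D_i\).
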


\begin{proof}
Write \(c_k=|\{i\leq n:L_i(k)=1\}|\). Since \(c_k\) is an integer,
\[
\frac{n}{n+1}\frac{c_k}{n}+\frac{1}{n+1}\leq\alpha
\quad\Longleftrightarrow\quad
c_k\leq\lfloor\alpha(n+1)\rfloor-1=r.
\]
Because \(L_i(k)=\ind\{D_i>k\}\), this is equivalent to \(D_{(n-r)}\leq k\). The remaining claims follow from the proof of Theorem~\ref{thm:frozen-feasible-crc}.
\end{proof}

\begin{theorem}[Sharp deterministic rank correction]
\label{thm:rank-sharpness}
Fix \(n\geq1\) and \(1/(n+1)\leq\alpha<1\), and let \(r=\lfloor\alpha(n+1)\rfloor-1\).
\begin{enumerate}
\item There is an exchangeable one-model problem for which the probability in \eqref{eq:rank-form} equals \(\lfloor\alpha(n+1)\rfloor/(n+1)\).
\item If a deterministic rule accepts one more calibration miss than \(r\), there is an exchangeable one-model problem on which its certified-miss probability exceeds \(\alpha\).
\end{enumerate}
Thus the additive correction \(1/(n+1)\) in \eqref{eq:crc} is the smallest valid correction for deterministic distribution-free rank calibration.
\end{theorem}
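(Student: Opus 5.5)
Both parts reduce to the rank form of Lemma~\ref{lem:rank-form}. With one model, the nested family is \(C_0=\varnothing\subseteq C_1=\{1\}\) (so \(K=1\)). The first-success depth \(D_i\) takes only the values \(1\) (the model is correct) or \(2\) (the model misses, since \(C_0=\varnothing\) always counts as a miss). The natural extremal construction makes the depths continuous and tie-free, so that exchangeability pins down the rank of the future depth exactly. Since a one-model family has only two levels, I will instead use the score form. The nested family is indexed by a continuous score cutoff: an i.i.d.\ sequence with \(X_i\sim\mathrm{Unif}(0,1)\), one model correct on every query (so \(G_i=\{1\}\)), and \(s(x,1)=x\). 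Then \(C_\lambda(x)=\{1\}\) iff \(x\le\lambda\). In the notation of the continuous-cutoff form, \(S_i^*=X_i\), all distinct almost surely, and \(\widehat\lambda=S^*_{(n-r)}\) is always finite, so certification always occurs.

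For part 1, the certified-miss event is exactly \(\{S^*_{n+1}>S^*_{(n-r)}\}\). The argument is a counting step. Among \(n+1\) i.i.d.\ continuous values, the rank of the future value is uniform on \(\{1,\ldots,n+1\}\). The future exceeds the \((n-r)\)th calibration order statistic iff its rank among all \(n+1\) values is at least \(n-r+1\). That gives exactly \(r+1\) ranks out of \(n+1\), hence probability \((r+1)/(n+1)=\lfloor\alpha(n+1)\rfloor/(n+1)\), matching \eqref{eq:rank-form}. If one insists on the depth-indexed formulation, the same example is read as the sublevel family, which Corollary~\ref{cor:minimal-cutoff} already treats.

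For part 2, the rule accepts \(c\le r+1\) calibration misses, so it returns \(\widehat\lambda=S^*_{(n-r-1)}\). This requires \(n-r-1\ge1\); if \(r+1=n\), it allows all calibration points to miss. The same rank count then gives certified-miss probability \((r+2)/(n+1)\). Since \(r+1=\lfloor\alpha(n+1)\rfloor\), we have \(r+2>\alpha(n+1)\), so this exceeds \(\alpha\). In the edge case \(r+1=n\), the rule certifies the cutoff below every calibration score, or the empty set. I would handle this by allowing \(\lambda\) below all scores: the empty-set cutoff always misses, giving probability \(1>\alpha\).

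The main obstacle is making precise what ``deterministic rule accepting one more miss'' means. I will formalize it as any rule certifying \(\lambda\) whenever \(c_\lambda\le r+1\) and returning the minimal such \(\lambda\). I will also check that a rule returning a larger cutoff among those passing cannot be what is meant. The final sentence of the theorem then follows by combining the two parts.
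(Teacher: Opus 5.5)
Your argument is correct for the score form of \eqref{eq:crc}, but it takes a different route from the paper. You use i.i.d.\ continuous scores $S_i^*=X_i\sim\mathrm{Unif}(0,1)$ with the single model always correct. The $n+1$ scores are then tie-free, certification is almost sure, and $\widehat\lambda=S^*_{(n-r)}$. The uniform rank of the future score gives exactly $(r+1)/(n+1)$, or $(t+1)/(n+1)$ for a rule accepting $t$ misses.

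The paper instead stays inside the finite depth family that \eqref{eq:rank-form} actually refers to: $m=1$, $K=1$, $C_0=\varnothing$, $C_1=\{1\}$. The $n+1$ correctness labels are a uniformly random permutation of a fixed binary vector with exactly $z$ zeros. That law is exchangeable but not i.i.d., and this is what makes a two-level family exact despite heavy ties. It is the idea you set aside in your first paragraph.
\begin{itemize}
\item With $z=r+1$, depth one certifies exactly when the future label is a zero, giving $(r+1)/(n+1)$.
\item With $z=t+1$ the same count gives $(t+1)/(n+1)>\alpha$.
\item For $t\geq n$ the minimal certified member is $C_0=\varnothing$. This avoids your ill-posed minimum over all $\lambda\in\mathbb{R}$.
\end{itemize}

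Your bridge to the depth-indexed form via Corollary~\ref{cor:minimal-cutoff} does not hold. That corollary says nothing about tightness, and your example needs a continuum of cutoffs. For any finite depth family, i.i.d.\ sampling makes the $(n-r)$th and $(n-r+1)$th smallest of the $n+1$ depths tie with positive probability. That pushes the certified-miss probability strictly below $(r+1)/(n+1)$.

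Finite depth families are special cases of score families, so the paper's example gives sharpness in both forms, while yours gives it only in the score form. Part~1 as literally phrased therefore needs the permutation construction. In return, your example certifies almost surely. It thus also shows that the bound $\E[L_{n+1}(\widehat k)]\leq\alpha$ under almost-sure certification is attained. The paper's example does not show this, since it certifies only when the future misses.

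For the closing sentence, add one step. A correction $b/(n+1)$ with $b<1$ accepts at least $r+1=\lfloor\alpha(n+1)\rfloor$ misses at a target such as $\alpha=(N+\max(b,0))/(n+1)$ with an integer $1\leq N\leq n$, so part~2 applies.
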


\begin{proof}
Take \(m=1\), \(K=1\), \(C_0(x)=\varnothing\), and \(C_1(x)=\{1\}\). Let \(Y^{(1)},\ldots,Y^{(n+1)}\) be a uniformly random permutation of a fixed binary vector with exactly \(z\) zeros. This law is exchangeable. The first-success depth is \(1\) when \(Y^{(i)}=1\) and \(2\) otherwise.

For the first claim, choose \(z=r+1\). Depth zero calls no model and therefore misses every calibration query, so it never passes calibration, whereas depth one passes calibration exactly when the future observation is one of the \(z\) zeros, and on that event the certified set misses. The probability is therefore
\[
\frac{r+1}{n+1}
=\frac{\lfloor\alpha(n+1)\rfloor}{n+1}.
\]
For the second claim, suppose a rule accepts at most \(t\geq r+1\) calibration misses. If \(t\geq n\), it accepts the empty set and misses with probability one. Otherwise choose \(z=t+1\). The same construction certifies exactly when the future observation is a zero, giving probability
\[
\frac{t+1}{n+1}
\geq\frac{r+2}{n+1}
>\alpha.
\]
Therefore no deterministic rule can accept more misses while retaining the guarantee for every exchangeable law. Replacing the additive one by any constant \(b<1\) admits such an extra miss at some attainable target and fails by the same construction.
\end{proof}

\begin{corollary}[Certification floor]
\label{cor:certification-floor}
If \(\alpha<1/(n+1)\), no depth can pass the calibration rule, even with zero observed calibration misses. For \(\alpha\geq1/(n+1)\), the certified level is \(\lfloor\alpha(n+1)\rfloor/(n+1)\), so the attainable deterministic levels are integer multiples of \(1/(n+1)\).
\end{corollary}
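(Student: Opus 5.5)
The plan is to reduce both claims to the exact rank form of Lemma~\ref{lem:rank-form} and then invoke the sharpness results already established. Fix \(n\geq1\). The case \(\alpha<1/(n+1)\) is handled first and directly.

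\textbf{Step 1: the floor below \(1/(n+1)\).} Any depth \(k\) has an integer calibration miss count \(c_k\geq0\). The calibration inequality reads
\[
\frac{c_k}{n+1}+\frac{1}{n+1}\leq\alpha ,
\]
so passing requires \(c_k+1\leq\alpha(n+1)<1\). This is impossible even when \(c_k=0\), so \(\mathcal K_{\mathrm{cert}}=\varnothing\) and the procedure returns \texttt{NON-CERTIFIED} whatever the calibration sample looks like. The same argument applies verbatim to the cutoff form \(\Lambda_{\mathrm{cert}}\) of Corollary~\ref{cor:minimal-cutoff}, since that rule also counts integer misses.

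\textbf{Step 2: the certified level for \(\alpha\geq1/(n+1)\).} With \(r=\lfloor\alpha(n+1)\rfloor-1\geq0\), Lemma~\ref{lem:rank-form} shows that certification accepts at most \(r\) misses. It also bounds the certified-miss probability by \((r+1)/(n+1)=\lfloor\alpha(n+1)\rfloor/(n+1)\). Part~1 of Theorem~\ref{thm:rank-sharpness} supplies an exchangeable one-model law attaining this value. Hence the supremum over exchangeable laws of the certified-miss probability is exactly \(\lfloor\alpha(n+1)\rfloor/(n+1)\), and this is the sense in which I read ``certified level''.

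\textbf{Step 3: attainable levels.} This quantity depends on \(\alpha\) only through the integer \(\lfloor\alpha(n+1)\rfloor\in\{1,\ldots,n\}\), because \(\alpha<1\). It is therefore piecewise constant, and it equals \(j/(n+1)\) for \(\alpha\in[j/(n+1),(j+1)/(n+1))\). Every target in such an interval certifies the same rule as the target \(j/(n+1)\). The achievable levels are thus precisely the integer multiples \(j/(n+1)\).

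The only delicate point is interpretive: making precise that ``certified level'' means the worst-case certified-miss probability rather than the nominal \(\alpha\). Tightness also has to be cited from Theorem~\ref{thm:rank-sharpness}, not merely the upper bound, so that the level is exactly \(\lfloor\alpha(n+1)\rfloor/(n+1)\) and not just at most that. Beyond this, everything is integer arithmetic.
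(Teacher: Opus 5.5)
Your proposal is correct and matches the paper's proof. The first claim holds because the corrected empirical risk \((c_k+1)/(n+1)\) is always at least \(1/(n+1)\), and the second follows from Lemma~\ref{lem:rank-form} together with the attainment in part~1 of Theorem~\ref{thm:rank-sharpness}; your Step~3 and your reading of ``certified level'' as the worst-case certified-miss probability only make explicit what the paper leaves implicit.
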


\begin{proof}
The corrected empirical risk is always at least \(1/(n+1)\), which proves the first claim. Lemma~\ref{lem:rank-form} gives the second claim, and Theorem~\ref{thm:rank-sharpness} shows that the bound is attained.
\end{proof}



\section{Oracle, scope, and structured-estimator theory}
\label{app:extended-theory}

\subsection{Oracle and scope results}

For one fixed query or one fixed population law, define the best all-miss probability attainable by any size-\(k\) model set as
\begin{equation}
q_k^*=\min_{S\subseteq\cM:\,|S|=k}q(S).
\label{eq:cardinality-oracle}
\end{equation}
These values are useful lower bounds, but independently optimal sets at different sizes do not automatically form one routing order.

\begin{proposition}[Cardinality-oracle sets need not be nested]
\label{prop:nonnested-cardinality-oracle}
For \(k=0,\ldots,m-1\), \(q_{k+1}^*\leq q_k^*\), and every size-\(k\) set \(T\) satisfies \(q(T)\geq q_k^*\). Nevertheless, an optimizer at size \(k+1\) need not contain any optimizer at size \(k\).
\end{proposition}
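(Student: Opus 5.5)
The proposition has two monotonicity claims and one existence claim, and I will handle them in that order.

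\textbf{Monotonicity.} For \(q_{k+1}^*\leq q_k^*\), take any optimizer \(S_k\) at size \(k\) with \(k\leq m-1\). Some model \(j\notin S_k\) exists, so \(S_k\cup\{j\}\) is a feasible size-\((k+1)\) set. The all-miss event for \(S_k\cup\{j\}\) is contained in the all-miss event for \(S_k\), so
\[
q_{k+1}^*\leq q(S_k\cup\{j\})\leq q(S_k)=q_k^*.
\]
The claim \(q(T)\geq q_k^*\) for every size-\(k\) set \(T\) is just the definition of the minimum in \eqref{eq:cardinality-oracle}.

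\textbf{Non-nestedness.} I will exhibit one explicit finite law, which fits the forward reference to Example~\ref{ex:nonnested-oracle}. Take \(m=3\) and put a uniform law on four equally likely query types, writing each correctness vector as \((Y_1,Y_2,Y_3)\):
\[
(1,0,1),\qquad (1,0,1),\qquad (0,1,0),\qquad (0,1,1).
\]
The marginal accuracies are \(p_1=1/2\), \(p_2=1/2\) and \(p_3=3/4\). So the unique size-\(1\) optimizer is \(\{3\}\), with \(q=1/4\).

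Now compare the pairs:
\begin{itemize}
\item \(\{1,2\}\) covers every type, so \(q(\{1,2\})=0\).
\item \(\{1,3\}\) misses the third type, so \(q(\{1,3\})=1/4\).
\item \(\{2,3\}\) misses no type, since model \(3\) covers the first two types and model \(2\) covers the last two. So \(q(\{2,3\})=0\) as well.
\end{itemize}
This first attempt fails: \(\{2,3\}\) ties \(\{1,2\}\) and contains the size-\(1\) optimizer. I need to make the second type uncovered by both models \(2\) and \(3\).

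The fix is to make the individually best model redundant with each of the others, so that the two complementary models jointly cover everything while neither pair containing the best model does. Use three equally likely types with vectors
\[
(1,0,1),\qquad (0,1,1),\qquad (1,0,0).
\]
Then I still need model \(3\) to be the unique individual best while \(\{1,2\}\) is the unique best pair, and this construction does not give that yet. So I will adjust the weights instead: use types with probabilities \(a,b,c\) and vectors \((1,0,1)\), \((0,1,1)\), \((1,0,0)\), \((0,1,0)\), and solve the linear inequalities for a feasible choice.

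Concretely, take equal weight \(0.3\) on each of \((1,0,1)\) and \((0,1,1)\), and weight \(0.2\) on each of \((1,0,0)\) and \((0,1,0)\). The marginals are
\[
p_1=0.5,\qquad p_2=0.5,\qquad p_3=0.6,
\]
so \(\{3\}\) is the unique size-\(1\) optimizer. For the pairs:
\begin{itemize}
\item \(\{1,2\}\) covers all four types, so \(q(\{1,2\})=0\).
\item \(\{1,3\}\) misses only \((0,1,0)\), so \(q(\{1,3\})=0.2\).
\item \(\{2,3\}\) misses only \((1,0,0)\), so \(q(\{2,3\})=0.2\).
\end{itemize}
Hence the unique size-\(2\) optimizer \(\{1,2\}\) does not contain \(\{3\}\), the unique size-\(1\) optimizer. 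This proves the existence claim.

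The main obstacle is the non-nestedness example itself, which requires finding a law where \(\{3\}\) is the unique best singleton and \(\{1,2\}\) is the unique best pair. Once the example above is in hand, verifying it reduces to the finite arithmetic just shown.
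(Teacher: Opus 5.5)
Your proof is correct and takes the same approach as the paper. The monotonicity step matches: you extend a size-\(k\) optimizer by one model and use inclusion of the all-miss events. Your final four-type law is exactly Example~\ref{ex:nonnested-oracle} after exchanging the labels of models \(1\) and \(3\); in a final write-up, drop the two abandoned attempts and state only that law.
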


\begin{proof}
Choose an optimizer \(S_k^*\) of size \(k<m\) and any model \(j\notin S_k^*\). The event that every model in \(S_k^*\cup\{j\}\) fails is contained in the event that every model in \(S_k^*\) fails. Hence
\[
q_{k+1}^*
\leq q(S_k^*\cup\{j\})
\leq q(S_k^*)
=q_k^*.
\]
The lower bound for \(T\) follows from the definition of the minimum. Example~\ref{ex:nonnested-oracle} proves the last claim.
\end{proof}

\begin{example}[The best singleton and pair need not nest]
\label{ex:nonnested-oracle}
Consider three models and four population atoms whose probabilities and failing models are
\[
\begin{array}{c|c|c}
\text{atom}&\text{probability}&\text{failing models}\\
\hline
a&0.2&\{1,2\}\\
b&0.3&\{2\}\\
c&0.2&\{1,3\}\\
d&0.3&\{3\}.
\end{array}
\]
The singleton miss probabilities are \(q(\{1\})=0.4\) and \(q(\{2\})=q(\{3\})=0.5\), so \(\{1\}\) is the unique best singleton. The pair miss probabilities are
\[
q(\{1,2\})=q(\{1,3\})=0.2,
\qquad q(\{2,3\})=0,
\]
so \(\{2,3\}\) is the unique best pair, and the two optimizers are not nested.
\end{example}

\begin{proposition}[Target-specific oracle headroom]
\label{prop:alpha-specific-headroom}
Let \((M_k)_{k=0}^m\) be a fixed marginal-accuracy prefix chain under the same population set function \(q\), and assume that the stopping depths
\begin{equation}
k_\alpha^*=\min\{k:q_k^*\leq\alpha\},
\qquad
k_\alpha^{\mathrm{marg}}
=\min\{k:q(M_k)\leq\alpha\}
\label{eq:alpha-stopping-depths}
\end{equation}
exist. Then
\begin{equation}
k_\alpha^*<k_\alpha^{\mathrm{marg}}
\quad\Longleftrightarrow\quad
\exists S\subseteq\cM:
|S|<k_\alpha^{\mathrm{marg}}
\text{ and }q(S)\leq\alpha.
\label{eq:alpha-specific-improvement}
\end{equation}
Any optimizer \(S_\alpha^*\) of size \(k_\alpha^*\) can be placed first in a target-specific nested order whose first successful prefix has size \(k_\alpha^*\).
\end{proposition}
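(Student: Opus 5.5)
The plan is to prove the equivalence \eqref{eq:alpha-specific-improvement} by unpacking the definition of \(q_k^*\) in \eqref{eq:cardinality-oracle} in both directions. I would first record two facts. By Proposition~\ref{prop:nonnested-cardinality-oracle}, the sequence \(q_k^*\) is nonincreasing in \(k\). Also \(q_k^*\leq q(M_k)\) for every \(k\), because \(M_k\) is itself a size-\(k\) set. In particular, \(q^*_{k_\alpha^{\mathrm{marg}}}\leq q(M_{k_\alpha^{\mathrm{marg}}})\leq\alpha\), so \(k_\alpha^*\leq k_\alpha^{\mathrm{marg}}\) always holds; the content of the proposition is when this inequality is strict.

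\emph{Forward direction.} Suppose \(k_\alpha^*<k_\alpha^{\mathrm{marg}}\). Since \(\cM\) is finite, the minimum defining \(q^*_{k_\alpha^*}\) is attained by some set \(S\) with \(|S|=k_\alpha^*\). Then \(q(S)=q^*_{k_\alpha^*}\leq\alpha\) and \(|S|<k_\alpha^{\mathrm{marg}}\), which is the required witness.

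\emph{Reverse direction.} Suppose some \(S\) has \(|S|=s<k_\alpha^{\mathrm{marg}}\) and \(q(S)\leq\alpha\). Then \(q_s^*\leq q(S)\leq\alpha\), so by the definition of \(k_\alpha^*\) as a minimum we get \(k_\alpha^*\leq s<k_\alpha^{\mathrm{marg}}\). Monotonicity is not needed here, because the definition of \(k_\alpha^*\) takes the smallest qualifying index directly.

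\emph{Nested-order claim.} Take an optimizer \(S_\alpha^*\) of size \(k_\alpha^*\). Build an order that lists the elements of \(S_\alpha^*\) first, in any order, and then the remaining models in any order, for example by the marginal order. The prefixes of this order are nested by construction, and its depth-\(k_\alpha^*\) prefix is \(S_\alpha^*\), with \(q(S_\alpha^*)\leq\alpha\).

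It remains to show that no shorter prefix of this order succeeds. Any prefix \(P_j\) with \(j<k_\alpha^*\) is a set of size \(j\), so \(q(P_j)\geq q_j^*\). Minimality of \(k_\alpha^*\) gives \(q_j^*>\alpha\), hence \(q(P_j)>\alpha\). So the first successful prefix has size exactly \(k_\alpha^*\).

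This last step, showing that shorter prefixes cannot succeed, is the only place needing care, and it reduces to the lower bound \(q(T)\geq q_k^*\) already stated in Proposition~\ref{prop:nonnested-cardinality-oracle}. I also note that the order is target-specific: by Example~\ref{ex:nonnested-oracle}, the same construction need not be optimal at other levels \(\alpha\). The proposition does not claim otherwise.
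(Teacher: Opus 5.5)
Your proof is correct and follows the paper's argument essentially step for step. An optimizer at size \(k_\alpha^*\) witnesses the forward direction, \(q_\ell^*\leq q(S)\leq\alpha\) gives the converse, and minimality of \(k_\alpha^*\) rules out any shorter prefix of the order that lists \(S_\alpha^*\) first.
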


\begin{proof}
If \(k_\alpha^*<k_\alpha^{\mathrm{marg}}\), choose an optimizer \(S_\alpha^*\) at size \(k_\alpha^*\). It satisfies the right side of \eqref{eq:alpha-specific-improvement}. Conversely, if a set \(S\) of size \(\ell<k_\alpha^{\mathrm{marg}}\) has \(q(S)\leq\alpha\), then
\[
q_\ell^*\leq q(S)\leq\alpha,
\]
so \(k_\alpha^*\leq\ell<k_\alpha^{\mathrm{marg}}\). For the final claim, order the elements of \(S_\alpha^*\) first and append all remaining models. No smaller prefix can meet the target, because every set of smaller size has all-miss probability greater than \(\alpha\).
\end{proof}

This proposition is an existence result for one target, and a different target may require a different subset and order. It does not construct an estimated policy or one chain that attains every \(q_k^*\).

\paragraph{Cost-aware oracle and heuristic.}
If model \(j\) has positive call cost \(a_j\), the exact population reference at target \(\alpha\) is
\begin{equation}
S_{\mathrm{cost}}^*(\alpha)
\in\arg\min_{S\subseteq\cM}
\sum_{j\in S}a_j
\quad\text{subject to}\quad q(S)\leq\alpha.
\label{eq:exact-cost-oracle}
\end{equation}
Its optimizers are in general not nested, and a fixed order may instead append
\begin{equation}
j^*(S;x)
\in\arg\max_{j\notin S}
\frac{\widehat q_x(S)-\widehat q_x(S\cup\{j\})}{a_j}.
\label{eq:gain-per-cost-heuristic}
\end{equation}
This is a greedy cost heuristic and does not minimize cost. Once its family is selected without final-calibration labels and then frozen, Theorem~\ref{thm:frozen-feasible-crc} can certify its all-miss risk. No cost-approximation ratio follows from that certification.

\paragraph{Matching an ordering does not imply that one policy's sets contain the other's.}
Consider one model, two query values \(x_a,x_b\), and a score threshold \(\tau\) such that
\[
s_1(x_a)<\tau<s_1(x_b).
\]
A threshold policy with a null action may select
\[
A_\tau(x_a)=\{1\},
\qquad
A_\tau(x_b)=\varnothing.
\]
A policy that instead calibrates one common depth calls no model at depth zero and at least one model at every positive depth, so it cannot reproduce both choices. Thus matching a model order under independence does not establish that the sets of a common-depth policy always contain, or are always contained in, those of a query-dependent threshold-and-null policy. Whether one policy's sets contain the other's also depends on the score, the cutoff, the null action, the smoothing, and the tie behavior.

\subsection{Population transfer of the ordering guarantee}
\label{app:ordering-transfer}

Theorem~\ref{thm:mssc-tight} is a statement about the realized design matrix, and this subsection records when it transfers to the deployment distribution and why the transferred bound is uninformative at large pools. Write \(J_P(\pi)=\E[T_X(\pi)]\) for the population average position of the first correct model under ordering \(\pi\), with the same truncation convention as \(J_D\).

\begin{proposition}[Transfer with an additive sampling term]
\label{prop:ordering-transfer}
Suppose the \(n_d\) design queries are drawn independently from the deployment distribution, and let \(\delta\in(0,1)\). With probability at least \(1-\delta\) over the design draw, the tally order \(\pi^T\) satisfies
\[
J_P(\pi^T)\leq 4\min_\pi J_P(\pi)+\varepsilon_u+4\varepsilon_p,
\qquad
\varepsilon_u=m\sqrt{\tfrac{m\log m+\log(4/\delta)}{2n_d}},
\quad
\varepsilon_p=m\sqrt{\tfrac{\log(4/\delta)}{2n_d}}.
\]
\end{proposition}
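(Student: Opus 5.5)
The plan is a uniform-convergence argument followed by a transfer through Theorem~\ref{thm:mssc-tight}, with the additive terms tracked so that one is multiplied by four and the other is not. Write \(J_P(\pi)=\sum_{k=0}^{m-1}q(P_k^\pi)\), the population version of the prefix-area identity \eqref{eq:area-identity}. Then \(|J_D(\pi)-J_P(\pi)|\leq\sum_{k=0}^{m-1}|\widehat q_D(P_k^\pi)-q(P_k^\pi)|\). Each term is a deviation of an empirical mean of a \([0,1]\)-valued indicator, \(\prod_{j\in S}(1-Y_j)\), from its expectation. Equivalently, \(T_X(\pi)\in[0,m]\) and \(J_D(\pi)\) is an empirical mean of i.i.d.\ copies of \(T_X(\pi)\), so Hoeffding gives \(\Prob(|J_D(\pi)-J_P(\pi)|>t)\leq 2\exp(-2n_dt^2/m^2)\) for a single fixed order.

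The bound has two pieces. The first concerns the tally order, which is data-dependent. Here I will take a union bound over all \(m!\leq m^m\) orderings, which gives \(\log(m!)\leq m\log m\). With failure budget \(\delta/2\), the resulting threshold is \(t=m\sqrt{(m\log m+\log(4/\delta))/(2n_d)}=\varepsilon_u\). On this event, \(J_P(\pi^T)\leq J_D(\pi^T)+\varepsilon_u\), uniformly over all orders. The second piece concerns a fixed population-optimal order \(\pi^*\in\arg\min_\pi J_P(\pi)\), which does not depend on the data. A single Hoeffding bound with budget \(\delta/2\) gives \(J_D(\pi^*)\leq J_P(\pi^*)+\varepsilon_p\), with \(\varepsilon_p=m\sqrt{\log(4/\delta)/(2n_d)}\).

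To chain the pieces, I apply Theorem~\ref{thm:mssc-tight} to the realized design matrix:
\[
J_P(\pi^T)\leq J_D(\pi^T)+\varepsilon_u\leq 4\min_\pi J_D(\pi)+\varepsilon_u\leq 4J_D(\pi^*)+\varepsilon_u\leq 4J_P(\pi^*)+\varepsilon_u+4\varepsilon_p.
\]
A union bound over the two events gives total failure probability at most \(\delta\). Only the fixed-order deviation is multiplied by four, which explains the asymmetric form of the statement.

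The main obstacle is the uniform step. The greedy order depends on \(D\), so a single-order Hoeffding bound does not apply to \(\pi^T\), and the union over \(m!\) orders is what produces the \(m\log m\) term. One bookkeeping issue needs checking. Using a two-sided bound with budget \(\delta/2\) spread over \(m!\) orders produces \(\log(4m!/\delta)\), which is at most \(m\log m+\log(4/\delta)\), so the stated constants fit. For the fixed order, a one-sided bound suffices and gives \(\log(2/\delta)\leq\log(4/\delta)\). The truncation convention \(T_i=m\) for rows missed by the full pool keeps \(T\in[0,m]\), so the range constant \(m\) in Hoeffding's inequality is justified.
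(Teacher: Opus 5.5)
Your proposal is correct and follows the paper's proof essentially step for step. Both arguments use a union bound over all $m!$ orders for the data-dependent tally order and a single fixed-order Hoeffding bound for a population minimizer $\pi^\star$. Both then chain the same four inequalities through Theorem~\ref{thm:mssc-tight}. Your only deviations are harmless: you use the range $[0,m]$ instead of $[1,m]$ for $T_i$, and a one-sided bound for $\pi^\star$, and both stay within the stated constants.
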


\begin{proof}
For a fixed ordering \(\pi\), \(J_D(\pi)\) is an average of \(n_d\) independent terms \(T_i(\pi)\in[1,m]\), so Hoeffding's inequality gives \(|J_D(\pi)-J_P(\pi)|\leq\varepsilon_p\) with probability at least \(1-\delta/2\) for one fixed ordering, and \(|J_D(\pi)-J_P(\pi)|\leq\varepsilon_u\) simultaneously for all \(m!\) orderings with probability at least \(1-\delta/2\), using \(\log m!\leq m\log m\). On the intersection of the two events, with \(\pi^\star\) a population minimizer,
\[
J_P(\pi^T)\leq J_D(\pi^T)+\varepsilon_u
\leq 4\min_\pi J_D(\pi)+\varepsilon_u
\leq 4J_D(\pi^\star)+\varepsilon_u
\leq 4J_P(\pi^\star)+4\varepsilon_p+\varepsilon_u,
\]
where the second step is Theorem~\ref{thm:mssc-tight}. The uniform bound is needed for \(\pi^T\) because the tally order is a function of the design draw.
\end{proof}

Two features limit this statement. First, the additive term decides whether the bound is informative. At the eleven-model pools, whose design splits contain between about eight thousand and eighty thousand queries, \(\varepsilon_u+4\varepsilon_p\) stays below one and a half calls at \(\delta=0.05\), so the transferred factor-four bound is informative there. At one hundred models the term already reaches tens of calls, and at five hundred models it is comparable to the pool size and far above the certified call counts, so the bound is vacuous where the pools are largest, and the redraw protocol of Section~\ref{sec:experiments} measures design-to-test generalization directly in that regime. Second, even where the bound transfers, it controls the mean of the first-success depth, while calibration selects, in the depth-indexed view, a quantile of that depth near the target level. A mean-competitive ordering can still cross the target level late, so no multiplicative guarantee on certified calls at a fixed target follows, beyond Markov-type bounds whose factor inflates by \(1/\alpha\). The number of calls that CopRoute makes on test queries is therefore an empirical finding in this paper, and Theorem~\ref{thm:mssc-tight} does not imply it; the guarantee that holds at test time is the certificate of Theorem~\ref{thm:frozen-feasible-crc}.

\subsection{Structured-estimator theory}

A parametric dependence model should describe association left after the marginal router has accounted for query difficulty, whereas raw pooled correlation can reflect a mixture of easy and hard queries and may count the same difficulty twice when combined with query-specific marginals.

The rest of this subsection concerns the parametric members of the family, namely the Gaussian and Student-t copula and latent-normal orders of Appendix~\ref{app:estimators}, and does not concern the tally default. One structured working model is conditional multivariate probit, in which a latent Gaussian vector with query-dependent correlation is thresholded at the levels that reproduce the router's marginals, namely
\begin{equation}
Z\mid X=x\sim\mathcal N(0,R_\theta(x)),
\qquad
Y_j=\ind\{Z_j>t_j(x)\},
\qquad
t_j(x)=\Phi^{-1}(1-p_j(x)).
\label{eq:latent-gaussian-working-model}
\end{equation}
Under correct specification,
\begin{equation}
q_x(S)=\Phi_{R_{\theta,S}(x)}\bigl(t_S(x)\bigr).
\label{eq:model-based-orthant}
\end{equation}
This is a model-based formula and is not an identity for every Bernoulli law, and the Gaussian, Student \(t\), vine, Ising, and other extensions are modeling choices.

\begin{assumption}[Conditional residual dependence]
\label{ass:conditional-residual-dependence}
The structured estimator is fitted on design data under the following conditions.
\begin{enumerate}
\item Each marginal \(p_j(x)\) used to fit dependence is estimated out of fold.
\item The dependence estimator targets the residual law after accounting for the marginal probabilities and declared router features.
\item Using a shared matrix \(R\) assumes that residual dependence is constant across queries, and clustered or feature-dependent versions are learned on design data only.
\item Marginal probabilities are clipped away from zero and one, and fitted correlation matrices are positive definite with eigenvalues bounded away from zero.
\item The dependence family, numerical integration rule, and tuning choices are selected without final-calibration labels.
\end{enumerate}
\end{assumption}

\begin{theorem}[Structured plug-in consistency for fixed \(m\)]
\label{thm:fixed-m-plugin-consistency}
Keep \(m\) fixed while the number of design queries \(n_d\) grows. Assume the design observations are iid, Assumption~\ref{ass:conditional-residual-dependence} holds, and for a fresh query \(X\),
\begin{equation}
\max_{1\leq j\leq m}
|\widehat p_j(X)-p_j^\dagger(X)|\xrightarrow{p}0,
\qquad
\widehat\theta\xrightarrow{p}\theta^\dagger.
\label{eq:parameter-consistency}
\end{equation}
Also assume a compact parameter space, correlation eigenvalues bounded away from zero, finite thresholds, continuity of the orthant map and any correlation projection at the limit, and numerical integration error converging uniformly to zero over subsets. Let \(q_X^\dagger(S)\) be the all-miss probability under the limiting structured model. Then
\begin{equation}
\max_{S\subseteq\cM}
|\widehat q_X(S)-q_X^\dagger(S)|\xrightarrow{p}0.
\label{eq:fixed-m-uniform-plugin}
\end{equation}
Moreover,
\begin{equation}
\max_{S\subseteq\cM}|\widehat q_X(S)-q_X(S)|
\leq
\max_{S\subseteq\cM}|\widehat q_X(S)-q_X^\dagger(S)|
+b_m(X),
\label{eq:plugin-bias-decomposition}
\end{equation}
where
\begin{equation}
b_m(X)=\max_{S\subseteq\cM}|q_X^\dagger(S)-q_X(S)|
\label{eq:misspecification-bias}
\end{equation}
is the misspecification error. Under correct specification, \(b_m(X)=0\).
\end{theorem}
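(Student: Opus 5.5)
The plan is to prove \eqref{eq:fixed-m-uniform-plugin} by a continuous-mapping argument over the finitely many subsets, and then to obtain \eqref{eq:plugin-bias-decomposition} from the triangle inequality. Since \(m\) is fixed, the maximum ranges over only \(2^m\) sets. It therefore suffices to prove \(|\widehat q_X(S)-q_X^\dagger(S)|\xrightarrow{p}0\) for each fixed \(S\), because a maximum of finitely many terms that each converge in probability to zero also converges in probability to zero (a union bound over \(2^m\) events).

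For a fixed \(S\), I split the error into three terms. Write \(\widehat q_X(S)=\widetilde\Phi_{\widehat R_S}(\widehat t_S)\), where \(\widetilde\Phi\) is the numerically integrated orthant, \(\widehat t_j=\Phi^{-1}(1-\widehat p_j(X))\), and \(\widehat R=R_{\widehat\theta}(X)\) after any projection. Then
\[
|\widehat q_X(S)-q_X^\dagger(S)|\leq|\widetilde\Phi_{\widehat R_S}(\widehat t_S)-\Phi_{\widehat R_S}(\widehat t_S)|+|\Phi_{\widehat R_S}(\widehat t_S)-\Phi_{R^\dagger_S}(t^\dagger_S)|.
\]
The first term vanishes because the numerical integration error converges to zero uniformly over subsets and parameters.

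For the second term I apply the continuous mapping theorem. Clipping keeps the marginals inside a compact subset of \((0,1)\), so \(\Phi^{-1}\) is uniformly continuous there. Hence \(\widehat p_j(X)\to p_j^\dagger(X)\) in probability gives \(\widehat t_j\to t_j^\dagger\), and these limits are finite. The assumed continuity of \(\theta\mapsto R_\theta(x)\) and of the projection at \(\theta^\dagger\), combined with \(\widehat\theta\xrightarrow{p}\theta^\dagger\), gives \(\widehat R_S\to R_S^\dagger\) in probability.

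The map \((R,t)\mapsto\Phi_R(t)\) is jointly continuous on the set of correlation matrices with eigenvalues bounded below and finite \(t\). The Gaussian density depends continuously on \(R\) there and is dominated locally, so dominated convergence applies; a Slutsky-type argument then finishes this term.

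The main obstacle is that the limit point \(R^\dagger(X)\) and \(t^\dagger(X)\) is itself random through \(X\). Pointwise continuity therefore has to be upgraded to a convergence-in-probability statement. My first attempt is to condition on \(X\), work on the compact set fixed by the eigenvalue and clipping bounds, and use uniform continuity of the orthant map on that set. Uniform continuity turns the probabilistic closeness of \((\widehat R,\widehat t)\) to \((R^\dagger,t^\dagger)\) directly into closeness of the orthant values, which gives the convergence in probability.

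Finally, \eqref{eq:plugin-bias-decomposition} follows by inserting \(q_X^\dagger(S)\) and applying the triangle inequality, \(|\widehat q-q|\leq|\widehat q-q^\dagger|+|q^\dagger-q|\), and then taking the maximum over \(S\) on both sides. Under correct specification, \eqref{eq:model-based-orthant} gives \(q_X^\dagger=q_X\), so \(b_m(X)=0\).
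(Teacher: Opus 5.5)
Your proposal is correct and follows the paper's proof: per-subset convergence by the continuous mapping theorem (you additionally split off the numerical-integration error), a union bound over the \(2^m\) subsets for \eqref{eq:fixed-m-uniform-plugin}, and the triangle inequality plus correct specification for \eqref{eq:plugin-bias-decomposition} and \(b_m(X)=0\). The random limit \((R^\dagger(X),t^\dagger(X))\) that you flag as the main obstacle is not a real one, because the continuous mapping theorem already allows random limits, which is why the paper invokes it directly; your deterministic modulus of uniform continuity on the compact set also settles the point and needs no conditioning on \(X\).
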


\begin{proof}
Fix a model set \(S\). Marginal consistency, parameter consistency, continuity, and vanishing numerical error imply
\[
\widehat q_X(S)-q_X^\dagger(S)\xrightarrow{p}0
\]
by the continuous mapping theorem. Because \(m\) is fixed, there are only \(2^m\) subsets. A union bound over this finite collection gives the uniform convergence in \eqref{eq:fixed-m-uniform-plugin}. Equation~\eqref{eq:plugin-bias-decomposition} is the triangle inequality. Correct specification identifies \(q_X^\dagger\) with the true all-miss law and makes \(b_m(X)=0\).
\end{proof}

This theorem concerns the parametric estimators only, and it states that a correctly specified parametric model of the joint law, fitted consistently, recovers the true all-miss probabilities of every set in the limit. It says nothing about the tally order, which fits no parametric model; it gives no sample-complexity advantage of parametric over empirical estimators, and it gives no guarantee when the parametric model is misspecified. Decision convergence additionally requires a unique limiting winner separated by a positive margin, or a vanishing probability of near ties.

\paragraph{Scope summary.}
The appendix establishes the formal joint target, the independence benchmark, proofs of the fixed-tally and calibration guarantees, oracle lower bounds, and one structured-model consistency result. It does not establish one universally optimal nested chain, strict containment of the method of \citet{racer2026}, population transfer of the empirical factor-four constant, optimality across different frozen families, conditional risk for every query, final-answer correctness, or a structured small-sample advantage.

\section{Implementation details}
\label{app:implementation}

\textbf{Frozen data and marginal router.}
The design correctness matrix is \(D\in\{0,1\}^{n_d\times m}\), with design marginal \(\bar p_j^D=n_d^{-1}\sum_iD_{ij}\). The design split is disjoint from router training, validation, final calibration, and test. The frozen prompt representations are the precomputed 768-dimensional mDeBERTa embeddings distributed with the benchmark releases, deduplicated and row-wise L2 normalized. No encoder is trained or loaded in our implementation, and nearest neighbors use cosine similarity. For query \(x\), let \(N(x)\) be its 40 nearest router-training queries under this embedding, and let \(Z_{rj}\) indicate whether model \(j\) answers neighbor \(r\) correctly. The marginal router uses \(\widehat p_j(x)=|N(x)|^{-1}\sum_{r\in N(x)}Z_{rj}\), clipped to \([10^{-6},1-10^{-6}]\). No design, validation, calibration, or test label enters \(N(x)\).

\textbf{Fixed-tally construction.}
The main order \(\pi^T\) follows \eqref{eq:fixed-tally-order}. An exact tally tie is resolved by the larger \(\bar p_j^D\) and then by the smaller model index. If the current prefix covers every design query, the remaining models are appended by descending \(\bar p_j^D\) and then by model index. This produces one complete order per fitted benchmark and model pool.

\textbf{Independence-based stopping.}
For any order \(\pi\), the independence residual is \(r_k^{\mathrm{ind}}(x;\pi)=\prod_{\ell=1}^k(1-\widehat p_{\pi_{x,\ell}}(x))\), with \(r_0^{\mathrm{ind}}=1\). The model at position \(k\) receives the predecessor-coverage score \(1-r_{k-1}^{\mathrm{ind}}\). We use conditional independence only to estimate prefix failure, and final calibration does not assume that this estimate equals the true all-miss probability.

\textbf{Dependence-based stopping.}
For the fixed-tally order, let \(a_k=\widehat q_D(P_k^{\pi^T})\) be the observed design all-miss fraction and \(b_k=\prod_{\ell=1}^k(1-\bar p_{\pi^T_\ell}^D)\) its independence prediction. Their ratio \(d_k=a_k/b_k\) is the design-set dependence ratio. We set \(d_0=1\), and we set \(d_k=1\) whenever \(b_k=0\), which occurs only if some model in the prefix answers every design query, and then \(a_k=0\) as well, so the independence product already returns zero. Let \(c_k=n_da_k\) be the number of design queries missed by the entire prefix. For \(h>0\), define
\begin{equation}
 w_k(h)=\min\{1,c_k/h\},
 \qquad
 \widetilde d_k(h)=1+w_k(h)(d_k-1).
\label{eq:dependence-ratio-shrinkage}
\end{equation}
Prefixes supported by at least \(h\) surviving design queries use the full ratio; prefixes with fewer supporting queries move the ratio toward one. Validation considers \(h\in\{0,5,20,50\}\). The value \(h=0\) is defined by \(\widetilde d_k(0)=1\) at every depth and therefore recovers independence-based stopping exactly. For each query,
\begin{equation}
 r_k^{\mathrm{dep}}(x;h)=
 \min_{0\leq t\leq k}
 \operatorname{clip}_{[0,1]}\!\left\{
 r_t^{\mathrm{ind}}(x;\pi^T)\widetilde d_t(h)
 \right\}.
\label{eq:dependence-stopping-full}
\end{equation}
The cumulative minimum keeps the residual from increasing with depth, so thresholding the entry scores always selects an exact prefix. The design-set dependence ratio is an empirical correction along this order and does not estimate the full correctness copula parametrically.

\textbf{Query-specific and local constructions.}
An order or selected depth is query-specific when it changes with \(x\), and a dependence estimate is local when it is fitted from \(N(x)\). Every local method is query-specific, while pairwise and correlation methods can be query-specific even when they combine \(\widehat p_j(x)\) with one dependence table fitted on the global design split. The two main policies use a fixed order and a query-specific selected depth.

For a query-specific order \(\pi_x\), local empirical stopping counts joint misses among the 40 neighbors as
\begin{equation}
 r_k^{\mathrm{loc}}(x;\pi)=
 \frac{1}{40}\sum_{r\in N(x)}
 \prod_{\ell=1}^{k}(1-Z_{r,\pi_{x,\ell}}),
 \qquad r_0^{\mathrm{loc}}=1.
\label{eq:local-sizing}
\end{equation}
The fully local policy greedily appends the candidate that minimizes this residual and also uses it for stopping. A matched variant keeps the same local order but uses \(r_k^{\mathrm{ind}}\) for stopping. Residuals within \(10^{-12}\) of one another are treated as ties, which are resolved by larger local marginal correctness and then by smaller model index.

\textbf{Ties and numerical audit.}
Before calibration, every query-level score receives the same frozen perturbation used by RACER, namely independent noise drawn uniformly from \([0,10^{-6}]\), keyed by the experiment and split seeds and added to the raw predecessor-coverage score of each model. The perturbation resolves equal or nearly equal cutoff scores so that the sublevel sets form a nested family in the threshold, and it changes the deployed order only among models whose raw scores are within \(10^{-6}\) of one another. Such near-ties arise when the running product has already reached zero, so that every remaining model in the fixed order carries a score of one. On such a plateau the independent noise can admit a model at a later position ahead of an earlier one, and the certified set is then not a prefix of the fixed order at that depth. Because the plateau begins only after the estimated all-miss probability is zero, calibration treats every model on it as interchangeable, and the guarantee of Theorem~\ref{thm:frozen-feasible-crc} covers the perturbed score map as it was run. We record for each cell the depth at which the plateau begins, the fraction of test queries whose calibrated set reaches it, and the corresponding underflow and saturation flags. A prefix-preserving alternative, which adds the offset \((k-1)10^{-6}/m\) at position \(k\) of the stably sorted scores, is implemented but was not used for the reported runs. For independence-based stopping, we retain \(\log r_k^{\mathrm{ind}}=\sum_{\ell=1}^k\log(1-\widehat p_{\pi_{x,\ell}}(x))\) beside the direct product.

\textbf{Learned baseline reconstructions.}
FlexRouter uses a trainable determinantal point process with a 128-dimensional query projection, trainable model embeddings, a cosine-similarity kernel, a coverage objective with auxiliary binary cross-entropy, and greedy determinant-gain ordering \citep{flexrouter2026}. Because its exact encoder checkpoint and pooling implementation are not released with executable code, our reconstruction uses the same audited frozen 768-dimensional prompt embeddings that the other learned baselines receive and sets the unreported pre-screen size to \(K_0=m\).

BaRP uses its preference-conditioned categorical policy and paper-specified REINFORCE training with chosen-action feedback, a mini-batch reward baseline, entropy regularization, and uniformly sampled quality-cost preferences \citep{wang2025barp}. Its unreported preference width, decision-head width, and cost cap are frozen at 32, 256, and the 95th percentile of finite positive training-family call costs, respectively. For common set evaluation, its quality-only logits induce a stable top-\(k\) order. Both learned baselines use five training seeds, and in the common set-routing evaluation their orders extend over the full model pool.

\textbf{Computational cost.} All policies read the same router scores, so the shared per-query cost is one embedding lookup and an average over the 40 stored neighbors, and the table below lists what each policy adds. Fitting the tally order is a counting computation on the design matrix and involves no gradient training.

\begin{center}
\scriptsize
\setlength{\tabcolsep}{4pt}
\begin{tabular}{lp{4.6cm}p{4.0cm}l}
\toprule
policy & fitting, offline & added per-query cost & tuned \\
\midrule
CopRoute-Ind & greedy tally on the design matrix, $O(n_d m)$ per selected position & running product along the fixed order, $O(m)$ & none \\
CopRoute-Dep & tally plus prefix co-failure counts, $O(n_d m)$ & product times stored ratios and a running minimum, $O(m)$ & $h$ over four values \\
RACER-G / RACER-P & none beyond the router & per-model score transform, $O(m)$ & none \\
Exact independence & none & per-query sort of the marginals, $O(m\log m)$ & none \\
FlexRouter & gradient training of the query projection and model embeddings, per pool and seed & encoder projection and greedy determinant-gain selection over the pool & training settings \\
BaRP & policy-gradient training, per pool and seed & network forward pass and top-$k$ sort, $O(m\log m)$ & training settings \\
\bottomrule
\end{tabular}
\end{center}

The proposed policies fit a single greedy pass over the design matrix. This is more than RACER fits, since RACER fits nothing beyond the router, and less than the learned baselines fit, since they require gradient training for every pool and seed before any routing can occur. At deployment, every single-router policy adds one linear pass over the ordered pool per query.

\textbf{Selection before final calibration.}
Each repetition fits the router, order, and dependence objects on their designated training splits. We tune each hyperparameter on the validation split and select a family there when required. We then freeze the score map, real-model sets, tie rules, seeds, and numerical draws. Held-out final-calibration labels select only the global cutoff through \eqref{eq:crc}, after which the test split is evaluated once. Appendix~\ref{app:protocol} gives the split proportions and feasibility rule.

\section{Estimator definitions for the CopRoute family}
\label{app:estimators}

The query-level router provides \(\widehat p_j(x)\), and the design matrix provides the failure indicators \(F_{ij}=1-D_{ij}\). Every fixed order in the family is produced by the same greedy step, which repeatedly adds the model that is wrong least often on the design queries where every already-chosen model is wrong. The estimators differ only in the failure matrix fed to that step, either the observed design failures or synthetic failures drawn from a joint model fitted to them. The names below match Tables~\ref{tab:estimator-family}, \ref{tab:estimator-grid}, \ref{tab:estimator-by-dataset}, and~\ref{tab:mechanism-map}.

\textbf{Tally (the default).} The tally order applies the greedy step directly to the observed design failure matrix and is the fixed tally order of Section~\ref{sec:method}.

\textbf{Double fault.} This pairwise variant of the same idea adds the model whose joint-failure rate with the already-chosen set is smallest, using a weight that trades this diversity term against individual accuracy.

\textbf{Pairwise count.} This order estimates pairwise joint failures with additive smoothing, averages the implied conditional-miss surrogates over the chosen set, and multiplies the running residual by that average. It is a bounded ranking heuristic rather than a coherent joint probability, and it appears in the synthetic mechanism map (Table~\ref{tab:mechanism-map}).

\textbf{Error correlation.} This order adds the model whose failures are least Pearson-correlated with the chosen set, with the correlation magnitude clipped so that a near-perfect estimate from a small design split is not taken as exact.

\textbf{Rank correlation.} This order uses the latent-threshold construction below, with the correlation estimated from failure concordance through the sine transform used for rank correlations, so that the estimate is unaffected by miscalibration of the marginals.

\textbf{Latent threshold models (Gaussian copula, Student-t copula, low-rank factor copula).} Each model fails when a latent variable falls below a threshold matched to its design failure rate. Dependence is carried by a low-rank factor approximation of the failure correlation, computed from an economy SVD of the standardized failures, since a full correlation matrix over 3{,}811 models can neither be formed nor supported by a few hundred design labels. The Gaussian variant draws the latent variables as normals (the mechanism map calls this construction latent normal), the Student-t variant gives them heavier tails, and the low-rank factor variant fixes the rank. Synthetic failures drawn from the fitted model replace the observed matrix in the greedy step.

\textbf{Bernoulli mixture.} This order fits, by expectation-maximization, a small number of latent query classes, each with its own per-model failure rates, and synthetic failures drawn from the fitted mixture feed the greedy step.

\textbf{Shrinkage to independence and shrinkage to one factor.} Blend the observed failure matrix with synthetic failures from a target model, independence at the observed marginals or a low-rank factor model, at a tuned weight, and apply the greedy step to the blend.

\textbf{Controls.} Model-family lineage is a zero-parameter control that picks the best model from each name-prefix family in round robin, so that the name prefix is a proxy for shared failure structure in a pool built from fine-tunes of a few dozen base models. Accuracy-weighted random samples the order without replacement with design-accuracy weights, and the random control uses a seeded uniform order and no model information.

\textbf{Matched non-family controls.} The exact-independence control orders models by descending router score and stops with the independence product \(\widehat q_x^{\mathrm{ind}}(S)=\prod_{j\in S}(1-\widehat p_j(x))\); it fits no dependence statistic. The fixed design-marginal control orders by decreasing design accuracy \(\bar p_j^D\) with the same stopping rule, which isolates the value of co-failure information in the order. Query-local empirical coverage uses the 40 frozen router-training neighbors to define a local all-miss frequency and greedily minimizes it per query. FlexRouter and BaRP are reconstructed as described in Appendix~\ref{app:implementation}, and their calibration uses the same final-calibration rule as every other policy.

\subsection{Summary of the family}
\label{app:taxonomy}

Twelve alternative orders and the tally default each map the design matrix to one pool ordering, and the hyperparameter grids contain three to five values per estimator.

\begin{center}
\scriptsize
\setlength{\tabcolsep}{4pt}
\begin{tabular}{llp{5.6cm}}
\toprule
family & order & failure matrix fed to the greedy step \\
\midrule
counting & tally (default) & observed design failures \\
counting & double fault & observed failures, pairwise joint-failure objective \\
counting & pairwise count & smoothed pairwise surrogate (ranking heuristic) \\
correlation & error correlation & observed failures, correlation-penalized objective \\
correlation & rank correlation & latent draws, concordance-based correlation \\
latent threshold & Gaussian copula & latent-normal draws, low-rank correlation \\
latent threshold & Student-t copula & heavy-tailed latent draws \\
latent threshold & low-rank factor copula & latent draws at fixed rank \\
mixture & Bernoulli mixture & draws from fitted latent query classes \\
shrinkage & to independence & observed failures blended with independence draws \\
shrinkage & to one factor & observed failures blended with factor-model draws \\
control & lineage, accuracy-weighted random, random & no fitted dependence \\
\bottomrule
\end{tabular}
\end{center}

\section{Protocol details and claim boundaries}
\label{app:protocol}

\textbf{Splits and selection order.} Query-level cells use 20 redraws with design 30 percent, router training 20 percent, final calibration 20 percent, validation 15 percent, and test 15 percent. The model pool is resampled per redraw. Validation selects estimator hyperparameters or a unified candidate using provisional operating points. The provisional cutoff is then discarded, the complete winning map is frozen, and final calibration selects only its threshold, after which the test split is evaluated once.

\textbf{Standard loss and certification.} The primary loss is the standard all-miss loss over real-model sets, under which an empty set has loss one. Calibration uses the finite-sample bound $(c+1)/(n+1)\leq\alpha$. If no threshold passes, the draw is \texttt{NON-CERTIFIED}; its calls and risk are not pooled with certified draws, so every efficiency table reports the certification rate. Structural infeasibility of the complete model pool and finite-calibration non-certification are reported separately.

\textbf{The per-model construction and diagnostics of \citet{racer2026}.} \citet{racer2026} train a base router outside the final-calibration sample; for a query \(x\) it estimates each model's correctness probability \(\widehat p_j(x)\) and scores model \(j\) either by the inverse probability \(1-\widehat p_j(x)\) (RACER-P) or by the gap between its score and that of the highest-ranked model (RACER-G). One calibrated cutoff on these per-model scores yields a query-dependent nested family whose model ranking varies with \(x\) while the cutoff is shared across queries. Substituting the per-model value \(s(x,j)=1-\widehat p_j(x)\) for one minus the cumulative-coverage residual \(r_{k-1}(x)\) in the construction of Section~\ref{sec:setup} gives \(C_\lambda(x)=\{j:1-\widehat p_j(x)\leq\lambda\}\), which is exactly the RACER-P family over the real model pool. RACER-P is thus the special case of our construction whose residual ignores the models already selected, whereas CopRoute lets \(r_{k-1}(x)\) depend on the current prefix so that a candidate's score reflects the coverage already obtained. \citet{racer2026} additionally augment the pool with a virtual null model \(m_\emptyset\), writing \(\cM'=\cM\cup\{m_\emptyset\}\) and placing \(m_\emptyset\) in the augmented ground-truth set whenever no real model is correct, so that selecting only \(m_\emptyset\) is read as abstention. Because this null action always covers the augmented target, their calibration always admits a feasible member above the finite-sample floor, and infeasibility is absorbed into abstention rather than reported. In our comparisons RACER-G and RACER-P use the same splits, the same final-calibration split, and the same standard all-miss loss as the proposed policies. We certify the stricter real-model loss with no always-safe action, so that an empty real-model set incurs loss one, and a draw whose full pool cannot meet the target returns \texttt{NON-CERTIFIED}. The augmented-null loss of \citet{racer2026} is retained only as a separately named diagnostic, and its null action is not interpreted as human review, zero cost, or guaranteed safety.

\textbf{Realized-risk classification.} A certified cell is marked above target only when its mean standard all-miss risk exceeds $\alpha$ by more than two standard errors. This classification summarizes the finite test sample and does not restate the theorem.

\textbf{Synthetic conditional-independence diagnostic.} A separate synthetic experiment generates heterogeneous query difficulty with model correctness independent conditional on that difficulty; the fully specified five-regime successor of this diagnostic is the mechanism map of Appendix~\ref{app:cost}. Each of 128 seeds draws a scalar difficulty \(x\sim\mathcal N(0,1)\) per query and eight models whose conditional correctness probabilities are \(\sigma(\beta_j-\gamma_j x+\kappa_j(x^2-1))\) clipped to \([0.03,0.97]\), with intercepts \(\beta\) from 1.55 down to 0.05, loadings \(\gamma\) between 0.64 and 1.48, and curvatures \(\kappa\) between \(-0.12\) and 0.12; correctness is drawn independently across models given \(x\). Splits of 1{,}500 design, 1{,}000 validation, 1{,}500 final-calibration, and 4{,}000 test queries are used at \(\alpha=0.10\), with a correlation-penalty grid \(\{0.25,0.5,1,2\}\) selected on validation. Pooled off-diagonal failure correlation is 0.152 with 95 percent interval $[0.151,0.153]$, while residual correlation, computed from the residuals \(Y-p(x)\), is 0.0008 with interval $[0.0001,0.0016]$. Exact factorization, pooled correlation, and residual correlation each use 2.0 calls, with risks 0.0656, 0.0657, and 0.0657. Because the three estimators use the same 2.0 calls at risks within 0.0001 of one another while pooled correlation is 0.152, pooled association alone does not indicate exploitable residual dependence.

\section{Detailed calibrated results}
\label{app:calibrated-results}

This section completes the main comparison of Table~\ref{tab:main-calls}. Table~\ref{tab:pools-certified} reports the RouterEval benchmarks at the 200-, 300-, and 500-model pools, including the CopRoute-Dep ablation. Table~\ref{tab:mathint-feasibility} documents the excluded benchmark, MATH-intermediate-algebra, which certifies for no policy at any pool at the main targets and only occasionally at looser targets, so no policy comparison is possible there. The reason is the feasibility condition of Section~\ref{sec:experiments}. Even calling every model in the pool misses 40.4, 34.6, 25.4, and 22.1 percent of the calibration queries at pools 100, 200, 300, and 500, so the condition \((c_M+1)/(n+1)\leq\alpha\) fails on every redraw at the main targets and holds on some redraws only where the pool is large enough and the target loose enough for the full-pool miss rate to fall below the target, at pool 500 for \(\alpha\geq 0.15\) and at pool 300 for \(\alpha=0.20\). At the tightest targets on the hardest pool-100 benchmarks, certification rates also fall for every policy (at $\alpha=0.02$, 3 of 20 redraws on IFEval, 1 on MATH-algebra, none on MuSR); on the redraws that do certify there, dependence-corrected stopping reduces calls on MATH-algebra from 55.1 to 39.2 and on IFEval from 78.4 to 61.4, although with one and three redraws no interval can be formed for these differences.

\begin{table}[t]
\centering
\caption{Certified mean calls on the RouterEval benchmarks at the 200-, 300-, and 500-model pools, completing Table~\ref{tab:main-calls}. A dagger marks cells whose certification rate is below 0.9; a dash means no redraw certifies.}
\label{tab:pools-certified}
\scriptsize
\setlength{\tabcolsep}{3pt}
\begin{tabular}{@{}ll rrrrrr @{\hspace{10pt}} rrrrrr@{}}
\toprule
 & & \multicolumn{6}{c}{$\alpha=0.05$} & \multicolumn{6}{c}{$\alpha=0.10$} \\
\cmidrule(lr){3-8}\cmidrule(lr){9-14}
dataset & pool & Ind & Dep & G & P & Flex. & BaRP & Ind & Dep & G & P & Flex. & BaRP \\
\midrule
BBH-boolean & 200 & \textbf{2.34} & \textbf{2.34} & 4.26 & 4.72 & 4.67 & 2.70 & 1.12 & 1.12 & 2.19 & 3.09 & \textbf{1.11} & 1.28 \\
 & 300 & 2.45 & 2.45 & 4.61 & 5.69 & \textbf{2.11} & 3.03 & 1.27 & 1.27 & 2.16 & 3.33 & \textbf{1.23} & 1.34 \\
 & 500 & \textbf{2.79} & \textbf{2.79} & 5.78 & 5.81 & 2.86 & 4.50 & \textbf{1.40} & \textbf{1.40} & 2.89 & 3.95 & \textbf{1.40} & 1.49 \\
\addlinespace[2pt]
BBH-navigate & 200 & \textbf{5.02} & \textbf{5.02} & 20.7 & 20.1 & 6.34 & 9.14 & \textbf{2.39} & \textbf{2.39} & 4.14 & 4.30 & 4.94 & 5.39 \\
 & 300 & 6.82 & 6.82 & 11.7 & 11.0 & \textbf{5.77} & 8.94 & \textbf{2.09} & \textbf{2.09} & 5.22 & 5.86 & 2.45 & 6.09 \\
 & 500 & \textbf{5.72} & \textbf{5.72} & 16.0 & 16.3 & 11.4 & 7.76 & \textbf{2.00} & \textbf{2.00} & 4.99 & 5.53 & 5.82 & 4.75 \\
\addlinespace[2pt]
GPQA & 200 & \textbf{30.2} & \textbf{30.2} & 38.4 & 38.4 & 43.7 & 41.0 & \textbf{11.1} & \textbf{11.1} & 15.7 & 15.7 & 21.3 & 20.7 \\
 & 300 & \textbf{23.8} & \textbf{23.8} & 34.0 & 34.0 & 35.0 & 33.5 & \textbf{12.8} & \textbf{12.8} & 17.2 & 17.2 & 13.7 & 16.0 \\
 & 500 & 30.6 & 30.6 & 29.4 & 29.4 & \textbf{25.7} & 32.9 & 14.7 & 14.7 & 17.5 & 17.5 & \textbf{12.6} & 15.5 \\
\addlinespace[2pt]
IFEval & 200 & \textbf{13.7} & 20.2 & 17.5 & 20.7 & 17.9 & 15.1 & \textbf{3.87} & 4.87 & 5.86 & 9.67 & 5.38 & 3.95 \\
 & 300 & \textbf{11.8} & 16.3 & 21.5 & 25.6 & 12.1 & 13.3 & \textbf{2.12} & 2.23 & 2.96 & 7.31 & 4.31 & 4.97 \\
 & 500 & 14.6 & 30.3 & 21.0 & 29.1 & \textbf{11.6} & 16.9 & \textbf{2.50} & 3.07 & 4.32 & 9.30 & 4.13 & 4.67 \\
\addlinespace[2pt]
MATH-algebra & 200 & \textbf{27.5} & 33.1 & 28.9 & 28.8 & 55.9 & 58.6 & \textbf{6.98} & 7.15 & 8.17 & 8.50 & 9.81 & 10.0 \\
 & 300 & \textbf{17.7} & 25.4 & 19.8 & 20.1 & 53.3 & 61.1 & \textbf{3.30} & \textbf{3.30} & 3.73 & 3.64 & 7.89 & 6.49 \\
 & 500 & \textbf{22.8} & 23.6 & 24.1 & 23.5 & 55.7 & 26.8 & 4.55 & \textbf{4.53} & 6.48 & 7.10 & 16.1 & 11.2 \\
\addlinespace[2pt]
MuSR & 200 & \textbf{74.7$^\dagger$} & 78.7$^\dagger$ & 124.1$^\dagger$ & 154.2$^\dagger$ & 145.4$^\dagger$ & 145.5$^\dagger$ & \textbf{13.9} & 17.9 & 57.5 & 88.2 & 59.1 & 71.6 \\
 & 300 & \textbf{88.7} & 107.0 & 166.1 & 212.4 & 165.1 & 164.9 & \textbf{7.58} & 7.59 & 60.4 & 113.5 & 60.3 & 73.6 \\
 & 500 & \textbf{30.5} & 48.7 & 190.9 & 251.5 & 132.7 & 135.1 & \textbf{5.80} & 11.4 & 73.3 & 152.7 & 30.0 & 48.4 \\
\addlinespace[2pt]
\bottomrule
\end{tabular}
\end{table}

\begin{figure}[t]
\centering
\includegraphics[width=\linewidth]{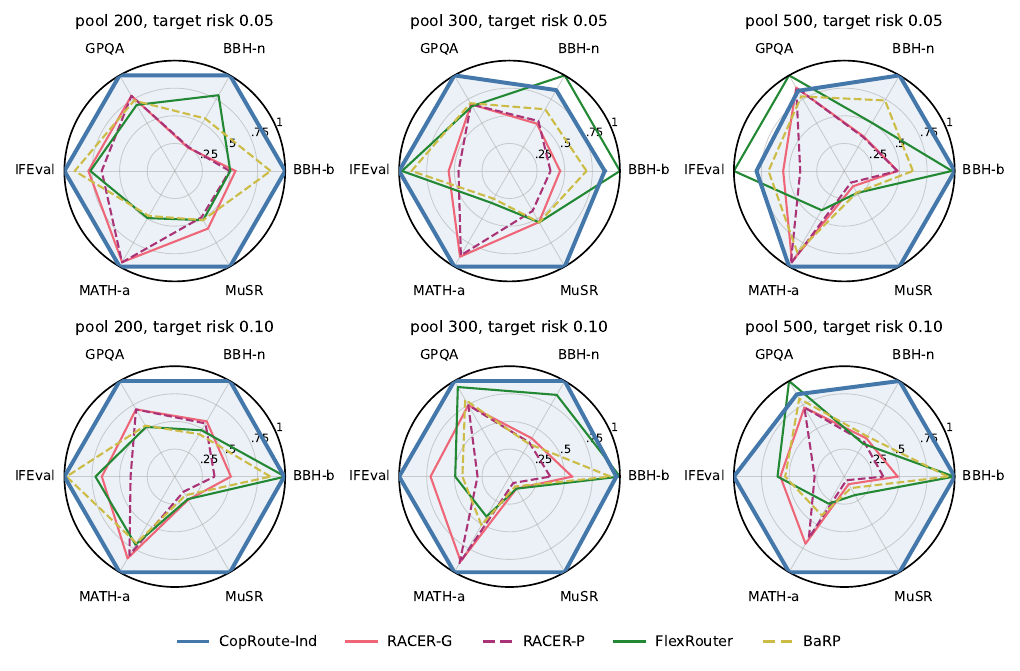}
\caption{Relative call efficiency at the 200-, 300-, and 500-model pools of the six RouterEval datasets, one panel per pool size and target risk. Each value is the best certified calls on the axis divided by the policy's certified calls, so the per-axis winner lies on the rim at 1; FlexRouter and BaRP are averaged over five training seeds. CopRoute is on the rim on 29 of the 36 axes and FlexRouter on the remaining seven. The 500-model panels are repeated in Figure~\ref{fig:radar}.}
\label{fig:radar-pools}
\end{figure}

\begin{table}[t]
\centering
\caption{MATH-intermediate-algebra certification rates for CopRoute-Ind (identical for every policy in each cell). No policy certifies at the targets of Table~\ref{tab:main-calls}, so the dataset is reported here rather than in the main comparison.}
\label{tab:mathint-feasibility}
\scriptsize
\begin{tabular}{@{}r rrrrrrr@{}}
\toprule
pool & $\alpha=.02$ & .03 & .05 & .075 & .10 & .15 & .20 \\
\midrule
100 & 0.00 & 0.00 & 0.00 & 0.00 & 0.00 & 0.00 & 0.00 \\
200 & 0.00 & 0.00 & 0.00 & 0.00 & 0.00 & 0.00 & 0.00 \\
300 & 0.00 & 0.00 & 0.00 & 0.00 & 0.00 & 0.00 & 0.15 \\
500 & 0.00 & 0.00 & 0.00 & 0.00 & 0.00 & 0.05 & 0.30 \\
\bottomrule
\end{tabular}
\end{table}

\section{Complete estimator grids for the CopRoute family}
\label{app:full-results}

Table~\ref{tab:estimator-grid} expands the main-text family comparison (Table~\ref{tab:estimator-family}) to every pool. First, no alternative order improves on the tally order in aggregate under either stopping rule. The gap to the seeded random control (about 7 calls) bounds the value of any fitted order, but it does not isolate the co-failure information, because a random order also discards marginal quality; that contribution is measured by the design-marginal contrast of Table~\ref{tab:dep-ablation}, and on GPQA and BBH-navigate the random order beats tally in some aggregates (Table~\ref{tab:estimator-by-dataset}), so the random gap is not uniformly positive. Second, the alternatives that model dependence with more structure, the Bernoulli mixture and the Student-t copula, beat tally at the 100-model pool and lose at larger pools. With 100 models the design set supports a richer estimate, whereas at 500 models the same design budget yields far fewer observations per prefix and the simple tally count is the more reliable statistic. Third, the two orders built from correlation matrices (error correlation, rank correlation) win individual cells but lose by tens of calls elsewhere, so their aggregate differences against tally are positive, and an order that wins only occasionally cannot replace the tally order by default. The same tables report the dependence-corrected stopping rule for every order.

\begin{table}[t]
\centering
\caption{Ablation of the two components on the cells of Table~\ref{tab:main-calls}. Dep $-$ Ind keeps the tally order and replaces the independence-based stopping rule by the dependence-corrected rule of CopRoute-Dep; marginal $-$ Ind keeps the stopping rule and replaces the tally order by the design-marginal order, which ranks models by design accuracy alone. Entries are paired differences in certified mean calls with bootstrap 95 percent intervals over redraws; bold intervals exclude zero, and a superscript gives the number of jointly certified redraws when it is below 20.}
\label{tab:dep-ablation}
\scriptsize
\setlength{\tabcolsep}{3pt}
\resizebox{\linewidth}{!}{%
\begin{tabular}{@{}l rr rr@{}}
\toprule
 & \multicolumn{2}{c}{$\alpha=0.05$} & \multicolumn{2}{c}{$\alpha=0.10$} \\
\cmidrule(lr){2-3}\cmidrule(lr){4-5}
dataset & Dep $-$ Ind & marginal $-$ Ind & Dep $-$ Ind & marginal $-$ Ind \\
\midrule
ARC & \phantom{+}0.00 (tie) & +0.12 [-0.06, +0.42] & \phantom{+}0.00 (tie) & \phantom{+}0.00 (tie) \\
HellaSwag & \phantom{+}0.00 (tie) & \textbf{+0.61 [+0.49, +0.73]} & \phantom{+}0.00 (tie) & +0.02 [\phantom{+}0.00, +0.04] \\
MMLU & \phantom{+}0.00 (tie) & \textbf{+1.69 [+1.41, +1.93]} & \phantom{+}0.00 (tie) & \textbf{+0.78 [+0.65, +0.89]} \\
WinoGrande & +0.01 [\phantom{+}0.00, +0.02] & \textbf{+2.57 [+2.29, +2.85]} & \phantom{+}0.00 [-0.01, +0.02] & \textbf{+0.95 [+0.70, +1.22]} \\
BBH-boolean & \phantom{+}0.00 (tie) & +0.16 [-0.30, +0.77] & \phantom{+}0.00 (tie) & +0.13 [\phantom{+}0.00, +0.28] \\
BBH-navigate & \phantom{+}0.00 (tie) & \textbf{+21.40 [+13.05, +28.19]} & -0.01 [-0.02, \phantom{+}0.00] & \textbf{+5.32 [+3.50, +7.47]} \\
GPQA & \phantom{+}0.00 (tie) & \textbf{+4.40 [+1.15, +8.05]} & \phantom{+}0.00 (tie) & \textbf{+4.20 [+2.40, +6.05]} \\
IFEval & \textbf{+5.61 [+1.76, +10.19]}$^{19}$ & \textbf{+9.38 [+3.10, +17.00]}$^{19}$ & +0.04 [\phantom{+}0.00, +0.11] & \phantom{+}0.00 [-0.49, +0.47] \\
MATH-algebra & +0.84 [\phantom{+}0.00, +2.52]$^{3}$ & -0.99 [-2.00, \phantom{+}0.00]$^{3}$ & -1.67 [-5.80, +0.69] & -0.39 [-1.09, +0.36] \\
MuSR & \phantom{+}0.00 (tie)$^{8}$ & \textbf{+19.39 [+7.46, +32.41]}$^{8}$ & +1.19 [\phantom{+}0.00, +3.58] & \textbf{+27.25 [+21.31, +33.27]} \\
\bottomrule
\end{tabular}}

\end{table}

\textbf{Dependence-corrected stopping.}
Table~\ref{tab:dep-ablation} isolates the two components of the policy on the cells of Table~\ref{tab:main-calls}. The first contrast keeps the tally order and replaces the independence-based stopping rule by the dependence-corrected rule of CopRoute-Dep. Validation selects the uncorrected rule (\(h=0\)) on most redraws, in 10 to 19 of 20 at every pool on IFEval and MuSR, the two benchmarks for which the selected values were recorded, so the two policies coincide on 12 of the 20 cells and differ by less than 0.05 calls on four more. Where they differ, the correction lowers calls on MATH-algebra at $\alpha=0.10$ by 1.7 with an interval that includes zero, and raises them on IFEval at $\alpha=0.05$ by 5.6 calls and on MuSR at $\alpha=0.10$ by 1.2. Across all 187 certified cells of the four pools, the two rules tie on 104, the correction lowers calls on 21 and raises them on 62, and the six differences whose intervals exclude zero in its favor are at most 0.03 calls or rest on a single redraw, whereas the six against it reach 17 calls (Table~\ref{tab:estimator-grid}). The dependence ratio is estimated from the design queries that miss an entire prefix, which become rare at exactly the depths where the correction would matter, and the resulting noise costs more than the correction recovers, so CopRoute-Ind is the default. The second contrast keeps the stopping rule and replaces the tally order by the design-marginal order. Replacing the tally order costs up to 21 calls on BBH-navigate and 27 on MuSR, and its interval excludes zero against the marginal order in 13 of the 20 cells and in none in its favor, so the gain of CopRoute over the baselines comes from the co-failure order rather than from the stopping correction.

\begin{table}[t]
\centering
\caption{Complete order-by-pool grid of extra certified calls relative to the tally order under the same stopping rule, weighted by certified redraws over the ten datasets and seven targets; negative favors the alternative.}
\label{tab:estimator-grid}
\scriptsize
\setlength{\tabcolsep}{2.6pt}
\begin{tabular}{@{}l rrrrrr@{\hspace{10pt}}rrrrrr@{}}
\toprule
 & \multicolumn{6}{c}{independence-based stopping} & \multicolumn{6}{c}{dependence-corrected stopping} \\
\cmidrule(lr){2-7}\cmidrule(lr){8-13}
order & all & p11 & p100 & p200 & p300 & p500 & all & p11 & p100 & p200 & p300 & p500 \\
\midrule
Bernoulli mixture & +0.42 & +0.10 & -0.65 & -0.32 & +0.67 & +2.01 & +0.36 & +0.08 & -0.39 & +0.49 & +0.47 & +0.94 \\
Student-t copula & +1.77 & +0.42 & -0.68 & -0.35 & +3.94 & +4.63 & +1.35 & +0.39 & -0.67 & -0.48 & +3.25 & +3.55 \\
Gaussian copula & +1.97 & +0.54 & -0.09 & +1.08 & +3.35 & +4.13 & +1.81 & +0.55 & -0.10 & +1.59 & +2.97 & +3.33 \\
rank correlation & +2.04 & +0.54 & +0.26 & +0.71 & +3.10 & +4.76 & +1.83 & +0.54 & +0.20 & +1.31 & +2.96 & +3.44 \\
double fault & +1.59 & +0.03 & +0.13 & +0.38 & +3.22 & +3.40 & +1.13 & +0.02 & +0.00 & +0.04 & +2.79 & +2.21 \\
low-rank factor copula & +2.44 & +0.22 & +0.17 & +0.64 & +3.46 & +6.51 & +2.26 & +0.22 & +0.17 & +1.21 & +3.61 & +5.06 \\
shrinkage to one factor & +2.94 & +0.14 & +0.39 & +1.65 & +4.54 & +6.60 & +2.70 & +0.13 & +0.52 & +1.85 & +4.24 & +5.52 \\
shrinkage to independence & +2.97 & +0.09 & +1.20 & +1.44 & +5.48 & +5.33 & +3.10 & +0.09 & +1.55 & +1.78 & +5.70 & +5.06 \\
error correlation & +6.25 & +0.38 & +0.98 & +7.60 & +5.27 & +14.32 & +5.50 & +0.40 & +1.23 & +6.91 & +4.11 & +12.53 \\
model-family lineage & +4.83 & +0.55 & +5.84 & +6.05 & +2.60 & +7.76 & +4.24 & +0.56 & +5.70 & +6.32 & +1.54 & +6.07 \\
accuracy-weighted random & +4.28 & +1.41 & +2.54 & +2.97 & +4.05 & +9.07 & +4.04 & +1.40 & +2.58 & +2.90 & +3.88 & +8.20 \\
random control & +7.78 & +1.87 & +4.19 & +7.53 & +9.08 & +13.65 & +7.24 & +1.86 & +4.00 & +7.16 & +8.33 & +12.51 \\
\bottomrule
\end{tabular}
\end{table}

Table~\ref{tab:estimator-by-dataset} gives the same comparison per dataset. The alternatives' wins concentrate on GPQA and BBH-navigate, where even the random control improves on tally, and more weakly on MATH-algebra; tally is strongest or tied everywhere else. The error-correlation order shows the heavy-tailed pattern behind its aggregate loss, gaining 20 calls on MuSR and 11 on GPQA but losing 97 on MATH-algebra.

\begin{table}[t]
\centering
\caption{Dataset-wise extra certified calls relative to the tally order under the same stopping rule, weighted by certified redraws over each dataset's pools and the seven targets; negative (bold) favors the alternative. RouterBench columns use the 11-model pool and RouterEval columns aggregate pools 100 to 500.}
\label{tab:estimator-by-dataset}
\scriptsize
\setlength{\tabcolsep}{2.6pt}
\begin{tabular}{@{}l rrrrrrrrrr@{}}
\toprule
order & ARC & HSwag & MMLU & WinoG & BBH-b & BBH-n & GPQA & IFEval & MATH-a & MuSR \\
\midrule
\multicolumn{11}{@{}l}{\emph{independence-based stopping}} \\
Bernoulli mixture & +0.16 & +0.15 & +0.09 & +0.00 & \textbf{-0.04} & \textbf{-2.51} & \textbf{-0.40} & +4.70 & \textbf{-0.16} & +1.72 \\
Student-t copula & +0.11 & +0.32 & +0.37 & +0.82 & \textbf{-0.46} & \textbf{-1.78} & \textbf{-1.26} & +4.94 & \textbf{-0.21} & +12.65 \\
Gaussian copula & +0.14 & +0.34 & +0.47 & +1.18 & \textbf{-0.03} & \textbf{-1.17} & \textbf{-0.79} & +5.26 & \textbf{-0.28} & +11.83 \\
rank correlation & +0.15 & +0.33 & +0.45 & +1.16 & \textbf{-0.03} & \textbf{-1.59} & \textbf{-1.09} & +5.52 & \textbf{-0.23} & +13.01 \\
double fault & +0.08 & +0.05 & \textbf{-0.01} & +0.01 & +1.06 & +3.67 & \textbf{-7.31} & +5.71 & +2.61 & +5.57 \\
low-rank factor copula & +0.17 & +0.21 & +0.42 & +0.12 & +0.21 & \textbf{-1.83} & +0.50 & +5.43 & \textbf{-0.27} & +14.96 \\
shrinkage to one factor & +0.16 & +0.15 & +0.11 & +0.15 & +0.21 & +0.20 & +1.52 & +4.54 & \textbf{-0.10} & +16.19 \\
shrinkage to independence & +0.08 & +0.14 & +0.10 & +0.04 & +0.18 & +0.07 & +1.64 & +5.15 & \textbf{-0.26} & +16.00 \\
error correlation & \textbf{-0.06} & +0.83 & +0.93 & +0.02 & \textbf{-1.02} & \textbf{-4.34} & \textbf{-10.89} & \textbf{-5.12} & +97.05 & \textbf{-20.40} \\
model-family lineage & \textbf{-0.06} & +0.41 & +0.79 & +1.11 & \textbf{-0.15} & +9.51 & +0.47 & \textbf{-0.25} & +2.04 & +24.01 \\
accuracy-weighted random & +0.86 & +1.74 & +1.24 & +1.76 & +0.44 & \textbf{-1.33} & \textbf{-4.84} & \textbf{-3.22} & +11.67 & +31.97 \\
random control & +1.40 & +2.19 & +2.13 & +1.85 & +0.18 & \textbf{-2.18} & \textbf{-6.72} & +0.77 & +43.48 & +27.21 \\
\midrule
\multicolumn{11}{@{}l}{\emph{dependence-corrected stopping}} \\
Bernoulli mixture & +0.10 & +0.14 & +0.09 & +0.00 & \textbf{-0.04} & \textbf{-2.51} & \textbf{-0.40} & +5.90 & +0.31 & \textbf{-0.71} \\
Student-t copula & +0.06 & +0.33 & +0.38 & +0.77 & \textbf{-0.46} & \textbf{-1.73} & \textbf{-1.26} & +4.58 & \textbf{-0.60} & +9.96 \\
Gaussian copula & +0.16 & +0.34 & +0.47 & +1.16 & \textbf{-0.02} & \textbf{-1.09} & \textbf{-0.79} & +6.55 & \textbf{-0.70} & +9.29 \\
rank correlation & +0.15 & +0.33 & +0.45 & +1.16 & \textbf{-0.04} & \textbf{-1.38} & \textbf{-1.09} & +6.88 & \textbf{-0.80} & +9.92 \\
double fault & +0.05 & +0.04 & \textbf{-0.01} & +0.01 & +1.06 & +3.46 & \textbf{-7.31} & +4.41 & +2.61 & +3.66 \\
low-rank factor copula & +0.19 & +0.20 & +0.42 & +0.12 & +0.22 & \textbf{-1.67} & +0.50 & +7.03 & \textbf{-0.79} & +11.93 \\
shrinkage to one factor & +0.14 & +0.14 & +0.11 & +0.14 & +0.20 & +0.30 & +1.52 & +4.19 & \textbf{-0.11} & +14.53 \\
shrinkage to independence & +0.09 & +0.13 & +0.10 & +0.04 & +0.18 & +0.27 & +1.64 & +7.91 & \textbf{-0.42} & +13.67 \\
error correlation & \textbf{-0.03} & +0.86 & +0.97 & +0.02 & \textbf{-1.02} & \textbf{-4.34} & \textbf{-10.89} & \textbf{-6.33} & +95.63 & \textbf{-23.72} \\
model-family lineage & \textbf{-0.08} & +0.43 & +0.80 & +1.13 & \textbf{-0.13} & +9.98 & +0.47 & \textbf{-2.74} & +1.54 & +22.03 \\
accuracy-weighted random & +0.84 & +1.75 & +1.24 & +1.74 & +0.50 & \textbf{-1.44} & \textbf{-4.84} & \textbf{-5.65} & +12.62 & +32.05 \\
random control & +1.41 & +2.17 & +2.14 & +1.81 & +0.19 & \textbf{-2.27} & \textbf{-6.72} & \textbf{-1.29} & +43.55 & +25.30 \\
\bottomrule
\end{tabular}
\end{table}

\section{Transfer across datasets for all policies}
\label{app:transfer}

Figure~\ref{fig:ood-all} repeats the main-text transfer analysis (Figure~\ref{fig:ood-premium}) for every policy on RouterEval, and Table~\ref{tab:id-ood-means} summarizes both suites. Source and target must share a model pool, so transfers run within RouterBench and within RouterEval only. Within RouterEval we exclude source-target pairs from the same benchmark family, which affects only BBH-boolean and BBH-navigate. An earlier variant that excluded only the target itself made transfer into a BBH task look successful because the source pool contained its sibling, which is near in-domain rather than out-of-domain transfer. The learned baselines used in this suite were likewise retrained with the target's whole family held out. The seventh RouterEval benchmark, MATH-intermediate-algebra, is absent because it certifies for no policy (Table~\ref{tab:mathint-feasibility}). The premiums differ by construction, since CopRoute fits the most target-specific structure and so loses the most when that structure comes from the wrong dataset, while FlexRouter and BaRP change little because their calls are set mainly by their learned routers. The levels still favor CopRoute, because averaged over every directed out-of-domain pair, CopRoute-Ind has the lowest mean calls on RouterBench at both targets and on RouterEval at $\alpha=0.05$, and sits within 1.1 calls of FlexRouter at RouterEval $\alpha=0.10$. Table~\ref{tab:transfer-summary} reports realized risk, where reusing a source-calibrated cutoff inflates it for every policy, on RouterEval by 0.12 to 0.22, so recalibration on the deployment distribution is required for every policy.

\begin{table}[t]
\centering
\caption{Mean certified calls in domain (ID) and out of domain (OOD, averaged over every directed source-target pair in the suite), recalibrated on the target.}
\label{tab:id-ood-means}
\scriptsize
\begin{tabular}{@{}l rr rr @{\hspace{12pt}} rr rr@{}}
\toprule
 & \multicolumn{4}{c}{RouterBench (pool 11)} & \multicolumn{4}{c}{RouterEval (pool 100)} \\
\cmidrule(lr){2-5}\cmidrule(lr){6-9}
 & \multicolumn{2}{c}{$\alpha=0.05$} & \multicolumn{2}{c}{$\alpha=0.10$} & \multicolumn{2}{c}{$\alpha=0.05$} & \multicolumn{2}{c}{$\alpha=0.10$} \\
policy & ID & OOD & ID & OOD & ID & OOD & ID & OOD \\
\midrule
CopRoute-Ind & 3.65 & 5.29 & 1.90 & 2.69 & 23.44 & 33.26 & 6.60 & 17.23 \\
CopRoute-Dep & 3.65 & 5.30 & 1.90 & 2.69 & 24.52 & 35.28 & 6.53 & 17.96 \\
RACER-G & 4.81 & 5.44 & 2.89 & 2.91 & 31.91 & 34.81 & 13.73 & 18.50 \\
RACER-P & 5.11 & 5.86 & 3.35 & 3.36 & 32.20 & 34.62 & 15.59 & 18.46 \\
FlexRouter & 5.05 & 5.32 & 2.45 & 2.74 & 35.70 & 37.59 & 15.03 & 16.08 \\
BaRP & 5.22 & 5.52 & 2.57 & 2.89 & 33.37 & 35.07 & 16.76 & 17.87 \\
\bottomrule
\end{tabular}
\end{table}

\begin{table}[t]
\centering
\caption{Cross-dataset transfer summary of the mean extra certified calls after out-of-domain fitting with target recalibration (fit premium), and the mean excess of realized risk over the target when a source-calibrated cutoff is reused (risk inflation).}
\label{tab:transfer-summary}
\scriptsize
\begin{tabular}{@{}l rr rr@{}}
\toprule
 & \multicolumn{2}{c}{fit call premium} & \multicolumn{2}{c}{source-calibrated risk inflation} \\
\cmidrule(lr){2-3}\cmidrule(lr){4-5}
policy & RouterBench & RouterEval & RouterBench & RouterEval \\
\midrule
CopRoute-Ind & +0.89 & +16.53 & +0.010 & +0.222 \\
CopRoute-Dep & +0.90 & +18.63 & +0.010 & +0.222 \\
RACER-G & +0.13 & +5.38 & +0.016 & +0.169 \\
RACER-P & +0.06 & -2.04 & +0.015 & +0.165 \\
FlexRouter & +0.19 & +0.09 & +0.002 & +0.140 \\
BaRP & +0.18 & +0.05 & +0.002 & +0.121 \\
\bottomrule
\end{tabular}
\end{table}

\begin{figure}[t]
\centering
\includegraphics[width=0.95\linewidth]{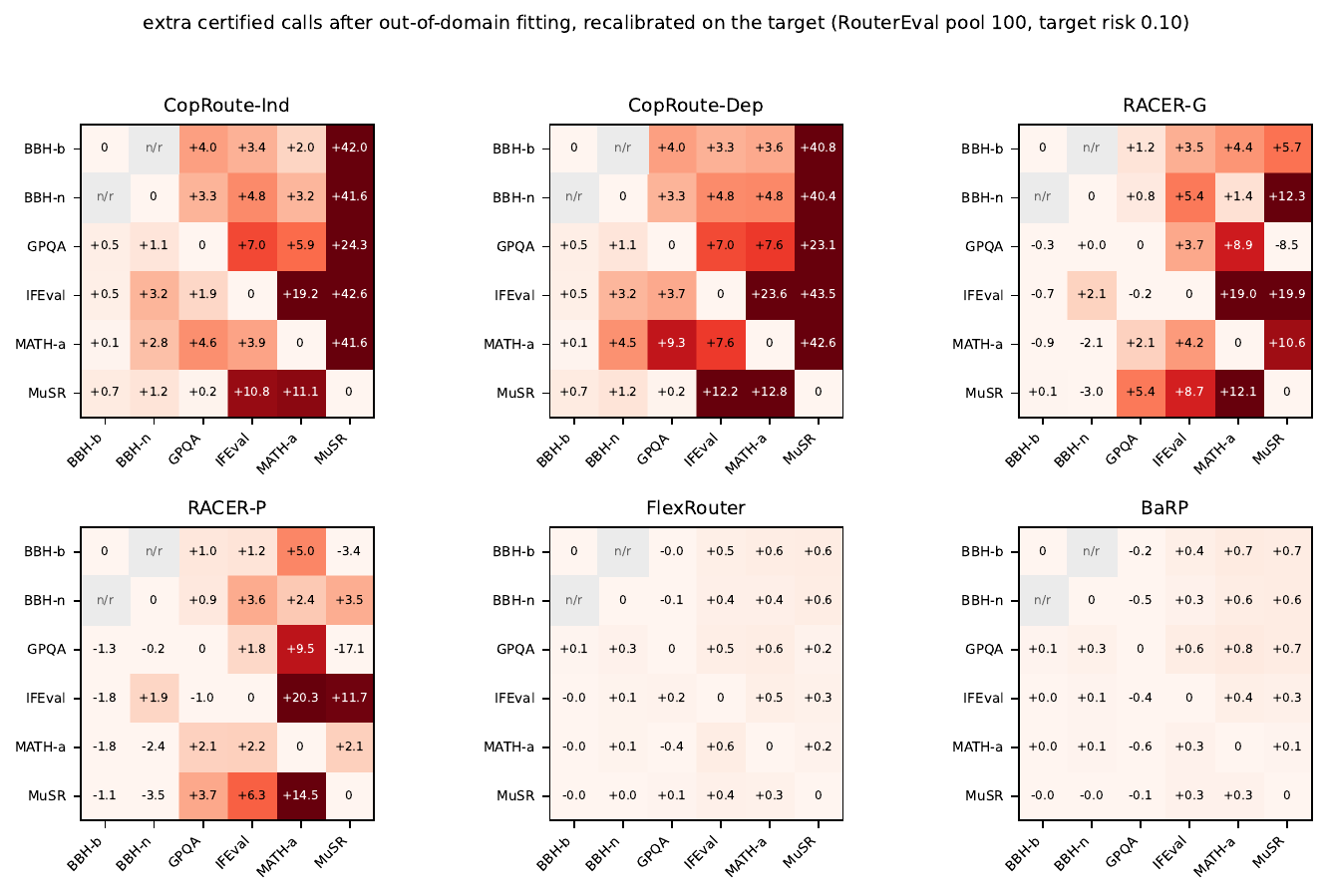}
\caption{Extra certified calls after out-of-domain fitting for every policy, recalibrated on the target, RouterEval pool 100 at target risk 0.10. Cell values are exact; the color scale is shared and capped at 12. The two pairs within the BBH family were deliberately not run (n/r), so that no transfer is near in-domain.}
\label{fig:ood-all}
\end{figure}

\section{Supplementary native learned-baseline metrics}
\label{app:learned-native}

The common comparison evaluates the learned baselines under the same calibrated standard-loss rule as every other policy, and this section reports each source method's native metrics only as supplementary characterization.

\begin{table}[t]
\centering
\caption{Native FlexRouter adaptive stopping on IFEVAL. Entries are mean calls / Success, averaged over five training seeds and 40 redraws per seed. No risk certificate is claimed for this native rule.}
\label{tab:flex-native-ifeval}
\begin{tabular}{rcccc}
\toprule
$\tau$ & pool 100 & pool 200 & pool 300 & pool 500 \\
\midrule
0.0 & 10.00 / 0.934 & 10.00 / 0.935 & 10.00 / 0.952 & 10.00 / 0.960 \\
0.01 & 10.00 / 0.934 & 10.00 / 0.935 & 10.00 / 0.952 & 10.00 / 0.960 \\
0.05 & 10.00 / 0.934 & 10.00 / 0.935 & 10.00 / 0.952 & 10.00 / 0.960 \\
0.1 & 10.00 / 0.934 & 10.00 / 0.935 & 10.00 / 0.952 & 10.00 / 0.960 \\
0.2 & 10.00 / 0.934 & 10.00 / 0.935 & 10.00 / 0.952 & 9.88 / 0.958 \\
0.3 & 9.99 / 0.934 & 9.99 / 0.935 & 9.76 / 0.951 & 8.70 / 0.947 \\
0.5 & 5.83 / 0.882 & 6.43 / 0.910 & 4.41 / 0.905 & 3.09 / 0.872 \\
\bottomrule
\end{tabular}
\end{table}
\begin{table}[t]
\centering
\caption{Native FlexRouter adaptive stopping on MUSR. Entries are mean calls / Success, averaged over five training seeds and 40 redraws per seed. No risk certificate is claimed for this native rule.}
\label{tab:flex-native-musr}
\begin{tabular}{rcccc}
\toprule
$\tau$ & pool 100 & pool 200 & pool 300 & pool 500 \\
\midrule
0.0 & 10.00 / 0.796 & 10.00 / 0.767 & 10.00 / 0.783 & 10.00 / 0.859 \\
0.01 & 10.00 / 0.796 & 10.00 / 0.767 & 10.00 / 0.783 & 10.00 / 0.859 \\
0.05 & 10.00 / 0.796 & 10.00 / 0.767 & 10.00 / 0.783 & 10.00 / 0.859 \\
0.1 & 10.00 / 0.796 & 10.00 / 0.767 & 10.00 / 0.783 & 10.00 / 0.859 \\
0.2 & 10.00 / 0.796 & 10.00 / 0.767 & 10.00 / 0.783 & 9.80 / 0.857 \\
0.3 & 9.99 / 0.796 & 10.00 / 0.767 & 9.66 / 0.782 & 8.90 / 0.851 \\
0.5 & 5.31 / 0.721 & 5.63 / 0.696 & 4.30 / 0.684 & 3.89 / 0.767 \\
\bottomrule
\end{tabular}
\end{table}

\begin{figure}[t]
\centering
\includegraphics[width=0.90\linewidth]{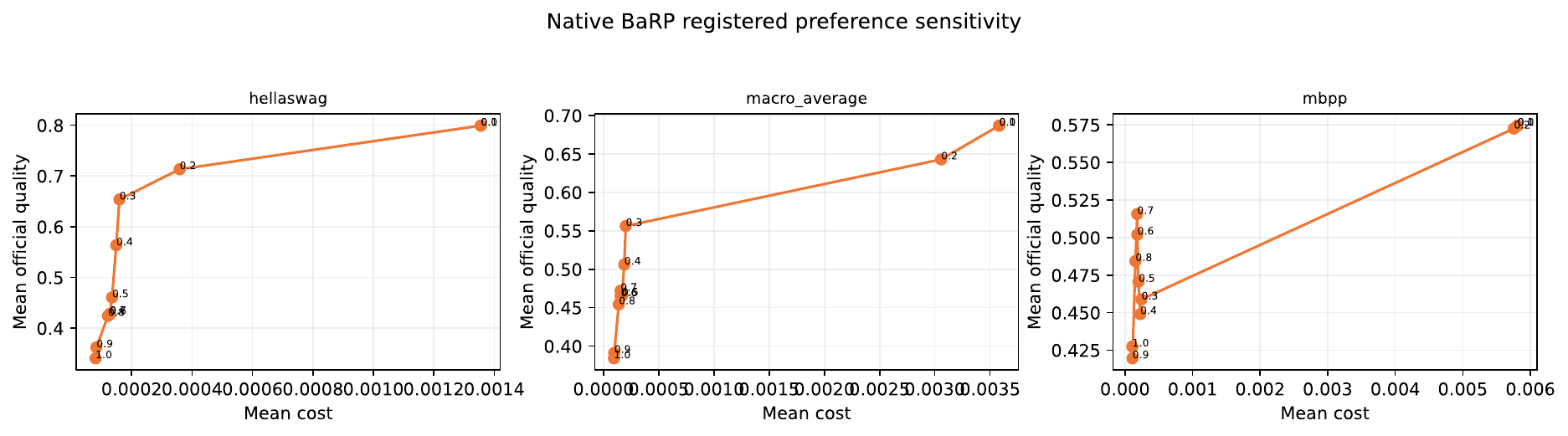}
\caption{Native BaRP quality-cost sensitivity after training on the three nondegenerate RouterBench task families. These single-model source metrics are supplementary to the common set-routing evaluation.}
\label{fig:barp-native}
\end{figure}

\begin{table}[t]
\centering
\caption{Native BaRP preference sensitivity. Entries are official quality / mean cost multiplied by $10^3$. The macro column averages HellaSwag and MBPP. Values aggregate five training seeds.}
\label{tab:barp-native}
\begin{tabular}{ccccc}
\toprule
quality wt. & cost wt. & HellaSwag & MBPP & macro \\
\midrule
1.0 & 0.0 & 0.799 / 1.356 & 0.575 / 5.810 & 0.687 / 3.583 \\
0.9 & 0.1 & 0.799 / 1.356 & 0.575 / 5.810 & 0.687 / 3.583 \\
0.8 & 0.2 & 0.713 / 0.359 & 0.573 / 5.755 & 0.643 / 3.057 \\
0.7 & 0.3 & 0.654 / 0.159 & 0.459 / 0.243 & 0.556 / 0.201 \\
0.6 & 0.4 & 0.563 / 0.149 & 0.449 / 0.227 & 0.506 / 0.188 \\
0.5 & 0.5 & 0.461 / 0.135 & 0.471 / 0.200 & 0.466 / 0.167 \\
0.4 & 0.6 & 0.428 / 0.128 & 0.502 / 0.183 & 0.465 / 0.155 \\
0.3 & 0.7 & 0.428 / 0.127 & 0.516 / 0.181 & 0.472 / 0.154 \\
0.2 & 0.8 & 0.424 / 0.122 & 0.484 / 0.155 & 0.454 / 0.138 \\
0.1 & 0.9 & 0.362 / 0.083 & 0.420 / 0.111 & 0.391 / 0.097 \\
0.0 & 1.0 & 0.341 / 0.081 & 0.427 / 0.109 & 0.384 / 0.095 \\
\bottomrule
\end{tabular}
\end{table}

We omit GSM8K from the native BaRP training tasks because BaRP conditions on prompt features, and in the RouterBench release the GSM8K prompt field contains at most one distinct prompt, which makes the prompt-conditioned formulation trivial on that task. We therefore train the native run on MMLU, ARC-Challenge, and Winogrande, substituting no other task, and keep HellaSwag and MBPP as held-out target tasks.

\section{Cost, answer selection, and further analyses}
\label{app:cost}

\textbf{Cost per query and end-to-end accuracy.}
The certified guarantee bounds the probability that every called model is wrong; a deployed system must additionally return one answer and pay a price per call. Table~\ref{tab:cost-case-study} reports both on RouterBench MMLU, with the answer taken from the called model with the highest router score and each call priced by the public per-model price sheet. Fewer certified calls lower the realized cost. At $\alpha=0.10$, CopRoute-Ind answers 80.97 percent of queries at USD 1.90 per thousand, against 80.18 percent at USD 3.55 for RACER-G, a 46.6 percent reduction at the same certified risk. Final-answer accuracy depends on the answer selector and the called set, and the answer selector adds 2.3 microseconds per query. The strongest single model is cheaper than any calibrated policy but misses 18.6 percent of queries, far above both targets, so no certificate is attainable without routing.

\begin{table}[t]
\centering
\caption{Cost and final-answer accuracy on RouterBench MMLU at matched certified risk; entries are means over 20 redraws with standard errors in brackets, and the \colorbox{green!18}{best} calibrated value per column is shaded. The last two rows are fixed references reported without calibration.}
\label{tab:cost-case-study}
\scriptsize
\setlength{\tabcolsep}{3.2pt}
\begin{tabular}{@{}l rrr @{\hspace{12pt}} rrr@{}}
\toprule
 & \multicolumn{3}{c}{$\alpha=0.05$} & \multicolumn{3}{c}{$\alpha=0.10$} \\
\cmidrule(lr){2-4}\cmidrule(lr){5-7}
policy & calls & accuracy (\%) & cost (USD/1k) & calls & accuracy (\%) & cost (USD/1k) \\
\midrule
CopRoute-Ind & \cellcolor{green!18}7.75 (0.27) & \cellcolor{green!18}80.44 (0.19) & \cellcolor{green!18}3.41 (0.10) & \cellcolor{green!18}2.99 (0.05) & \cellcolor{green!18}80.97 (0.18) & \cellcolor{green!18}1.90 (0.05) \\
CopRoute-Dep & \cellcolor{green!18}7.75 (0.27) & \cellcolor{green!18}80.44 (0.19) & \cellcolor{green!18}3.41 (0.10) & \cellcolor{green!18}2.99 (0.05) & \cellcolor{green!18}80.97 (0.18) & \cellcolor{green!18}1.90 (0.05) \\
RACER-G & 8.66 (0.10) & 80.19 (0.21) & 4.24 (0.01) & 5.64 (0.07) & 80.18 (0.21) & 3.55 (0.03) \\
RACER-P & 8.64 (0.04) & 80.19 (0.21) & 4.22 (0.01) & 6.45 (0.04) & 80.13 (0.22) & 3.66 (0.02) \\
exact independence & 9.45 (0.18) & 80.19 (0.21) & 4.28 (0.03) & 3.87 (0.07) & 80.19 (0.21) & 2.83 (0.03) \\
design-marginal order & 9.44 (0.18) & 80.20 (0.21) & 4.30 (0.03) & 3.76 (0.08) & 80.66 (0.19) & 2.86 (0.06) \\
\midrule
call every model & 11.00 (0.00) & 80.19 (0.21) & 4.47 (0.01) & 11.00 (0.00) & 80.19 (0.21) & 4.47 (0.01) \\
best single model (1 call) & 1.00 (0.00) & 81.39 (0.19) & 1.43 (0.00) & 1.00 (0.00) & 81.39 (0.19) & 1.43 (0.00) \\
\bottomrule
\end{tabular}
\end{table}

\textbf{Cost-weighted stopping.}
Model prices on RouterBench differ by two orders of magnitude, so the cheapest certified set is not always the smallest. The cost-weighted variant of Table~\ref{tab:cost-weighted} replaces the tally objective by newly covered design failures per unit of frozen mean price, so that it fits a different order (identical to the tally order only when all prices are equal), and then applies the same independence-based stopping rule and calibration; it is therefore a cost-aware member of the family whose certificate follows from Theorem~\ref{thm:frozen-feasible-crc} and whose ordering guarantee is the price-weighted analogue of Theorem~\ref{thm:mssc-tight} rather than that theorem itself. It lowers realized cost in every cell, with all bootstrap intervals excluding zero, at a price of 0.4 to 2.7 extra calls; at $\alpha=0.10$ its certified cost is 2 to 11 times lower than count-optimized CopRoute-Ind. Under heterogeneous prices the deployment recommendation is therefore to calibrate the cost-weighted objective.

\begin{table}[t]
\centering
\caption{Cost-weighted stopping on RouterBench, reporting the certified mean cost per 1{,}000 queries in USD, and paired differences against count-optimized CopRoute-Ind with bootstrap 95 percent intervals; every cost interval excludes zero.}
\label{tab:cost-weighted}
\scriptsize
\setlength{\tabcolsep}{2.8pt}
\begin{tabular}{@{}ll rrr @{\hspace{12pt}} rr@{}}
\toprule
 & & \multicolumn{3}{c}{cost, USD per 1k queries} & \multicolumn{2}{c}{cost-weighted minus CopRoute-Ind} \\
\cmidrule(lr){3-5}\cmidrule(lr){6-7}
dataset & $\alpha$ & cost-weighted & CopRoute-Ind & RACER-G & $\Delta$cost [CI95] & $\Delta$calls [CI95] \\
\midrule
ARC & 0.05 & \cellcolor{green!18}0.183 & 0.996 & 1.049 & -0.813 [-0.891, -0.756] & +2.73 [+2.53, +2.92] \\
 & 0.10 & \cellcolor{green!18}0.104 & 0.924 & 0.893 & -0.820 [-0.829, -0.812] & +1.82 [+1.62, +2.00] \\
\addlinespace[2pt]
HellaSwag & 0.05 & \cellcolor{green!18}1.089 & 2.623 & 3.684 & -1.533 [-1.663, -1.409] & +2.66 [+2.46, +2.87] \\
 & 0.10 & \cellcolor{green!18}0.431 & 2.277 & 2.176 & -1.845 [-1.861, -1.830] & +1.44 [+1.35, +1.53] \\
\addlinespace[2pt]
MMLU & 0.05 & \cellcolor{green!18}3.046 & 3.405 & 4.238 & -0.359 [-0.469, -0.247] & +0.71 [+0.45, +0.96] \\
 & 0.10 & \cellcolor{green!18}0.680 & 1.895 & 3.552 & -1.216 [-1.310, -1.136] & +1.87 [+1.79, +1.94] \\
\addlinespace[2pt]
WinoGrande & 0.05 & \cellcolor{green!18}0.062 & 0.567 & 1.317 & -0.505 [-0.508, -0.502] & +0.40 [+0.34, +0.47] \\
 & 0.10 & \cellcolor{green!18}0.050 & 0.547 & 1.004 & -0.496 [-0.498, -0.494] & +0.74 [+0.67, +0.82] \\
\addlinespace[2pt]
\bottomrule
\end{tabular}
\end{table}

\textbf{Cascade routing without response signals.}
Cascade routing decides whether to escalate after reading each model's response, using quality signals computed from that response. Our benchmarks record only per-query correctness labels; the responses, and therefore the quality signals, are unavailable, and using the correctness label in their place would give the cascade access to the ground truth it is meant to predict. We therefore evaluate the cascade with its escalation decision restricted to pre-response information, in which form it reduces to a router. Table~\ref{tab:cascade} reports the consequence on the same splits as CopRoute-Ind, where the restricted cascade exceeds the risk target in seven of eight cells, by up to 3.5 times the target on MATH-algebra at $\alpha=0.10$ (0.350 against a matched 0.110), and provides no distribution-free certificate. Cascades optimize an expected cost-quality tradeoff, while CopRoute certifies a miss rate at minimum calls, so the two frameworks answer different questions.

\begin{table}[t]
\centering
\caption{Realized all-miss risk of the pre-response cascade against matched CopRoute-Ind on identical splits; a star marks values above the target. The cascade carries no distribution-free certificate in any cell.}
\label{tab:cascade}
\scriptsize
\begin{tabular}{@{}l rr @{\hspace{12pt}} rr@{}}
\toprule
 & \multicolumn{2}{c}{$\alpha=0.05$} & \multicolumn{2}{c}{$\alpha=0.10$} \\
\cmidrule(lr){2-3}\cmidrule(lr){4-5}
dataset & cascade & CopRoute-Ind & cascade & CopRoute-Ind \\
\midrule
ARC (pool 11) & 0.093$^*$ & 0.032 & 0.099 & 0.033 \\
GPQA (pool 100) & 0.076$^*$ & 0.066$^*$ & 0.139$^*$ & 0.121$^*$ \\
IFEval (pool 100) & 0.100$^*$ & 0.049 & 0.315$^*$ & 0.095 \\
MATH-algebra (pool 100) & 0.139$^*$ & 0.065$^*$ & 0.350$^*$ & 0.110$^*$ \\
\bottomrule
\end{tabular}
\end{table}

\textbf{Distance of the tally order from the best fixed order.}
Exhaustive dynamic programming over all $2^{11}$ subsets of the 11-model pool, on all 80 RouterBench redraws, gives the exact optimum of the design objective. The calibrated difference, exact minus tally, is $-0.019$ calls with bootstrap 95 percent interval $[-0.033,-0.004]$, with 5 cell-target wins, 16 ties, and 5 losses; a beam-search relaxation ($[-0.118,+0.017]$) and a local-swap refinement ($[-0.015,+0.035]$) both include zero and select the plain tally order in the large majority of redraws. The greedy order is within 0.02 calls of the exact optimum, which is consistent with the factor-four guarantee of Theorem~\ref{thm:mssc-tight} being loose on these instances.

\textbf{Pool composition shift.}
Table~\ref{tab:pool-shift} complements the dataset transfer of Appendix~\ref{app:transfer} by shifting the model pool itself on every cell of the main comparison. Removing one model from the pool invalidates frozen certificates almost entirely, since only 3.4 percent of them (25 of 729) still certify on the reduced pool, whereas refitting the order and recalibrating the cutoff on the reduced pool certifies 63.1 percent of redraws at a mean premium of 1.07 calls. Adding one model leaves every frozen certificate valid, and refitting changes the certified calls by 0.07 on average. The operational rule matches the transfer analysis, in that the fitted structure can be refit at a mean premium of about one call whereas the cutoff must be recalibrated whenever the deployment pool or distribution changes.

\begin{table}[t]
\centering
\caption{Pool composition shift over 51 dataset, pool, and target cells of RouterBench and RouterEval (1{,}020 redraws, of which 729 certify on the unshifted pool, a rate of 71.5 percent). A frozen certificate survives when the unshifted order and cutoff still certify on the shifted pool without any refitting. The last two columns refit the order and recalibrate the cutoff on the shifted pool and report the fraction of all 1{,}020 redraws that then certify and the mean change in certified calls relative to the unshifted pool.}
\label{tab:pool-shift}
\scriptsize
\begin{tabular}{@{}ll rrr@{}}
\toprule
policy & intervention & frozen certificates surviving & certified after refit & mean call change \\
\midrule
CopRoute-Ind & remove one model & 3.4\% (25 of 729) & 63.1\% & +1.07 \\
CopRoute-Ind & add one model & 100.0\% (729 of 729) & 73.6\% & +0.07 \\
CopRoute-Dep & remove one model & 2.2\% (16 of 729) & 63.1\% & +0.98 \\
CopRoute-Dep & add one model & 100.0\% (729 of 729) & 73.6\% & -0.02 \\
\bottomrule
\end{tabular}
\end{table}

\textbf{Sensitivity to router degradation.}
The router-swap result in Section~\ref{sec:experiments} holds between two routers of comparable accuracy. Table~\ref{tab:router-degradation} degrades the router directly, by temperature scaling, by systematic over- and underconfidence, and by adding Gaussian noise with standard deviation up to 0.40 to its scores, and refits nothing else. On RouterBench MMLU at $\alpha=0.10$, CopRoute-Ind's certified calls move by at most $+0.38$ under any transformation (systematic overconfidence) and by $+0.05$ under the strongest noise, and on IFEval by at most $+0.03$, whereas under the strongest noise on IFEval exact independence moves by $+0.59$ and RACER-G by $+1.13$ with an interval of $[+0.38,+1.84]$. The realized risk stays within sampling error of the target in every cell, since calibration absorbs score quality into calls rather than risk. Together with the RouterDC swap, these transformations cover the router-sensitivity range from degraded to strong, and the co-failure ordering is the component least affected by the router.

\begin{table}[t]
\centering
\caption{Router degradation at $\alpha=0.10$, measured as the change in certified mean calls relative to the unmodified KNN router when its scores are transformed and nothing else is refitted, paired over 20 redraws with bootstrap 95 percent intervals; bold intervals exclude zero. The realized test all-miss risk stays within sampling error of the target in every cell (0.085 to 0.103 across the three policies), so degradation is paid in calls rather than in risk.}
\label{tab:router-degradation}
\scriptsize
\setlength{\tabcolsep}{4pt}
\begin{tabular}{@{}l rrr@{}}
\toprule
score transformation & CopRoute-Ind & exact independence & RACER-G \\
\midrule
\multicolumn{4}{@{}l}{\emph{RouterBench MMLU, 11 models}} \\
temperature scaling & -0.01 [-0.03, +0.01] & \textbf{+0.03 [+0.01, +0.05]} & \textbf{-0.11 [-0.14, -0.08]} \\
systematic overconfidence & \textbf{+0.38 [+0.32, +0.44]} & \textbf{+0.15 [+0.08, +0.21]} & \textbf{+0.46 [+0.41, +0.52]} \\
systematic underconfidence & \textbf{+0.05 [+0.02, +0.09]} & \textbf{+0.17 [+0.10, +0.23]} & \textbf{-0.51 [-0.57, -0.45]} \\
Gaussian noise, $\sigma=0.05$ & -0.01 [-0.02, +0.01] & -0.01 [-0.05, +0.04] & +0.01 [-0.02, +0.05] \\
Gaussian noise, $\sigma=0.10$ & \phantom{+}0.00 [-0.01, +0.02] & -0.03 [-0.08, +0.02] & -0.01 [-0.04, +0.02] \\
Gaussian noise, $\sigma=0.20$ & +0.01 [-0.01, +0.03] & -0.05 [-0.11, \phantom{+}0.00] & \textbf{-0.11 [-0.15, -0.06]} \\
Gaussian noise, $\sigma=0.40$ & \textbf{+0.04 [+0.02, +0.07]} & +0.05 [-0.02, +0.13] & \textbf{-0.23 [-0.30, -0.16]} \\
\midrule
\multicolumn{4}{@{}l}{\emph{RouterEval IFEval, 100 models}} \\
temperature scaling & -0.01 [-0.04, +0.03] & \phantom{+}0.00 [-0.01, +0.02] & \textbf{+0.23 [+0.10, +0.36]} \\
systematic overconfidence & \phantom{+}0.00 [-0.13, +0.12] & +0.01 [-0.02, +0.05] & -0.15 [-0.50, +0.11] \\
systematic underconfidence & +0.02 [-0.06, +0.12] & -0.01 [-0.08, +0.04] & \textbf{+0.17 [+0.05, +0.29]} \\
Gaussian noise, $\sigma=0.05$ & -0.05 [-0.10, \phantom{+}0.00] & -0.06 [-0.23, +0.10] & -0.07 [-0.43, +0.22] \\
Gaussian noise, $\sigma=0.10$ & -0.06 [-0.13, +0.02] & -0.10 [-0.45, +0.18] & +0.01 [-0.44, +0.40] \\
Gaussian noise, $\sigma=0.20$ & -0.04 [-0.15, +0.08] & -0.02 [-0.62, +0.48] & -0.02 [-0.75, +0.62] \\
Gaussian noise, $\sigma=0.40$ & +0.03 [-0.14, +0.24] & +0.59 [\phantom{+}0.00, +1.14] & \textbf{+1.13 [+0.38, +1.84]} \\
\bottomrule
\end{tabular}
\end{table}

\textbf{Synthetic mechanism map.}
Table~\ref{tab:mechanism-map} tests the estimators on five synthetic regimes with known ground-truth dependence. The two regimes without residual dependence are negative controls in which no dependence-aware ordering wins a single cell, including the regime where pooled query difficulty creates large raw correlation with no exploitable structure. Where residual dependence is present, the ranking follows its form, so that with redundant model families the pairwise count wins 14 of 15 cells, with complementary specialists the query-local ordering wins from the smallest budget, and in the higher-order regime, whose dependence is invisible to every pairwise statistic, only the two estimators that use the full joint pattern win any cells, the tally on 3 of 15 and the latent normal on 2 of 15, while the pairwise count and the correlation order win none. The pattern of wins therefore matches the form of dependence built into each regime.

\textbf{Generative specification.} A synthetic query is a record with a scalar difficulty and a binary correctness vector over \(m\) simulated models; no text and no language model is involved, and the routing code consumes these records exactly as it consumes benchmark correctness labels. The difficulty is standard normal in four regimes and a fair binary value in the pooled-correlation regime. Each model's conditional correctness probability is a logistic function of \((1,x,x^2-1)\) whose coefficients vary deterministically with the model index, clipped away from zero and one; the higher-order regime instead fixes every marginal at one half. Dependence enters only through how correctness is drawn given these marginals.
\begin{itemize}
\setlength{\itemsep}{1pt}
\item \emph{Conditional independence with mixed difficulty.} Independent draws given \(x\), with per-model intercepts \(1.25-j/(m-1)\), difficulty loadings \(0.65+0.55\,((j \bmod 5)/4)\), and curvature \(0.10\sin(j+0.7)\).
\item \emph{Complementary specialists.} Models form pairs with opposite-sign difficulty loadings \(\pm1.25x\), and each pair shares one uniform draw. The left member is correct when the draw falls below its marginal and the right member when it falls above one minus its marginal, so within-pair failures are as negatively dependent as the marginals permit.
\item \emph{Redundant families.} Blocks of four models share one uniform draw, so family members succeed or fail together as far as their marginals allow.
\item \emph{Higher-order and pairwise-invisible.} Models form triples in which the first two are independent fair bits and the third is their exclusive-or; every pairwise correlation is exactly zero while the triple is deterministic given any two members. Models left over from an incomplete triple are independent fair bits.
\item \emph{Pooled correlation without residual dependence.} The marginals shift by \(0.55\) between the easy and hard difficulty values and draws are independent given difficulty, so pooling over queries creates large raw correlation with no conditional dependence.
\end{itemize}
Each cell uses pool sizes \(m\in\{8,16,32\}\), design budgets \(n_d\in\{50,100,250,500,1000\}\), and twenty seeds, with independent streams of 600 validation, 1{,}000 final-calibration, and 2{,}000 test queries per seed at targets \(\alpha\in\{0.05,0.10\}\). The remaining phase constants are fixed in the released generator.

\begin{table}[t]
\centering
\caption{Synthetic mechanism map, giving the cells out of 15 (three pool sizes, five design budgets) where the ordering beats the design-marginal reference with a bootstrap interval excluding zero, per ground-truth regime.}
\label{tab:mechanism-map}
\scriptsize
\setlength{\tabcolsep}{4pt}
\begin{tabular}{@{}l ccccc@{}}
\toprule
ground-truth regime & tally & pairwise count & error corr. & latent normal & query-local \\
\midrule
conditional independence, mixed difficulty & 0/15 & 0/15 & 0/15 & 0/15 & 0/15 \\
pooled correlation, no residual dependence & 0/15 & 0/15 & 0/15 & 0/15 & 0/15 \\
redundant families (positive residual) & \textbf{10/15} & \textbf{14/15} & \textbf{12/15} & \textbf{10/15} & \textbf{13/15} \\
complementary specialists (negative residual) & 8/15 & 6/15 & 6/15 & 0/15 & \textbf{15/15} \\
higher-order, pairwise-invisible & 3/15 & 0/15 & 0/15 & 2/15 & 0/15 \\
\bottomrule
\end{tabular}
\end{table}

\section{Limitations}
\label{app:limitations}

This section expands the limitations summarized in Section~\ref{sec:conclusion}, stating for each what the present work covers and what would extend it.

\textbf{What the certificate covers.} Theorem~\ref{thm:frozen-feasible-crc} bounds the probability that a certificate is issued and the certified set misses, averaged over the deployment distribution. This is the quantity a routing layer must control, and it is the same quantity that RACER certifies. It is not a per-query guarantee, and it does not by itself certify the answer that a downstream aggregator returns from the called set; the cost case study of Appendix~\ref{app:cost} evaluates one such selector, and certificates for aggregators are a separate line of work. At the tightest targets on the hardest benchmarks, certification is infeasible for every policy alike because the full pool misses too many calibration queries (Section~\ref{sec:experiments}); a larger calibration split or a larger pool, rather than a different policy, moves that boundary.

\textbf{Dependence on the deployment pool and distribution.} The tally order is fitted per benchmark and pool, and the cutoff is calibrated per deployment. Appendix~\ref{app:transfer} shows that an order fitted on one dataset remains useful on a related one at a bounded call premium, and Appendix~\ref{app:cost} shows that adding a model to the pool leaves every certificate intact while removing one requires refitting and recalibrating, at a mean premium of about one call. The operational rule is therefore to refit the order and recalibrate the cutoff whenever the pool or the query distribution changes; both steps run on labels only and add about one call on average after a pool change.

\textbf{Dependence-corrected stopping.} The correction of CopRoute-Dep is estimated from the design queries that the entire prefix misses. These design queries become rare at the depths where a correction would matter, so the estimation noise exceeds what the correction recovers (Table~\ref{tab:dep-ablation}) and CopRoute-Ind is the default. The default reflects the present estimator of the tail, since the order already exploits the dependence information, and a structured estimator of the tail, or a design set several times larger, are the two routes we see to a stopping rule that improves on the independence product.

\textbf{Scope of the theory.} Theorem~\ref{thm:mssc-tight} is a guarantee on the design objective, the mean position of the first correct model under a fixed order, and Appendix~\ref{app:extended-theory} shows when it transfers to fresh queries. It does not bound the number of certified calls at a fixed target, which depends on a quantile of the first-success depth rather than on its mean; the number of calls that CopRoute makes at deployment is therefore an empirical finding, supported across ten datasets, four pool sizes, seven targets, and two routers, while the certificate is the guarantee that holds at test time.

\textbf{Evaluation on recorded labels.} All experiments consume per-query correctness labels released with RouterBench and RouterEval, and no language model is queried. Recorded labels permit 20 redraws across ten datasets and pools of up to 500 models, and they are the protocol under which RACER, FlexRouter, and BaRP are evaluated. It also means that latency, response-dependent cascades, and answer aggregation beyond the router-top selector of Appendix~\ref{app:cost} are outside the evidence presented here, and that the cost case study prices calls on one benchmark with a public price sheet. FlexRouter and BaRP are reconstructions from their papers' specifications, evaluated under the common set-routing protocol rather than their native ones; Appendix~\ref{app:implementation} records every reconstruction choice and Appendix~\ref{app:learned-native} reports their native metrics.

\section{Reproducibility}
\label{app:repro}

The routing evaluations are labels-only computations on public RouterBench and RouterEval releases. Learned baseline fitting uses frozen public query embeddings, while the common evaluation still uses cached correctness matrices and requires no new language-model inference. The public release contains estimator and learned-baseline implementations, split and selection logic, fixed hyperparameter grids, five-seed manifests, per-redraw certification status, standard and augmented diagnostic losses under distinct names, and permutation records. The release includes the aggregated results behind the main comparison (Tables~\ref{tab:main-calls}, \ref{tab:estimator-family}, and~\ref{tab:cost-main}, Figure~\ref{fig:sweep-calls}, and their appendix expansions) together with the scripts that regenerate those tables and figures from those results; the per-redraw artifacts behind the transfer, pool-shift, router-degradation, stopping-rule ablation, and synthetic mechanism-map tables are not included in the release because of their size, and those tables are reproduced by rerunning the corresponding scripts on the public benchmark data.

\end{document}